\newcommand{\red}[1]{\textcolor{red}{#1}}
\newcommand{\IN}{\mathbb{N}}
\newcommand{\IE}{\mathbb{E}}
\newcommand{\IR}{\mathbb{R}}
\newcommand{\X}{\mathcal{X}}
\newcommand{\F}{\mathcal{F}}
\newcommand{\eps}{\varepsilon}
\newcommand{\minitab}[2][l]{\begin{tabular}{#1}#2\end{tabular}}
\newcommand{\nef}{n_{\text{eff}}}
\newtheorem{thm}{Theorem}[section]
\newtheorem{prop}[thm]{Proposition}
\newcites{A}{Appendix References}
\newcommand\blfootnote[1]{%
  \begingroup
  \renewcommand\thefootnote{}\footnote{#1}%
  \addtocounter{footnote}{-1}%
  \endgroup
}
\renewcommand{\appendixtocname}{Appendix Contents.}
\let\oldappendix\appendices
\renewcommand{\appendices}{%
  \clearpage
  \renewcommand{\thesection}{\Roman{section}}
  \let\tf@toc\tf@app
  \addtocontents{app}{\protect\setcounter{tocdepth}{2}}
  \immediate\write\@auxout{%
    \string\let\string\tf@toc\string\tf@app^^J
  }
  \oldappendix
}%
\newcommand{\listofappendices}{%
  \begingroup
  \renewcommand{\contentsname}{\appendixtocname}
  \let\@oldstarttoc\@starttoc
  \def\@starttoc##1{\@oldstarttoc{app}}
  \tableofcontents
  \endgroup
}
\title{Pitfalls of Climate Network Construction}
\author{Moritz Haas\\
\texttt{mo.haas@uni-tuebingen.de}\\
University of Tübingen, Germany
\And
Bedartha Goswami\\
\texttt{bedartha.goswami@uni-tuebingen.de}\\
University of Tübingen, Germany
\And
Ulrike von Luxburg\\
\texttt{ulrike.luxburg@uni-tuebingen.de}\\
University of Tübingen, Germany
}
\begin{document}

\flushbottom
\thispagestyle{empty}


\maketitle

\vspace{8mm}

\begin{abstract}

Network-based analyses of dynamical systems have become increasingly popular in climate science. Here we address network construction from a statistical perspective and highlight the often ignored fact that the calculated correlation values are only empirical estimates. To measure spurious behaviour as deviation from a ground truth network, we simulate time-dependent isotropic random fields on the sphere and apply common network construction techniques. We find several ways in which the uncertainty stemming from the estimation procedure has major impact on network characteristics. When the data has locally coherent correlation structure, spurious link bundle teleconnections and spurious high-degree clusters have to be expected. Anisotropic estimation variance can also induce severe biases into empirical networks. We validate our findings with ERA5 reanalysis data. Moreover we explain why commonly applied resampling procedures are inappropriate for significance evaluation and propose a statistically more meaningful ensemble construction framework. By communicating which difficulties arise in estimation from scarce data and by presenting which design decisions increase robustness, we hope to contribute to more reliable climate network construction in the future.\blfootnote{This Work has been submitted to Journal of Climate. Copyright in this Work may be transferred without further notice.}

\end{abstract}

\tableofcontents


\addtocontents{toc}{\setcounter{tocdepth}{2}}

\section{Introduction}
\label{sec:intro}

Climate networks are constructed to find complex structure such as teleconnections \citep{Boers2019}, clusters \citep{Rheinwalt2015}, hubs, regime transitions \citep{Fan2018} or bottlenecks \citep{Donges2009} in the climatic system. Network-based approaches have shown considerable improvements in the prediction of several climate phenomena \citep{Ludescher2021} such as El Ni\~{n}o events \citep{Ludescher2014}, extreme regional precipitation patterns \citep{Boers2014} or anomalous polar vortex dynamics \citep{Kretschmer2017}. Typically, climate networks are constructed in a 3-step procedure. First, choose a data set of climatic variables, such as temperature or precipitation, measured on a fixed spatial grid. Then choose a notion of similarity between pairs of locations based on the corresponding time series in the data set. Finally, construct a network with spatial locations as nodes and with edges between those pairs of locations that have the strongest similarities. 
Since we only have access to noisy time series of finite length, the calculated similarity values between pairs of locations will be noisy themselves: they are subject to \textit{estimation variability}. As a consequence, any climate network that is constructed on a finite amount of data might contain false edges (which should not be present) and have missing edges (which should be present). This leads us to the following important questions that have not received enough attention so far: 

\begin{center}
\fbox{%
  \parbox{0.78\textwidth}{Which kinds of distortions are induced into climate networks due to the sampling variability of the underlying time series? Which features of climate networks can be attributed to underlying structure, and which are random artifacts due to finite-sample variation?}%
  }
\end{center}
 

These are the questions we discuss in this paper from a decisively statistical point of view.
%
First observe that a climate network is built on a large number of pairwise similarity estimates: if our grid consists of $10^4$ locations, a naive procedure needs to estimate $10^8$ pairwise similarities. Even extremely well-behaved estimators with a small variability will create a non-negligible number of wrong edges in the network. But not many "wrong" or "missing" edges are necessary to distort important structural network characteristics. Even a single false long-range edge can substantially distort important network measures such as shortest path lengths, small world properties, centrality and betweenness measures, or the emergence of teleconnections. And through the local correlation structure that is inherent in climate data, wrong edges can propagate, leading to many wrong edges, even inducing wrong `link bundles', i.e. distinct regions being connected by multiple edges.

To assess the severeness of this problem, we introduce a new null model for sampling time series that shares important properties with the Earth's climate system, but at the same time is simple enough so we can control it, understand it, and simulate from it. 
To achieve this, we employ a locally correlated, isotropic data generating process: Isotropic random fields on the sphere. 
The key feature of this model is that the similarity of two time series only depends on the distance of the respective locations, nothing else. Locations that are close-by tend to have more similar time series than locations that are far apart. Our model can thus capture important properties of real climate networks such as link-length distributions, but through its isotropic nature it is simple enough so that erroneous patterns in the network can be clearly identified as statistical distortions. We introduce time dependence via a vector autoregression process (VAR(1)), which allows us to adjust the autocorrelation on each node. Consequently, the temporal autocorrelation structure can depend on the location, but the spatial correlation structure, and with it the ground truth network, remains approximately isotropic.



Sampling our null model allows us to systematically investigate the connection between noise in the similarity estimates and distortions in the network. Although the simulated data is only locally correlated, we find that complex network structures arise in the estimated networks because of imperfect estimation. For example, global spatially coherent betweenness patterns emerge (Fig. \ref{fig:gauss_betweenness}),  which do not represent any ground truth structure. We also study the influence of choosing different similarity estimators, the influence of network sparsity on betweenness, distortions of other popular network measures, the emergence of spurious link bundles and high degree clusters, and the biases introduced through anisotropic estimation variability. For example, we find that inappropriate estimators can result in arbitrarily wrong network estimates (Fig. \ref{fig:truevsemp}). On the other hand, we illustrate that a conscious choice of network construction techniques may increase robustness with respect to ground truth networks and may uncover different dynamics in the system.
To filter out spurious edges, \citet{Boers2019} consider links as significant that do not appear alone but in bundled form. We show that when the data are locally highly correlated, the presence of one spurious long link makes the presence of neighboring links quite likely as well, leading to entire spurious link bundles.



In addition to our simulation results, we validate our findings with reanalysis data from the ERA5 project. We find that the tendency to form bundled connections increases with the strength of local correlations (Fig. \ref{fig:realbundles}). The betweenness structure in temperature-based networks highly depends on the network density and the used data set. This raises the question whether finding a "betweenness backbone", as in \citep{Donges2009}, is possible and meaningful. For most climatic variables, we detect severe instability for long links. The nodes of highest degree tend to have autocorrelation (cf. \citet{Palus2011}). We conjecture that some edges from these nodes are spurious and induced by the increased estimation variability on these nodes.

The wide range of potential empirical distortions makes a re-assessment of many of the previous findings in the climate network literature desirable. However, this poses a big challenge: while our simulation study is based on a model with known ground truth, such a ground truth is not available for real-world climate networks. Yet, as our simulations show, it is extremely important to assess the reliability and robustness of findings based on empirical climate networks.
Typically, researchers use approaches based on node-wise reshuffling of the time series or edge-wise reshuffling of the given network. But we demonstrate that such techniques are inadequate to capture the inherent uncertainty of the network. Instead we propose to estimate the variability in the network by computing multiple correlation estimates for each edge, while retaining the original spatial similarity structure. With this approach, we might get a statistically meaningful sense of the reliability of network patterns constructed from real, noisy time series.


To summarize our main contributions,
\begin{itemize}
    \item We introduce a VAR$(1)$-process of isotropic random fields on the sphere as a suitable null model for geophysical processes, for which deviations from the ground truth are easily detectable.
    \item We identify systematically occuring random artifacts and distortions in empirical networks, and analyse why they arise.
    \item We show which design decisions increase the robustness of constructed networks.
    \item We validate our findings with ERA5 reanalysis data.
    \item We discuss the shortcomings of common network resampling procedures for significance evaluations and propose a statistically more meaningful framework based on jointly resampling the underlying time series. 
\end{itemize}

The rest of the paper is organized as follows. In Section \ref{sec2} we describe typical network construction steps and introduce the isotropic data-generating process we employ in our simulations. We present intuitions about the ground truth networks and explain when spurious behaviour is to be expected in the empirical networks. Section \ref{sec3} demonstrates several common patterns of spurious behaviour in typically constructed networks, categorized into: a. estimator selection, b. network measures, c. link bundles and d. anisotropy. Section \ref{sec4} points out problematic practices in significance testing and potential improvements. Finally Section \ref{sec:conclusion} provides conclusions and possibilities for future work. For readers who are unfamiliar with climate network methodology we have assembled an introduction in Appendix \ref{sec:intro_cn_terminology}.

\section{Network Construction for Data from Spatio-Temporal Random Fields}
\label{sec2}


To study artifacts that are introduced by estimation procedures, we need access to a "ground truth network", which is not available for real-world climate data.
We therefore introduce a manageable stochastic process over the Earth with known ground truth structure. We then use the model to evaluate how estimation procedures introduce random artifacts into the network estimates depending on network construction steps and features of the data distribution.

\subsection{Climate Network Construction}
\label{sec:net_constr}

The generic procedure of constructing climate networks from spatio-temporal data is described in Algorithm \ref{alg:netconstr}: Most studies deal with univariate real-valued data at each point in time and space such as temperature, pressure or precipitation; so do our experiments. Given a dataset of such time series on a fixed grid, the similarity between pairs of grid points is estimated. Popular similarity measures include Pearson correlation, mutual information and event synchronisation \citep{Quian2002}. There are several ways to construct a network based on all pair-wise similarity estimates. Most often unweighted density-threshold graphs are constructed \citep{Tsonis2004,Yamasaki2008,Agarwal2019,Kittel2021}, which means that an edge of weight $1$ is formed between two grid locations $v_i$ and $v_j$ when the corresponding similarity estimate $\hat{S}_{ij}$ surpasses a certain threshold. This threshold is chosen so that a desired network density is attained. Another popular approach is edge formation based on significance tests with respect to reshuffled time series \citep{Palus2011,Boers2013,Boers2014,Deza2015}. Here the time series at both end locations of an edge are reshuffled to get a baseline distribution of how similarity estimates behave when the time series are independent. The edge is then formed if the original similarity estimate surpasses a predefined significance threshold.

\begin{minipage}{1\linewidth}
\begin{algorithm}[H]
\SetAlgoLined
\phantom{;}\textbf{Input} : \textbf{Spatio-temporal data $\{X_{i t}\}_{i\in [p], t\in [n]}, X_{i \cdot}=(X_{i1},\dots,X_{in})$} of time length $n$ measured on \textbf{$p$ fixed locations $V=\{v_i\;|\;i\in [p]\}$} in some metric space $(\X,d)$ such as the sphere; similarity measure of interest $S: \X \times \X \to [0,\infty)$ between two locations and estimator $\tilde{S}:\IR^n \times \IR^n \to [0,\infty)$ of $S$ based on the finite time series.\\
 \phantom{;}1.) \textbf{Estimate the similarity} between two points $v_i$ and $v_j$ based on the data and some estimator $\tilde{S}$ of the chosen similarity measure $S$: $\hat{S}_{ij}=\tilde{S}(X_{i \cdot},X_{j \cdot})$.\\
 \phantom{;}2.) \textbf{Construct a graph} with adjacency matrix $\hat{A}$ from the similarity estimates $\hat{S}$, parameters $\theta$ and potentially summary statistics of the data. For example, in case of the unweighted $\tau$-threshold graph, $
 \hat{A}_{ij} = \begin{cases} 1, & \hat{S}_{ij} \geq \tau,\\ 0, & \hat{S}_{ij} < \tau.  \end{cases}
 $

\caption{ \textsc{Functional Network Construction from Spatially Gridded Data} }
\label{alg:netconstr}
\end{algorithm}
\end{minipage}

\subsection{Stochastic Ground Truth Model for Spatio-Temporal Data}
\label{sec:igrf}

To quantify how the similarity estimation process influences the induced networks, we specify a ground truth model, using random fields over the sphere, approximating Earth's surface. Our goal is not to give an accurate model of Earth's climate, but to point out generic patterns of spurious behaviour in networks constructed from a limited amount of spatio-temporal data. The simpler the data-generating model remains, the more accurately we can attribute spurious behaviour to certain features of the data distribution or the employed network construction steps. We use a data generating process in which the correlation between data measured on different locations depends only on the distance between the locations. Such isotropic random fields are common in geostatistics \citep{Cressie1993,Lang2015} and they allow us to attribute anisotropies in the estimated networks as erroneous.

Here we first introduce the spatial stochastic process and, in a second step, add time-dependence. The mathematical process that we are going to use is an `Isotropic Gaussian random field'. A \textit{random field} assigns a real value to every point of the sphere, imagine a surface temperature field. Centering (and possibly detrending and normalizing) data on each point in space yields a \textit{zero-mean} random field, representing so called (detrended standardized) anomalies. When evaluating a \textit{Gaussian} random field on finitely many points, its values are jointly Gaussian distributed. For \textit{isotropic} random fields, the covariance between two points is solely determined by the distance between the points. Hence, a zero-mean isotropic Gaussian random field is fully characterized by its covariance function $k$, which determines how smoothly and to what extent the random field varies across space.

Formally, a \textbf{zero-mean isotropic Gaussian random field} $G$ on the sphere with covariance function $k:[0,\pi]\to \IR$ is defined as a collection of real-valued random variables $\{ G(v)\}_{v\in S^2}$ such that $\IE [\,G(v)\,] = 0$ for all $v \in S^2$ and, given a finite grid $\{v_i\}_{i=1,\dots,p}\subset S^2$, the random field's values on the grid points are jointly Gaussian distributed, \[
\big(G(v_1),\dots,G(v_p)\big)\sim N(0,\Sigma),
\]
with covariance $\Sigma_{ij}=k(|v_i-v_j|)$.

One popular covariance function is the Mat\'ern covariance function (Appendix \ref{sec:matern}), whose smoothness parameter $\nu$ and scale parameter $\ell$ make it flexible as well as interpretable. It allows to interpolate between the absolute exponential kernel and the Gaussian radial basis function (\citealp{Stein1999}, ch. 2.10), and monotonically decreases with distance irrespective of parameter choice. We introduce the abbreviation MIGRF($\nu$, $\ell$) for a zero-mean isotropic Gaussian random field with Mat\'ern covariance, smoothness $\nu$ and length scale $\ell$. Fig. \ref{fig:matern}d shows realizations of an MIGRF with varying parameters, when traversing the sphere from South to North pole. Low smoothness $\nu$ results in abrupt changes. In expectation, processes with smaller length scales $\ell$ contain larger fluctuations on a fixed interval. We choose $\nu\in\{0.5,1.5\}$ and $\ell\in\{0.1,0.2\}$ (in radians) to reflect realistic values for climatic time series \citep{Guinness2016} (Appendix \ref{sec:nu_ell}) as well as to point out their influences on the estimation procedure.
%

We introduce time dependence via a vector autoregression VAR(1) (Appendix \ref{sec:time_dep}) that allows us to assign any desired lag-1 autocorrelation to each node of the network. Under this basic time-dependence, we will be able to separate the effect of autocorrelation on the estimation procedure from other influences.

\begin{figure}[H]
    \noindent\includegraphics[width=39pc,angle=0]{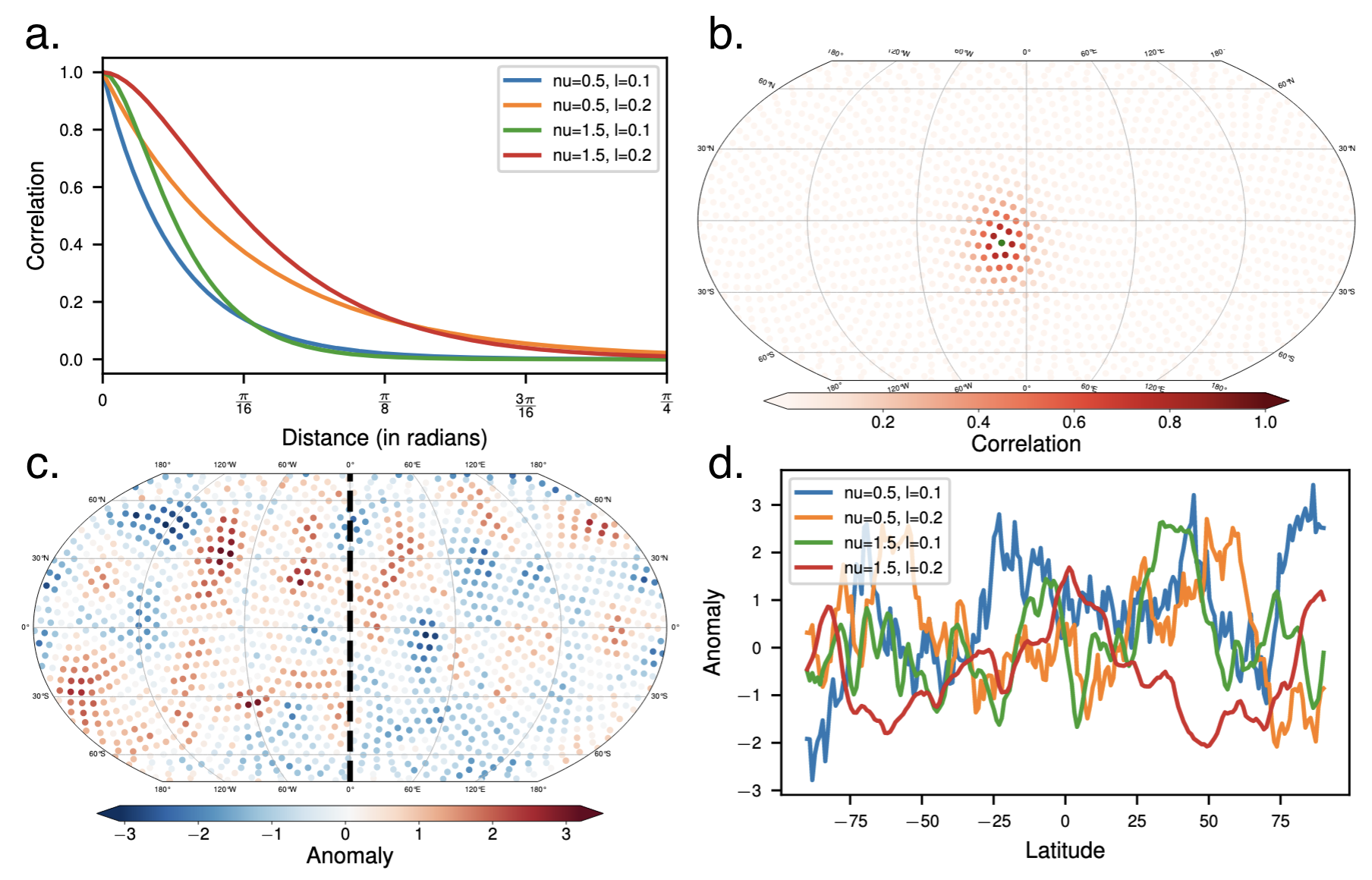}
    \caption{\textbf{Isotropic Gaussian Random Fields.} \textbf{a)} The Mat\'ern correlation function for different parameter choices. \textbf{b)} True Mat\'ern correlation with respect to the green point for $\nu =1.5$ and $\ell=0.2$. \textbf{c)} A realization of an MIGRF($\nu=1.5$, $\ell=0.2$), representing anomalies at a fixed time point. The correlation function induces smoothly varying values. The dashed black line shows the geodesic path from South to North pole used in d. \textbf{d)} Random realisations of MIGRFs with different parameters evaluated on the geodesic path shown in c.}
    \label{fig:matern}
\end{figure}

\subsection{Ground Truth Networks and Imprecise Estimates}
\label{sec:fine_grids}

\textbf{Ground truth networks.} If we fix a grid and a network construction method, a ground truth data distribution leads to a ``true network'' on this grid. Given the underlying data distribution, as in our model, we can calculate the true pairwise similarities between grid points. The network construction procedure then determines the ground truth network based on the true similarities. For example, a ground truth density-threshold graph simply consists of the edges corresponding to the largest similarity values. How much ground truth structure of a climatic process can be captured in the ground truth network depends on the choice of climatic variable, grid, similarity measure and network construction scheme. Another question is then whether this ideal network can be approximated with the available empirical data and estimators.

\textbf{Errors in the estimated networks.} Given a finite amount of data, we only have access to imperfect estimates of the true similarity values. Consequently, networks constructed from data as well as their characteristics will only be estimates of the corresponding ground truth quantities and inherit \textit{intrinsic variability}. When the chosen similarity estimator is not suitable for the estimation task, the constructed graphs can look arbitrarily wrong (Fig. \ref{fig:truevsemp}). However, by simple inspection it is not possible to judge whether a constructed climate network reflects ``true'' aspects of the physical system or whether it is dominated by random artifacts introduced through the estimation procedure. For this reason, in our simulations we mainly address the question:
\textbf{How do the estimated networks and their characteristics differ from their corresponding ground truth quantities?}
The answer depends on the properties of the random field, the employed estimator and the considered network characteristic (see Section \ref{sec3}). To get started, let us discuss how wrong individual edges occur, and then how wrong link bundles arise. 

\textbf{Errors in individual edges.} Errors in the network occur because the similarity estimates between locations vary around the ground truth similarity values. 
False positives edges are wrongly included in the empirical network but are not present in the ground truth network; false negative edges appear in the ground truth network but are missing in the empirical network. Let us understand when these two cases arise in threshold graphs. Assume that the similarity estimate $\hat{S}$ over an edge with ground truth correlation $S$ is imprecise and follows the distribution $\mathcal{N}(\IE [\, \hat{S}\,],\sigma^2)$ (we consider the normal distribution for simplicity; other distributions lead to qualitatively similar behavior). The probability that this edge is formed in the $\tau$-threshold graph is then given by $\Phi(\frac{\IE [\,\hat{S}\,]-\tau}{\sigma})$, where $\Phi$ denotes the cumulative distribution function of the standard normal distribution. A false positive can only occur when the true similarity $S$ is smaller than the threshold $\tau$. Then the error probability is not negligible when the similarity estimates are upward biased ($\IE [\,\hat{S}\,] \gg S$), or when the estimation variance $\sigma^2$ is large. Analogously, the probability of a false negative is not negligible when the estimate is downward biased or when the estimation variance $\sigma^2$ is large. As we will see in Section \ref{sec:anisotropy}, the variability in the estimates grows, or in other words their signal-to-noise ratio decreases, with data scarcity and increasing autocorrelation in the observations. A bias in the similarity estimates can be introduced by the estimator. Taken together, finding an estimator with good bias-variance trade-off for the given similarity measure can significantly reduce the number of false edges. In particular, when the desired graph density is chosen so large that many ground truth correlation values of included and excluded edges are similarly small, the likelihood of spurious behaviour increases as these edges cannot be well distinguished under the estimation variance. We see this in our experiments below when we construct dense graphs over small-scale correlation structure (Fig. \ref{fig:fdr}).

\begin{figure}[H]
    \noindent\includegraphics[width=39pc,angle=0]{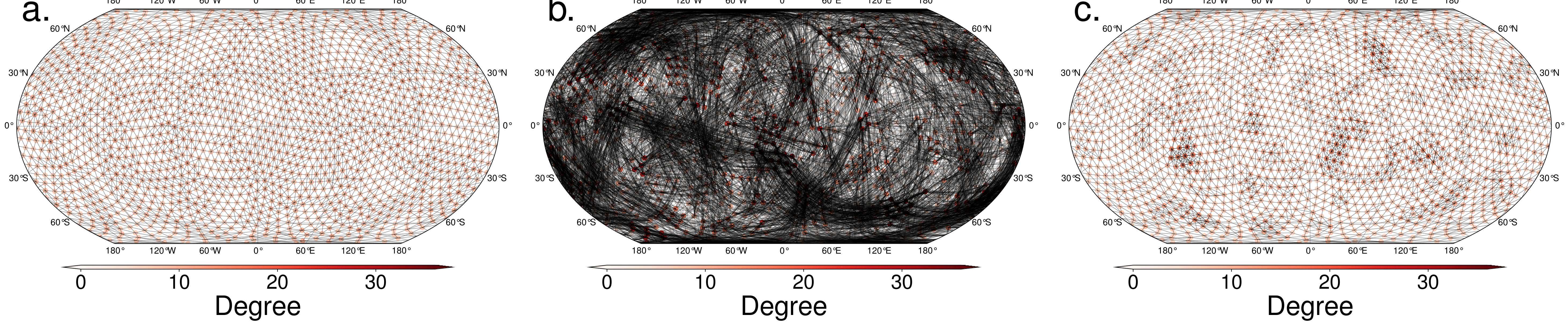}
    \caption{\textbf{a)} Ground truth network of density $0.005$ for monotonically decreasing correlation structure. The shortest links possess the largest ground truth correlation values. The graph is not perfectly isotropic because an isotropic grid does not exist. \textbf{b)} Empirical estimate of the left ground truth network with the \textit{same network density} given log-normal data based on an MIGRF$(\nu=1.5,\, \ell=0.2)$ with variance $10$ and $n=100$ using empirical Pearson correlation (see Section \ref{sec:heavytails}). Many false links arise due to high estimation variance. Many long links clutter the image. Observe (strong) spurious \textit{bundled} teleconnections.
    \textbf{c)} Empirical estimate over the same data using Spearman correlation. No long links are formed, but we can observe spuriously dense regions.}
    \label{fig:truevsemp}
\end{figure}

\textbf{How errors spread locally due to covariance.} When the data are locally highly correlated, as is typical for climatic variables, this correlation may carry over to the joint distribution of similarity estimates. As a result, an error may propagate from one edge to edges on neighboring nodes in the following way: When the similarity estimate on one false edge is spuriously large, it is likely that the correlation estimates on edges on neighboring nodes are similarly large, so that these neighboring edges are also falsely included in the empirical network, resulting in false bundles of edges. In density-threshold graphs, this makes some regions to spuriously appear denser than others. A formal argument is given in Appendix \ref{sec:false_edges}. 
Combining the thoughts above, \textbf{false bundles of edges occur with high probability when measurements from closeby points are highly correlated and the similarity estimates are imprecise}. Find related simulation results in Section \ref{sec:bundles_clusters}.


\section{Spurious Behaviour in Networks from Finite Samples}
\label{sec3}

In this section we explore the effects that imprecise estimates impose on commonly constructed climate networks. We do so by simulating the isotropic Gaussian random fields introduced above.

\textbf{Network construction.} We construct networks following Algorithm \ref{alg:netconstr}. To approximately remove the effects of anisotropic grids we generate a Fekete grid \citep{Bendito2007} after 1000 iterations with 5981 points, approximately realizing an isotropic grid of $2.5^\circ$ resolution. If not stated differently we sample an MIGRF($\nu$, $\ell$) independently in time with $n=100$. From the ERA5-reanalysis dataset between 1979 and 2019, we consider monthly temperature of air 2 meters above the surface (t2m), surface pressure (sp), total precipitation (pr) and geopotential height at $250$, $500$ or $850$ mb (z250, z500, z850) as well as daily t2m (dt2m). We linearly detrend all ERA5 variables and subtract the monthly climatology. Finally, real and simulated data sets are centered and normalized in each grid point to result in detrended anomalies. In some simulations time-dependence is introduced, amplifying our findings (see Section \ref{sec:anisotropy} and Appendix \ref{sec:effective_samples}).
Many studies construct correlation networks from sliding windows \citep{Radebach2013,Hlinka2014,Fan2017,Kittel2021}. Typically, these windows cover at most a year of daily observations. While more measurements in time increase the accuracy of estimated networks, our findings also hold for larger $n$ (Appendix \ref{sec:large_n}).

\textbf{Visualisations.} For network visualisations we use a Fekete grid with 1483 points, approximately realizing a $5^\circ$ resolution. In our figures, dashed lines always denote ground truth values. Uncertainty bands cover the range between the empirical $0.025$ and $0.975$ quantile from 30 independent repetitions. `x' and circle denote $95\%$- and $99\%$-quantiles of a distribution respectively, and triangles minimal and maximal values of a distribution.

\subsection{Estimation}
\label{sec:estimation}

\subsubsection{Unsuitable Estimators Can Induce Many Wrong Edges}
\label{sec:heavytails}

\textbf{Problem.} When the marginal distribution on the nodes is heavy-tailed, as in the case for precipitation data, commonly applied estimators become inadequate if they are sensitive to outliers. For instance, the naive correlation estimator has unusably large variance under heavy-tailed distributions; yet it has been applied to precipitation data in several studies \citep{Scarsoglio2013,Ekhtiari2019,Ekhtiari2021}.

\textbf{Simulation results.} We simulate heavy-tailed data by exponentiating the MIGRF data. Let $G$ be a centered MIGRF with correlation function $k(\cdot)$ and variance $\sigma^2$. By setting $H(x) = \exp(G(x))$, $H$ defines a log-normal isotropic random field. For each point $x$ on the sphere, we get \[ Cor(H(x),H(y)) = \frac{e^{\sigma^2 k(|x-y|)}-1}{e^{\sigma^2}-1}.\]

Choosing $\sigma^2$ allows to continuously adjust the heaviness of the tails: While small values of $\sigma^2$ approximately recover the original correlation function $k$, increasing $\sigma$ exponentially enhances the tail strength. For large $\sigma^2$ the correlation between grid points quickly drops to 0 with distance. We choose $\sigma^2=10$, which is the correct order of magnitude to fit precipitation tails on global mean \citep{Papalexiou2018}. Note that the precipitation distribution on Earth crucially depends on the location. Here, we solely aim to illustrate the intricacy of handling heavy-tailed data through isotropic simulations.
Figure \ref{fig:truevsemp} demonstrates that the empirical correlation fails as a correlation estimator of data sampled from $H$. Because the empirical covariance is an average of log-normal random variables, it will be a large variance estimator of the population covariance. The \textbf{large estimation variability induces many (possibly bundled) false and missing links}. For short time series, you can imagine single events dominating on each node. When these events occur at the same time for a pair of nodes, the nodes will show high empirical correlation although the true correlation may be zero.

\textbf{Consequences.} Removing outliers or finding a suitable data transformation reduces this problem. By design $\log(\cdot)$ would transform the random field back to a Gaussian random field. Alternatively we can employ an estimator that is robust to heavy-tailed distributions \citep{Minsker2017}. Since Spearman correlation is invariant under monotonous transformations, it produces exactly the same results for the normal and log-normal data. An alternative to Spearman correlation with faster convergence rates is Kendall's Tau \citep{Gilpin1993}. \citet{Barber2019} consider several of the above ideas to estimate correlation in the context of hydrologic data.

\subsubsection{Comparing Similarity Measures as well as Estimators}
\label{sec:estimators}

\textbf{Problem.} 
While the empirical Pearson correlation estimator has often been equated with the corresponding similarity measure, we can strictly reduce the estimation variance in Pearson correlation networks by considering a different estimator --- even for Gaussian data. \citet{Radebach2013} have shown that many characteristic network patterns are already visible in Pearson correlation networks, and historically Pearson correlation has been the most popular similarity measure (e.g. \citet{Tsonis2004,Tsonis2008,Yamasaki2008,Palus2011,Fan2022}). As estimators of mutual information need to be able to capture arbitrarily complex dependence structures, they tend to require even larger sample size to achieve reliable accuracy than correlation estimators, resulting in more spurious behaviour given the same sample size.

\textbf{Simulation results.} We consider the following similarity measures and their estimators. For mutual information (MI) we use a simple binning estimator as applied in the complex network python package \texttt{pyunicorn} \citep{pyunicorn}, where we use $\lfloor \frac{n}{5}\rfloor$ bins as suggested in \citet{Cellucci2005} (a more conservative criterion than Cochran's, which was applied in \citet{Donges2009b}). To evaluate the importance of the estimator, we also employ a bias-corrected version of the popular KSG mutual information estimator \citep{gao2016demystifying,Kraskov2003} with $k=5$. As an alternative to mutual information we explore an estimator of the Hilbert Schmidt Independence Criterion (HSIC) for random processes \citep{Chwialkowski2014}. For correlation, we also employ the linear Ledoit-Wolf estimator \citep{Ledoit2004}, which counteracts the distortion of high-dimensional empirical correlation matrices by shrinking their eigenvalues.

Figure \ref{fig:fdr} shows the false discovery rate (FDR), which measures the fraction of false links, as a function of network density. Although less true links are available for small densities, sparse graphs are more accurate in terms of FDR, because the correlation values of ground truth links can be empirically distinguished with high certainty from most false links under estimation variability. For random fields with long length scale, this empirical separability remains intact for longer edges. Therefore the FDR remains low up to larger network densities (see also Fig. \ref{fig:mad_fdr}).
Given the same amount of data, more complex similarity measures perform worse. For sparse graphs, the Hilbert-Schmidt independence criterion shows promising performance compared to the mutual information estimators. The BI-KSG estimator is computationally expensive with fluctuating performance, and the binned MI estimator is strictly worse than HSIC. Unweighted empirical Pearson and Ledoit-Wolf density-threshold networks coincide because they produce the same ranking of edge weights.
Figure \ref{fig:fdr}c shows the error of the estimated correlation matrix compared to the ground truth in Frobenius norm under various hyperparameter settings. The ground truth correlations grow monotonously from left to right. Note that the empirical correlation matrix makes large estimation errors irrespective of the parameters of the random field. The linear Ledoit-Wolf estimator improves the estimation in all cases. Consequently, fixed-threshold networks as well as weighted networks are better approximated by the Ledoit-Wolf estimator. The less correlated the grid points are, the lower the error of the Ledoit-Wolf estimator as it shrinks the correlation estimates towards an identity matrix.

\begin{figure}[H]
    \noindent\includegraphics[width=39pc,angle=0]{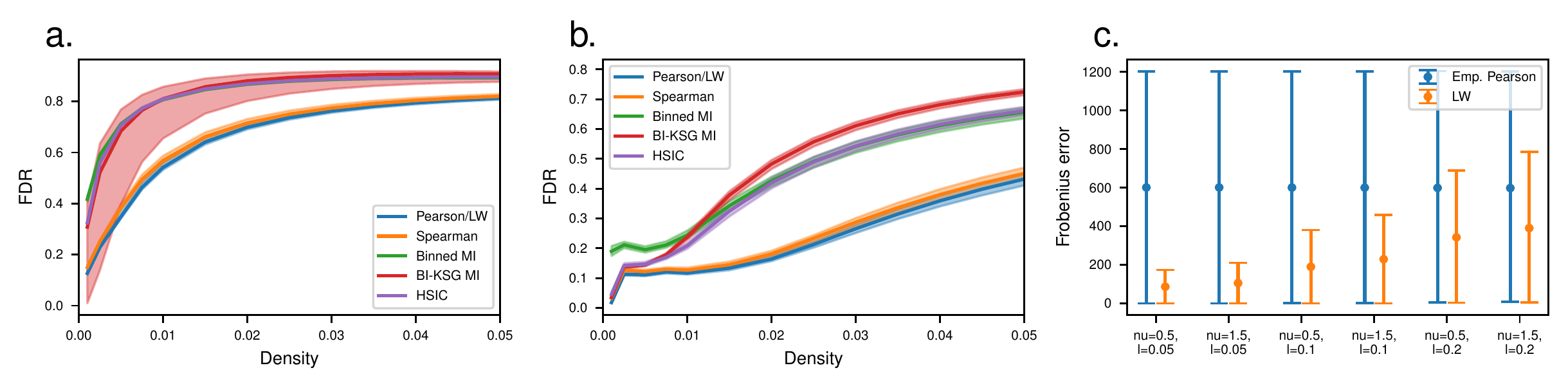}
    \caption{\textbf{Errors in empirical networks.} \textbf{a)} False discovery rate for various similarity measures and different estimators of the same similarity measures for a non-smooth MIGRF with short length scale $\nu = 0.5$ and $\ell = 0.1$. Sparse networks have a better false discovery rate. Empirical Pearson correlation and the Ledoit-Wolf estimator coincide in unweighted density-threshold networks (see main text). \textbf{b)} Same as a) for a smooth MIGRF with long length scale $\nu=1.5$ and $\ell=0.2$. Estimation performance reamains reasonable good up to larger network density. \textbf{c)} Error of estimated correlation matrix from ground truth in Frobenius norm, which is proportional to the root mean squared error per edge weight estimate. The number of errors in empirical networks is alarming for all hyperparameter settings. Suitable estimators, such as the Ledoit-Wolf estimator, drastically reduce the error in the edge weight estimates compared to empirical Pearson correlation.}
    \label{fig:fdr}
\end{figure}

For real data, we cannot calculate the FDR as we do not know which links are false. Instead, we generate bootstrap samples of all time points and create perturbed data sets by including the measurements on the entire grid at these time points. We then construct several networks with the same density from these perturbed data sets and finally compute the fraction of differing links between pairs of sampled Pearson correlation networks (Fig. \ref{fig:densdiff}a). With this procedure, we approximate the network distribution induced by the data set (see Section \ref{sec:subsamp_ensembles}). High autocorrelation causes the need for blockwise bootstrapping to receive consistent estimates, as the network variability is increasingly underestimated with increasing autocorrelation. The results should be seen as a conservative preliminary insight into the intrinsic network variability and the amount of unstable edges.
A robust network construction procedure should yield a low fraction of fluctuating links across bootstrap draws. Narrow uncertainty bands indicate that varying weighting of climatic regimes among the bootstrap samples has limited influence on the networks. Observe that in t2m- and pr-networks an alarming fraction of links fluctuate (Fig. \ref{fig:densdiff}a), while networks from smooth variables with long length scale, such as sp and z850, fluctuate less (consistent with Fig. \ref{fig:mad_fdr}).
In contrast to the synthetic data, the curves do not grow monotonically in the sparse regime. Therefore resampled networks may be helpful to choose a maximally robust density, minimizing the fraction of varying edges in the empirical networks. Density up to 0.01 seems to be an appropriate choice for t2m, larger densities dramatically decrease the network robustness. The differing links do not contain short edges (Fig. \ref{fig:densdiff}b), as the correlation values on short edges are consistently large. Longer links heavily depend on the sampled time points and are sensitive to slight perturbations of the correlation estimates. Hence the decision which long links to include should not be based on a single correlation estimate.
Geopotential heights behave differently and become more stable at larger densities as they have a correlation structure with extremely large length scale.

\begin{figure}[H]
    \noindent\includegraphics[width=39pc,angle=0]{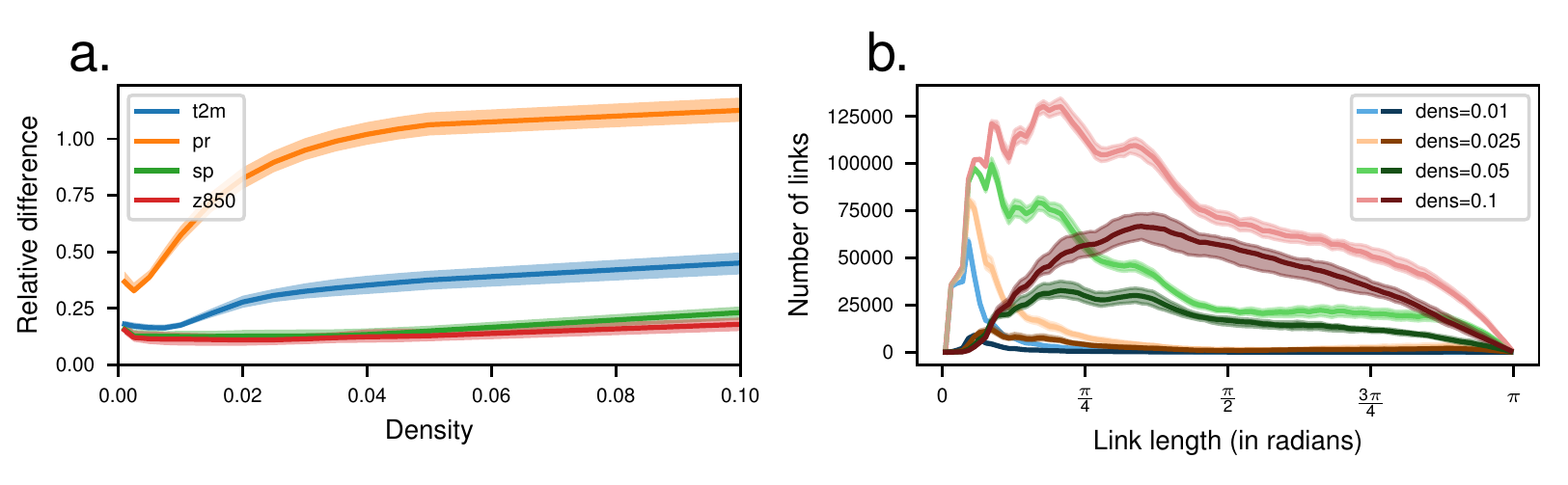}
    \caption{\textbf{Fluctuating edges in real networks.} \textbf{a)} Fraction of differing edges between pairs of empirical Pearson correlation networks from various climatic variables obtained by bootstrapping in time. As we divide by the number of links in one of both compared networks, the fraction of differing edges varies between $0$, when no link differs, and $2$, when all links differ. \textbf{b)} Total link length distribution (light) versus link length distribution of differing edges (dark) between bootstrap networks of t2m. The link length distribution is similar to the one from our MIGRF (cf. Fig. \ref{fig:graphmeasures}c). The short edges do not differ among bootstrap samples, long edges fluctuate heavily.}
    \label{fig:densdiff}
\end{figure}

\textbf{Consequences.} The selection of appropriate similarity measures depends on how much data is available. Mutual information estimators require much more data to yield reliable results than correlation estimators. HSIC shows promising performance in our experiments. It may be worth exploring other alternatives to MI, such as \citet{Romano2018}, in the future. 
Although not particularly well-suited for random field data, the Ledoit-Wolf estimator uniformly improves over naive empirical cross-correlation when estimating weighted Pearson correlation networks. Future work should put more focus on which estimators perform best on meteorological data.

For most climatic observables, we detect severe network variability for long links. In order to quantify structural and link robustness of constructed networks, we need resampling procedures that adequately capture the intrinsic network variability (Section \ref{sec:subsamp_ensembles}). Small network densities yield more robust networks in terms of differing/fluctuating links in resampled networks.

\subsection{Network Measures}
\subsubsection{Extreme Betweenness Values are Unreliable in Sparse Networks}
\label{sec:betw}

\textbf{Problem.} The betweenness centrality of a node $v_k$ is given by the expression $\sum_{i,j\neq k}\frac{\sigma_{i,j}(k)}{\sigma_{i,j}}$, where $\sigma_{i,j}$ is the total number of shortest paths from node $v_i$ to node $v_j$ and $\sigma_{i,j}(k)$ is the number of those paths that pass through $v_k$. In climate networks, high betweenness indicates that a location connects different regions. In temperature-based networks, such locations have been interpreted as key pathways of energy flow \citep{Donges2009}.
When interested in nodes of highest betweenness, it is tempting to construct sparse networks, because the most important points stand out more distinctly. However we find that variability in betweenness also increases drastically when sparsifying the network. \citet{Donges2009} operates exactly in this unreliable regime. Let us explain the influence of sparsity on betweenness in climate networks.

\textbf{Simulation results.} Figure \ref{fig:gauss_betweenness} shows betweenness maps of networks, constructed as in \citet{Donges2009}, from independent draws of our locally correlated, isotropic model. Because of the standard Gaussian grid, randomly fluctuating betweenness `backbones' emerge that form global pathways but do not represent ground truth structure. The density is chosen such that there exist few pathways between the well intraconnected poles. Which exact nodes lie on the few shortest paths between the node-dense polar regions depends on which false links are formed in the region between the poles. Phrased differently, {\em the supposedly important nodes with high betweenness are precisely the ones with false edges and alter between different realizations of the process.} The difference map (Fig.2 of \citealp{Donges2009}) between networks from different data sets shows strikingly similar North-South pathways.
The lattice-like structure makes the sparse ground truth network highly susceptible in terms of betweenness. Since the data generating process is isotropic and the Gaussian grid is symmetric with respect to longitudinal rotations, nodes on the same latitude have equal betweenness values in the ground truth network. The empirical networks consistently show systematically different betweenness distributions. While the maximal betweenness value in the ground truth network is $1.68\cdot 10^{-3}$, the empirical networks have much more pronounced extreme betweenness values of $2.36 \cdot 10^{-2}\pm 8.23\cdot 10^{-4}$. A visualisation of the heavy-tailed betweenness distribution as well as an analysis of Forman curvature can be found in Appendix \ref{sec:betw_forman}. 
Even sparse ground truth networks are highly sensitive to slight perturbations of the grid and network density, as there exist few important pathways connecting different regions in a sparse, locally connected network.

\begin{figure}[H]
    \noindent\includegraphics[width=39pc,angle=0]{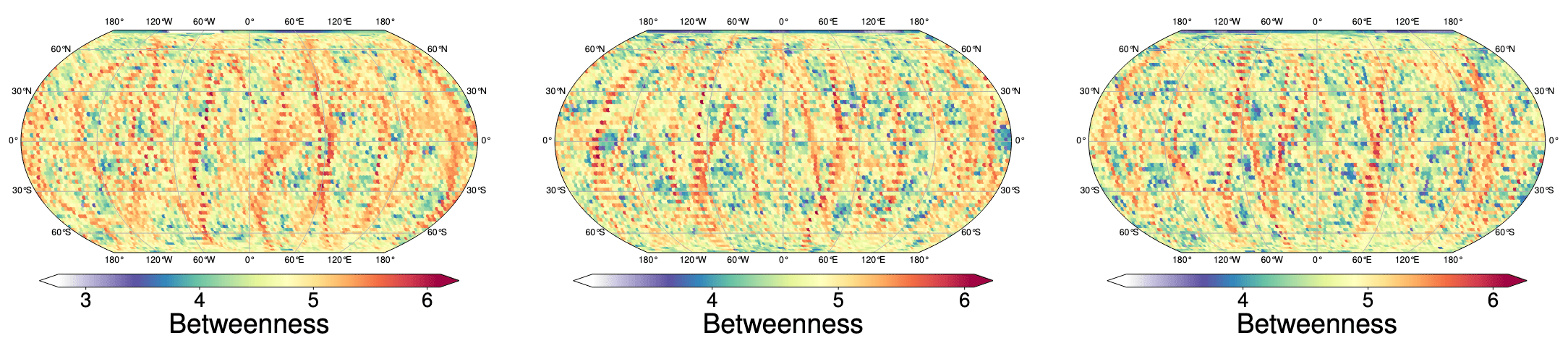}
    \caption{\textbf{Betweenness maps of simulated networks.} Transformed betweenness maps ($\log_{10}(BC+1)$) of empirical networks with density $0.005$ on a Gaussian grid. The maps depict independent realizations of the same data generating process MIRGF$(\nu = 1.5, \ell = 0.2)$. Because of the anisotropic grid, the poles are highly intraconnected. The density is chosen such that there exist few random shortest paths connecting the poles, resulting in pronounced spurious global betweenness pathways, which alter in location and extent among independent realizations of the data and do not represent ground truth structure.}
    \label{fig:gauss_betweenness}
\end{figure}

\begin{figure}[H]
    \noindent\includegraphics[width=39pc,angle=0]{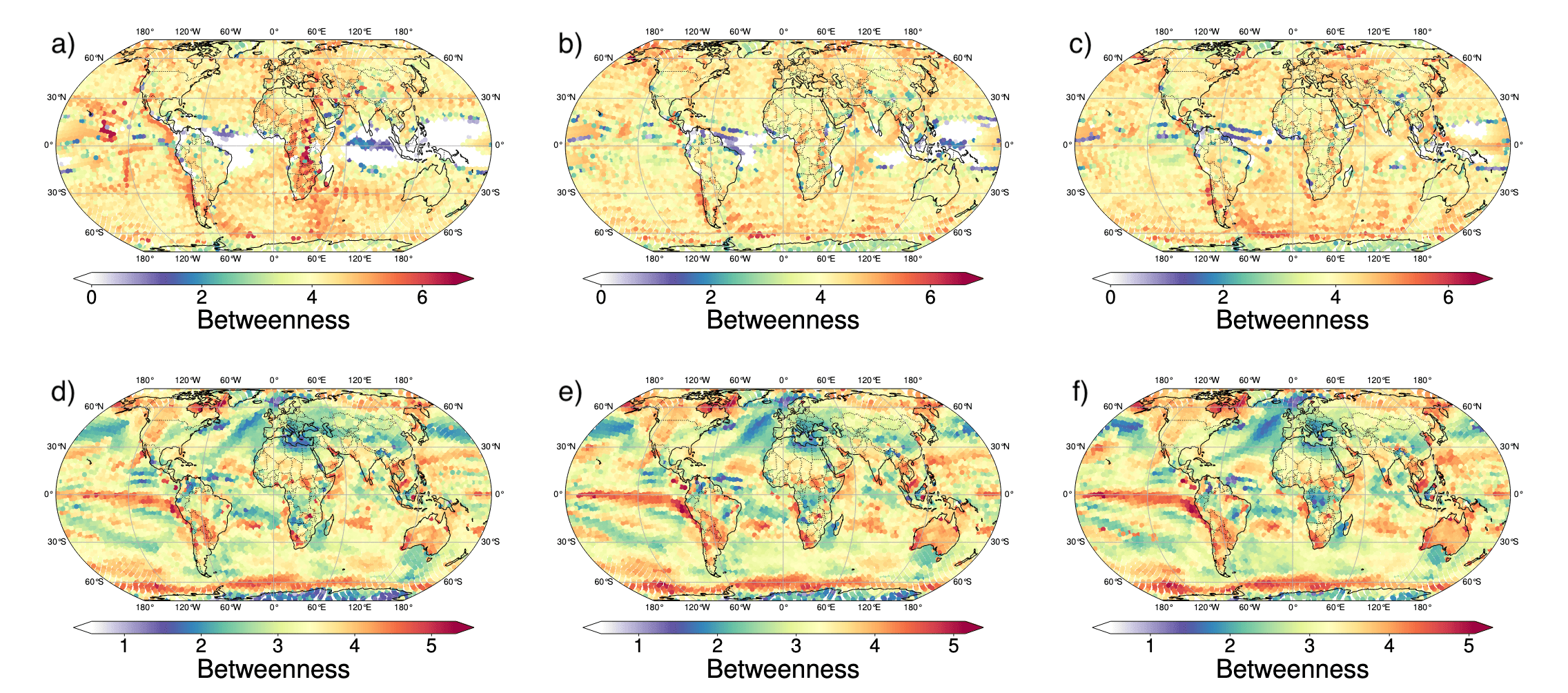}
    \caption{\textbf{Betweenness maps of temperature networks.} Transformed betweenness ($\log_{10}(BC+1)$) for daily t2m in a mutual information network (using the binning estimator) with various densities, as shown in \citep{Donges2009}, but with (asymptotically isotropic) Fekete grid and ERA5 data between $1979$ and $2019$. \textbf{a),b),c)} Network density from left to right: $0.004$, $0.005$, $0.006$. \textbf{d),e),f)} Network density from left to right: $0.08$, $0.1$, $0.12$. Sparse networks fluctuate much more and have few pronounced extreme points. Sparse and dense networks look very different. The results in \citep{Donges2009} also look very different to ours.}
    \label{fig:realbetw}
\end{figure}

Figure \ref{fig:realbetw} shows betweenness maps of networks, constructed from daily t2m as in \citet{Donges2009}, but on the approximately isotropic Fekete grid. The betweenness `backbone' fluctuates more in the sparse than in the dense networks. The maps in Fig. \ref{fig:realbetw} and the ones presented in the original study all look different because betweenness is unstable with respect to grid choice, data set, and network density. 
This raises the question which, if any, map shows a true betweenness `backbone'. 
The dense networks (d)-(f) are more stable than the sparse ones (a)-(c) with respect to network density perturbation. But only after validating that the finite-sample network variability is also low (see Section \ref{sec4}\ref{sec:subsamp_ensembles}) should patterns uncovered by network methods be interpreted with domain knowledge to generate novel insights. Here, a domain expert might point out stable ENSO-like patterns in the Eastern Pacific in the stable dense networks (d)-(f) and get the inspiration to more closely investigate surprising patterns revealed by the betweenness map.

\textbf{Consequences.} Although forming sparse networks yields a better false discovery rate, some network characteristics become extremely sensitive to small perturbations of the networks and their explanatory power diminishes, even in the ground truth networks. For each network measure of interest, the choice of network density constitutes a trade-off between false discovery rate and robustness of the measure. Here too, independent ensemble members can help to identify stable patterns (see Section \ref{sec:subsamp_ensembles}). When a network measure fluctuates too much, as betweenness does in sparse networks, results should not be over-interpreted.

\subsubsection{Empirical Distributions of Network Characteristics are Distorted}
\label{sec:extremevals}

\textbf{Problem.} Here we present further perspectives on systematic empirical distortions of network measures.
%
Most studies focus on the extremal nodes for any network measure, interpreting these as particularly important. Our simulation results show that, under data scarcity, \textbf{random nodes appear spuriously important in the empirical networks, not representing important nodes in the ground truth network}. Several studies have constructed Pearson correlation networks from sliding windows with $2.5^\circ$ and finer resolution \citep{Radebach2013,Hlinka2014,Fan2017,Fan2018,Fan2022}. Our simulation results suggest that the naive correlation estimator and the short time scale are risk factors for false edges and distortions in global measures and extreme values of the networks.

\begin{figure}[H]
    \noindent\includegraphics[width=39pc,angle=0]{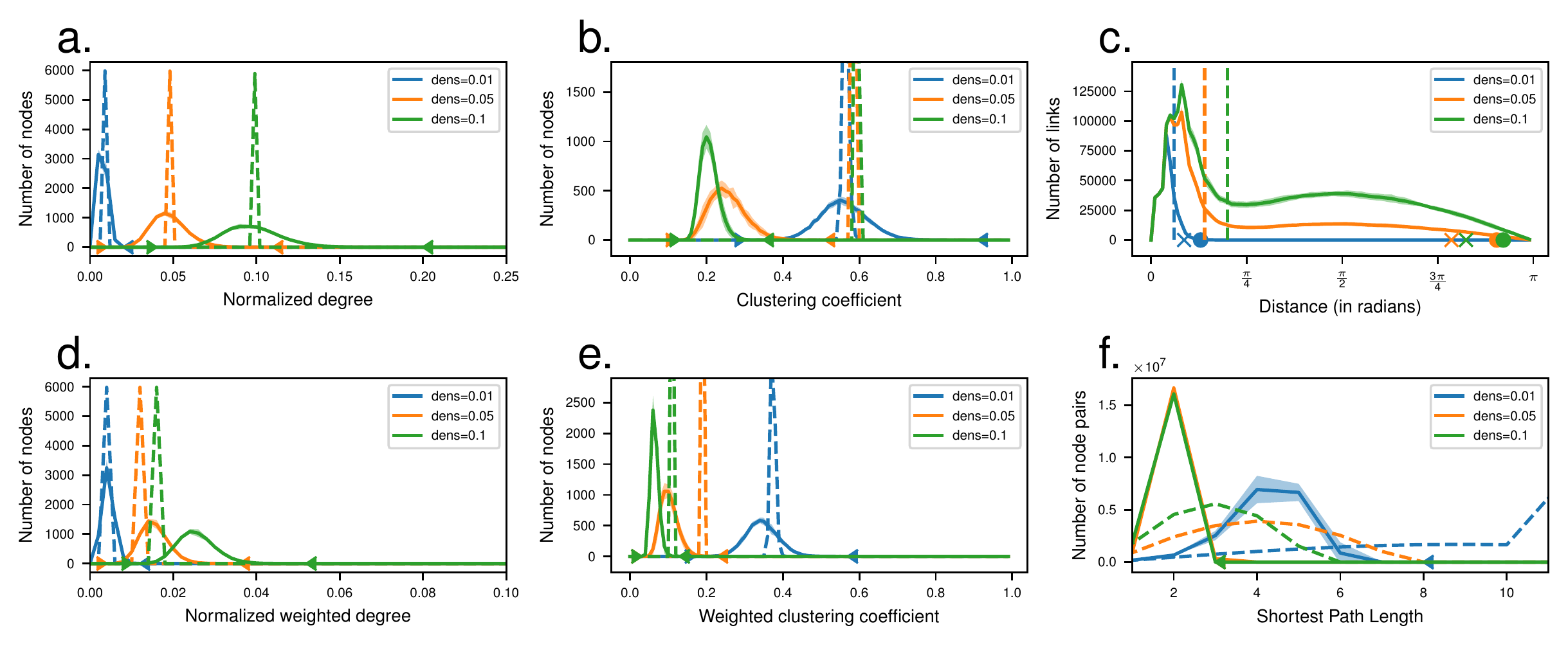}
    \caption{\textbf{Empirical vs ground truth distributions of node/edge measures.} Empirical distributions of node-wise graph characteristics using Pearson correlation for an MIGRF($\nu=0.5$, $\ell=0.2$). From top-left to bottom-right, \textbf{a)} normalized unweighted degree, \textbf{b)} unweighted clustering coefficient, \textbf{c)} unweighted link-length distribution, \textbf{d)} weighted normalized degree, \textbf{e)} weighted clustering coefficient \citep{Onnela2005} and \textbf{f)} unweighted shortest path length. Dashed lines denote the ground truth, solid lines denote the respective distribution in the empirical networks. Triangles denote the empirical extreme values, `x' and circle denote the $95\%$- and $99\%$-quantiles of a distribution averaged between independent realizations. The vertical lines in \textbf{c)} denote the maximal link-length in the ground truth graph; all longer links are false by design. Empirical distributions are more spread out; some measures such as the clustering coefficient or the shortest path length are heavily biased.}
    \label{fig:graphmeasures}
\end{figure}

\textbf{Simulation results.} 
Although all nodes have roughly the same degree/clustering coefficient in the ground truth graph, the observed degree/clustering coefficient distribution is more spread out in the empirical networks (Fig. \ref{fig:graphmeasures}). The random distortions in the empirical networks are similar in type and extent for different independent realizations. Spuriously extreme nodes in the empirical networks vary between independent realizations and do not reflect important or clustered nodes in the ground truth graphs.
While the average unweighted degree is consistent by construction, the weighted degree is systematically upward biased, as more links of low ground truth correlation are available that can be overestimated than links of large ground truth correlation that can be underestimated. The empirical (weighted) clustering coefficient is strongly downward biased as spurious links connect otherwise disconnected regions and bundled connections are not formed between entire neighborhoods. Spurious teleconnections serve as shortcuts in the networks and lead to systematically smaller shortest path lengths. Another network measure related to the clustering coefficient and shortest path lengths is small-worldness. Since both network measures are extremely distorted in the empirical networks, a reliable conclusion about ground-truth small-worldness cannot be drawn from the empirical networks in our setting. A more detailed treatment of small-worldness in spatially extended systems can be found in \citep{Bialonski2010,Hlinka2017}. The conclusions of both studies resemble ours.
The length distribution of the spurious links (longer than the dashed lines in Fig. \ref{fig:graphmeasures}c) behaves as the number of available links at each distance: sinusoidal. This occurs when the corresponding true correlation values are empirically indistinguishable and has also been found in climate networks based on event synchronisation \citep{Boers2019}.
Under large length scales, empirical networks contain less erroneous edges and a more accurate link length distribution up to higher network densities. On the other hand, empirical networks show larger spreads of degree, clustering coefficient and shortest path length distributions as false links occur in bundles (see Section \ref{sec:bundles_clusters}). Under small length scales, bundling behaviour is less pronounced, so that the amount of spurious links averages out, resulting in a more concentrated degree distribution, although more false links occur.

\textbf{Consequences.} When empirical networks are constructed with scarce data, they possess systematically different characteristics compared to the ground truth structure. In our setting, \textbf{distributions of popular node measures are more spread out as well as systematically biased}. These distortions do not become apparent by considering empirical summary statistics based on time series resampling because the empirical behaviour remains consistent between independent repetitions. However, given multiple sufficiently independent network estimates, spurious and ground truth extreme nodes can be distinguished, depending on how systematically they reappear in several networks.
Consequently, both the distribution as well as the extreme values of network measures in empirical networks can primarily be the result of estimation errors and should not be over-interpreted. In particular, whenever the number of formed links scales with the number of available links, large estimation variability can be the cause, so that researchers should take additional efforts to justify the correctness of their network when this link length distribution arises, as in \citet{Boers2019}.

\subsection{Local Correlations Give Rise to Spurious Link Bundles and High-degree Clusters}
\label{sec:bundles_clusters}

\textbf{Problem.} As single false links occur with high probability in estimated networks, \citet{Boers2019} considered teleconnections in a climate network as significant only when a bundle of edges from one region to another is formed. As discussed in Section \ref{sec:fine_grids}, this approach is unreliable when the underlying data is locally correlated, because edges tend to be formed in bundles. Spuriously dense regions in density-threshold graphs are another possible repercussion. Here we provide empirical evidence of spurious bundling behaviour.

\textbf{Simulation results.} Let us first define a link bundle between two locations. Intuitively, we demand that a sufficient portion of edge weight is formed between neighborhoods of both locations. Formally let $B_\varepsilon(v):= \{ u\in V | d(u,v)\le \varepsilon \}$ be the $\varepsilon$-ball around location $v$ and let $A$ be the graph adjacency matrix. We denote the cumulative weights between neighborhoods of $v_i$ and $v_j$ by
\[
W_\eps(v_i,v_j) := \sum_{k,l:\: v_k\in B_\varepsilon(v_i), v_l\in B_\varepsilon(v_j)} |A_{kl}|.
\]
The number of edges between the regions $B_\varepsilon(v_i)$ and $B_\varepsilon(v_j)$ in the complete graph is denoted by $\rho_\varepsilon(v_i,v_j)$. We say that there is a \textbf{$(\varepsilon,c)$-many-to-many link bundle} between $v_i$ and $v_j$ in $(V,E,A)$, if
\[
\frac{W_\eps(v_i,v_j)}{\rho_\varepsilon(v_i,v_j)} \geq c,
\]
for some minimal connectivity $c>0$. Of interest might also be a one-to-many and a locally weighted version of this notion (defined in Appendix \ref{sec:linkbundles}).).

Figure \ref{fig:tele} shows the maximal distance of occuring link bundles (a,c) and the fraction of false links that belong to some bundle (b,d) for various notions of link bundles and for unweighted (a,b) and weighted (c,d) networks. The hyperparameters $\nu=0.5$ and $\ell=0.1$ amount to weak local correlation structure and hence constitute an adversarial choice for bundling behaviour.
There is no unique definition in climate science literature of when an edge constitutes a teleconnection. \citet{Boers2019} calls an edge a teleconnection, when it is longer than $2500$km or $0.12\pi$ radians, \citet{Kittel2021} sets the threshold to $5000$km or $0.25\pi$ radians. Irrespective of the exact distance, long 1-to-many link bundles already arise in unweighted networks with low density, while many-to-many link bundles consistently arise for intermediate network densities. Utilizing edge weights and tuning the minimal connectivity parameter $c$ can reduce the number of spurious long range link bundles by several orders of magnitude, because spuriously included links tend to lie marginally above the threshold and can therefore be distinguished from strong links. However there is no hope to remove all spurious link bundles without removing true bundles as well. Since both true and false links naturally occur in bundles when the data is locally correlated, bundling properties cannot answer questions of significance. Results for other hyperparameters and mutual information are provided in Appendix \ref{sec:more_plots}. 

\begin{figure}[H]
    \noindent\includegraphics[width=39pc,angle=0]{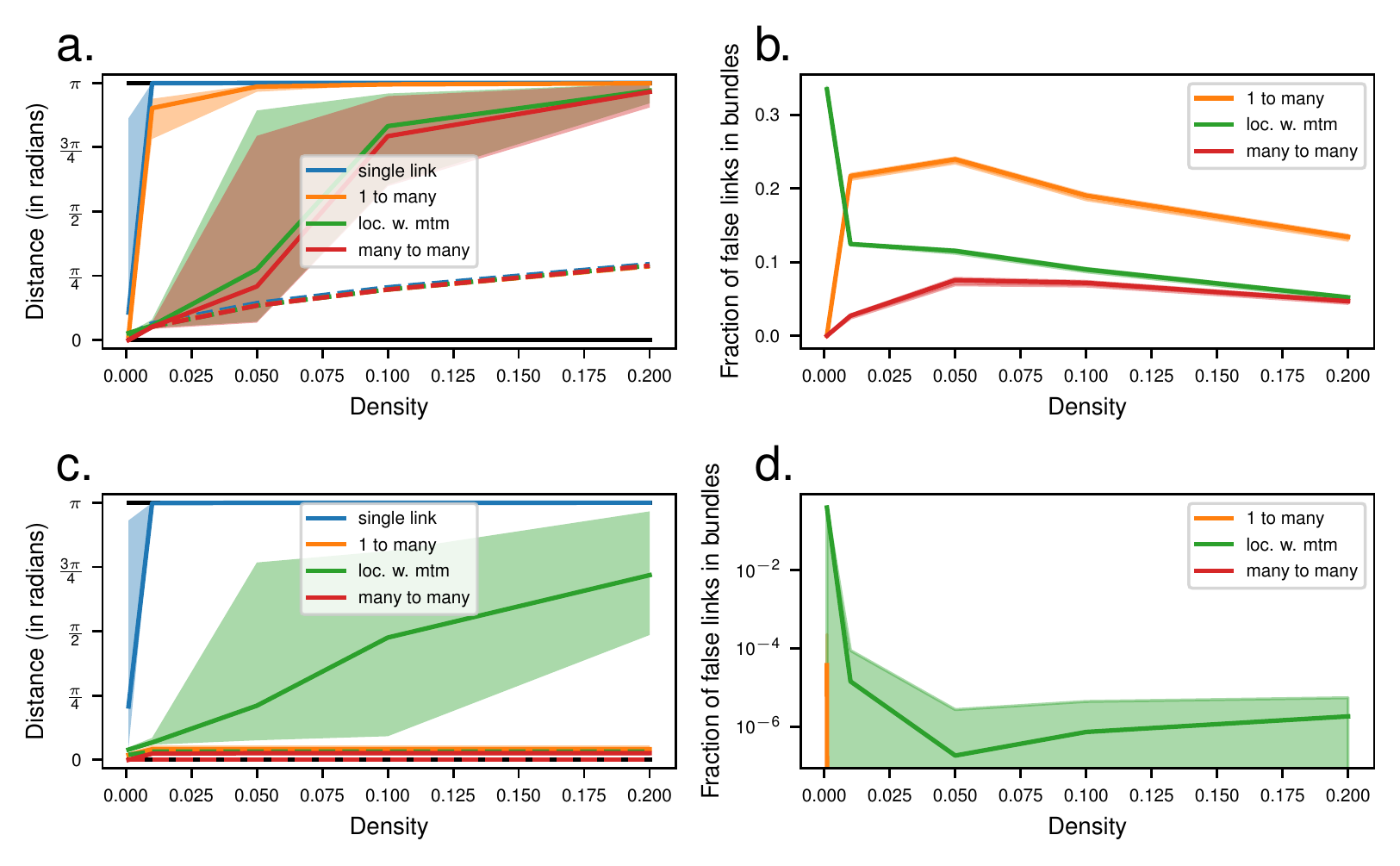}
    \caption{\textbf{Spurious link bundles frequently occur.} \textbf{a), c)} Maximal length for which there exists at least one link bundle in the empirical Pearson network from an MIGRF($\nu=0.5$, $\ell=0.1$) for \textbf{a)} unweighted and \textbf{c)} weighted notions of link bundles. The dashed line denotes the maximal length in the ground truth network. The definition of 1-to-many and locally weighted many-to-many (loc. w. mtm) link bundles can be found in Appendix \ref{sec:linkbundles}. 
    For weighted bundles we choose $c=0.5$, for unweighted 1-to-many bundles $c=0.9$ and for unweighted many-to-many bundles $c=0.8$. The radius of neighborhoods is chosen as $\eps=5^\circ$, which corresponds to roughly $556$km. In our $2.5^\circ$-grid, the $\eps$-balls contain $11.4\pm 1.1$ nodes.\\
    \textbf{b), d)} The fraction of false links that belong to some \textbf{b)} unweighted and \textbf{d)} weighted link bundle among all false links. Same setting as a), c). Many spurious long-range link bundles occur with high probability when the data is locally correlated, which is typically the case for spatio-temporal data. Strong teleconnections appear less frequently, so using edge weights can be helpful. 
    }
    \label{fig:tele}
\end{figure}

While for the FDR (Fig. \ref{fig:mad_fdr}a) large smoothness and length scale have a positive impact (because the random field varies less in total), the links that are being formed tend to occur in bundles. The essential distributional parameter for sparse networks is the smoothness of the random field, as mostly short links are formed. Varying the length scale has larger impact for denser graphs as it determines the radius of spurious link bundles and the distance/network density at which ground truth correlations become empirically indistinguishable from $0$.
In order to measure whether there are regions of spuriously high degree due to the dependence between nodes, we find the $\eps$-ball of maximal average degree (MAD) among all $\eps$-balls $ B_\eps(v_i)$ (Fig. \ref{fig:mad_fdr}b). Then we compute the same quantity for randomly permuted degree values so that nodes with spuriously high degree are not spatially clustered anymore. The MAD-values of the empirical networks are consistently larger than the MAD-values of shuffled nodes, so a clustering of high-degree nodes occurs irrespective of the hyperparameters of the random field. The pronounced bundling behaviour for larger length scales is reflected in larger MAD-values.

For real data, observe a strikingly monotonous relationship between the average local correlation and the fraction of long links (longer than $5000$km or $0.25\pi$ radians) that belong to some link bundle (Fig. \ref{fig:realbundles}). Most differing links do not belong to a bundle, but under large local correlations the number of fluctuating long links in bundles can become non-negligible. Also observe that our simulated data, at a given local correlation level, shows a tendency to underestimate the fraction of links in bundles, indicating the existence of true bundled teleconnections in climatic variables.

\textbf{Consequences.} We have seen that bundled connections do not necessarily represent ground truth structure but can occur spuriously when the similarity estimates are locally correlated. Even without teleconnections, random regions can appear spuriously dense. These experiments also explain the distortion of the degree distribution in Fig. \ref{fig:graphmeasures}. Using the edge weights can be helpful to distinguish strong from weak connections, as spurious connections tend to lie marginally above the threshold.
We conclude that  when the data is locally correlated, questions of edge significance can not be easily addressed by considering bundling behaviour. Only bundling behaviour that exceeds the effects of localized correlation structure can be considered significant. Given multiple sufficiently independent empirical networks, only ground truth connections would reappear in many networks with high probability.

 \begin{figure}[H]
    \noindent\includegraphics[width=39pc,angle=0]{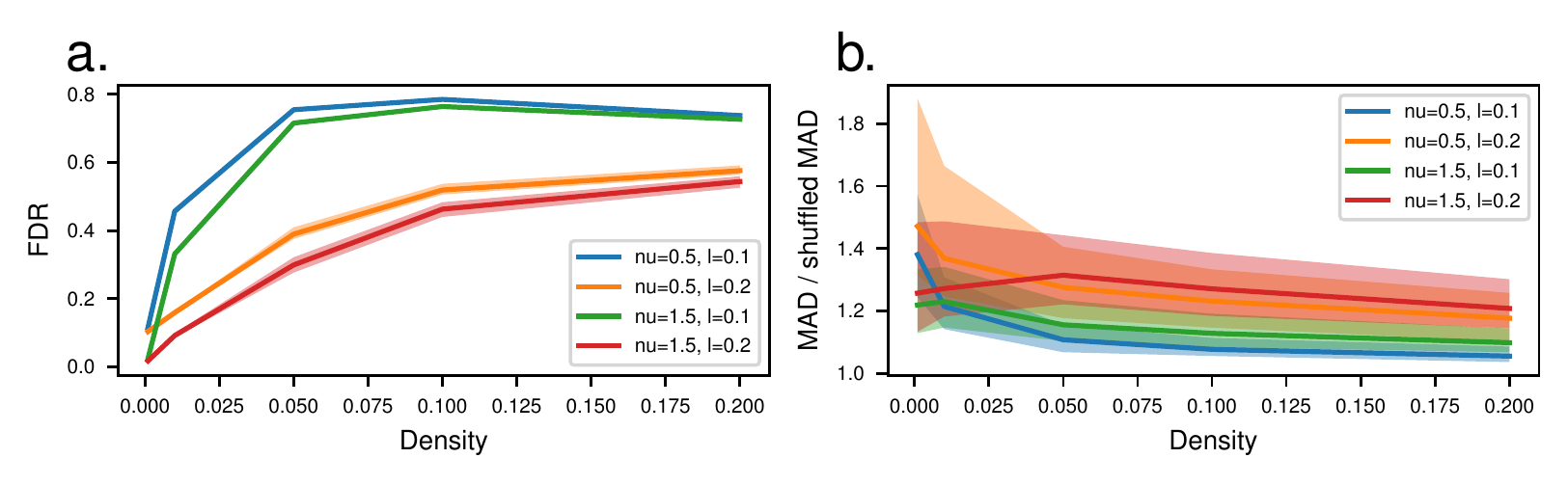}
    \caption{\textbf{Influence of random field parameters.} \textbf{a)} The fraction of false links (FDR) in empirical Pearson correlation networks for various hyperparameter choices of the random field. Sparse networks are more accurate in this sense (low FDR). For sparse networks the smoothness is the essential parameter, while for denser networks only the length scale matters. \textbf{b)} Maximal average degree (MAD) in $\eps$-balls divided by the same quantity under shuffled nodes in unweighted networks for various hyperparameter choices of the random field. Values above 1 indicate that high degree nodes tend to be clustered. The weighted equivalent looks very similar.}
    \label{fig:mad_fdr}
\end{figure}

\begin{figure}[H]
    \noindent\includegraphics[width=39pc,angle=0]{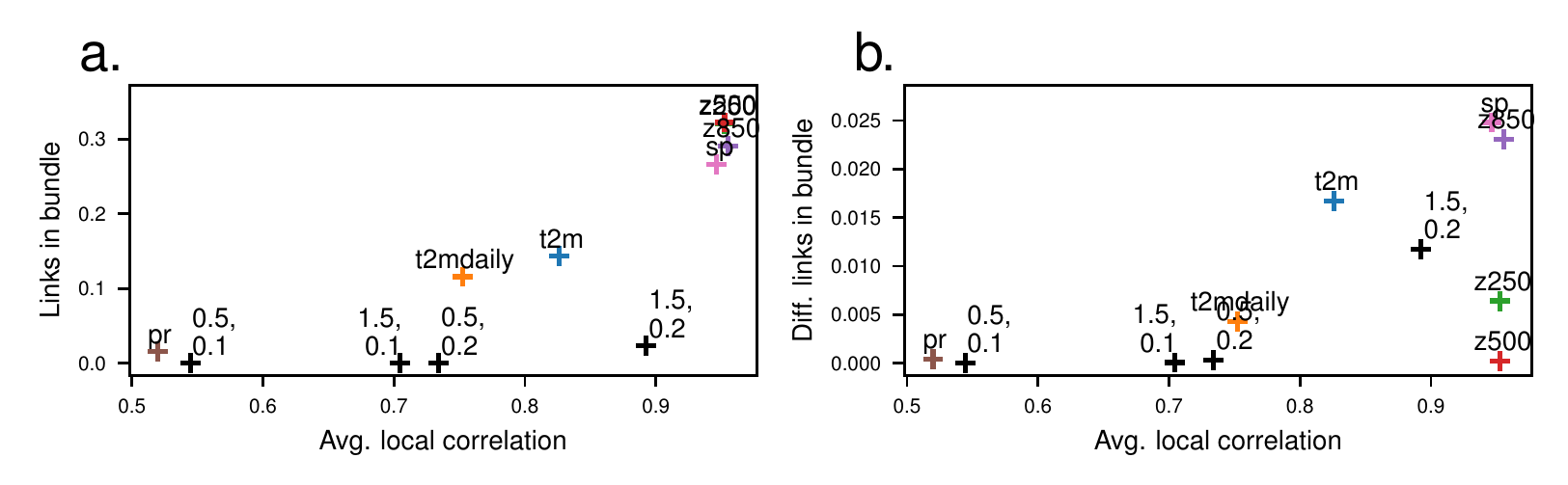}
    \caption{\textbf{Bundling behaviour against local correlation.} \textbf{a)} Fraction of long links (longer than $5000$km or $0.25\pi$ radians) that belong to a many-to-many link bundle with $c=0.8$ as a function of the average correlation in such a ball in Pearson correlation networks of density $0.05$. A tuple $a,b$ with black `x' indicates simulated data with $\nu=a$ and $\ell=b$. With larger average local correlation, a larger fraction of long links is part of some bundle. Real data shows even more bundling behaviour than our simulated data; this indicates ground truth teleconnections. \textbf{b)} Fraction of how many long links that differ between bootstrap samples belong to some bundle. Less differing links belong to some bundle, but their number is not negligible, especially under strong local correlation structure. Again, geopotential heights behave differently. Their correlation structure is so wide-spread that edge formation is highly codependent, but not very localized.}
    \label{fig:realbundles}
\end{figure}

\subsection{Anisotropy}
\label{sec:anisotropy}

\subsubsection{Anisotropic Autocorrelation on the Nodes Causes Biased Empirical Degree}
\label{sec:aniso_ar}

\textbf{Problem.} 
%
Given two nodes with lag-1 autocorrelation $\alpha\in (0,1)$ and $\beta\in (0,1)$, respectively, the asymptotic variance $\sigma^2_{\alpha,\beta}$ of their empirical Pearson correlation scales as \begin{align}
    \sigma^2_{\alpha,\beta}:=1+2\frac{\alpha\beta}{1-\alpha\beta},\label{eq:varscaling}
\end{align}
which explodes for $\alpha,\beta\to 1$ (detailed explanation in Appendix \ref{sec:effective_samples}). Conversely, time length $n$ is effectively only worth $n/\sigma^2_{\alpha,\beta}$ independent observations. 
The same principle applies to other estimators. Under anisotropic autocorrelation on the nodes, similarity estimates have different variability depending on the edge. While unweighted density-threshold networks are not biased by isotropic autocorrelation, anisotropic estimation variability introduces biases during the estimation procedure. \citet{Palus2011} have already observed such biases in climate networks. Here we explain them from a statistical perspective using our null model.

In practice different locations have different autocorrelation patterns. Due to higher effect heat capacity, temperature over oceans has higher autocorrelation than over land \citep{Eichner2003,Vallis2011}. \citet{Guez2014} argue that disagreement between their similarity measures is primarily caused by high autocorrelation. Our simulation results suggest that the cause of this disagreement might more fundamentally be estimation errors that vary between similarity measures.

\begin{figure}[H]
    \noindent\includegraphics[width=39pc,angle=0]{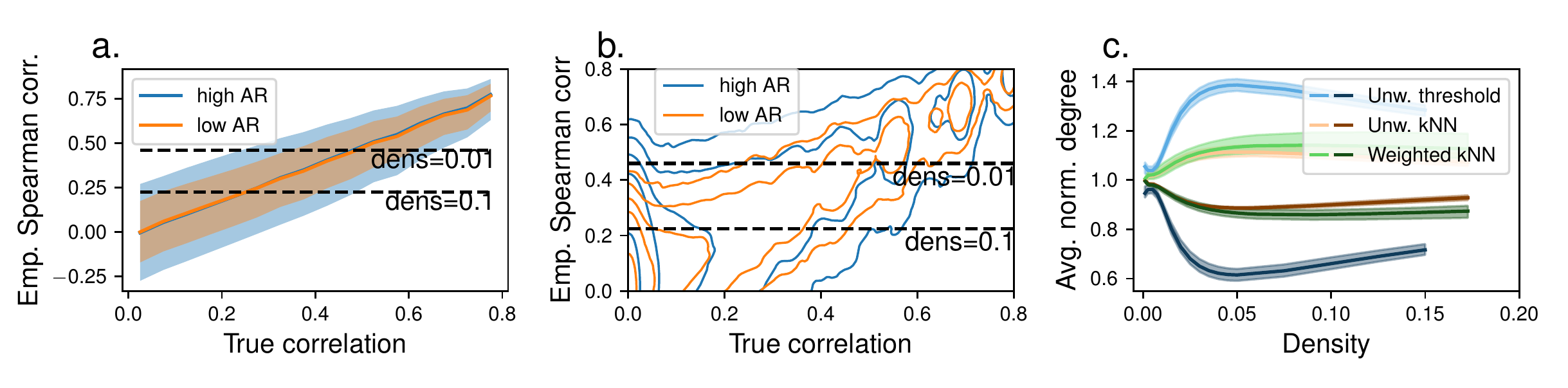}
    \caption{\textbf{Anisotropic variability of empirical estimates induces degree bias in empirical networks.} We initialize a random half of the points with low lag-1 autocorrelation of $0.2$ and the other half with high lag-1 autocorrelation of $0.7$. Then we analyse the distribution of Spearman correlation estimates for strongly and weakly autocorrelated nodes in an empirical network from a MIGRF($\nu=1.5$, $\ell=0.2$).\\
    \textbf{a)} $5\%$-, $50\%$- and $95\%$-quantiles of the empirical Spearman correlations given the true correlation values. High autocorrelation does not introduce a bias in the correlation estimates, but leads to larger variance.\\
    \textbf{b)} Kernel density estimate of the edge distributions with high and low autocorrelation respectively, at the levels $0.001,0.01, 0.05, 0.1, 0.5$. Since most true correlation values are small, a larger variance causes a larger part of the edge distribution of the highly autocorrelated nodes to lie above the thresholds. This leads to a higher average degree for nodes with correlation estimates of higher variance.\\
    \textbf{c)} Average degrees for threshold (blue), unweighted kNN (orange) and weighted kNN graphs (green) for nodes of high (light colors) and low (dark colors) autocorrelation normalized by the average degree in the network. Although kNN graphs cannot eliminate this bias, they can reduce it.}
    \label{fig:aniso_ar}
\end{figure}

\textbf{Simulation results.} We simulate anisotropic autocorrelation (Fig. \ref{fig:aniso_ar}) by employing our VAR(1)-model (Appendix \ref{sec:time_dep}). We initialize a random half of the points with low lag-1 autocorrelation of $0.2$ and the other half with high lag-1 autocorrelation of $0.7$.
On average, empirical Spearman correlation estimates do not depend on the autocorrelation of adjacent nodes (Fig. \ref{fig:aniso_ar}a). But the increased variance on highly autocorrelated nodes leads to both an increase of spuriously low similarity estimates for edges with high ground truth correlation, as well as more spuriously high estimates on edges with small ground truth correlation. Since most ground truth correlations are small (Fig. \ref{fig:aniso_ar}b), overall the number of high similarity values increases. Thus nodes of higher autocorrelation show an increased average degree in threshold graphs (Fig. \ref{fig:aniso_ar}c).

In real climate networks, the nodes of highest degree consistently have high lag-1 autocorrelation (Fig. \ref{fig:t2m_ar}). Together with our simulations, this suggests that anisotropic autocorrelation has non-negligible spurious effects on the networks. Recalling Eq. (\ref{eq:varscaling}), forming false links between highly autocorrelated nodes is much more likely than between nodes of small or intermediate autocorrelation. Hence both false edges at nodes with high autocorrelation and missing edges at nodes with low autocorrelation have to be expected when some nodes attain autocorrelation values close to one, as for t2m.

\begin{figure}[H]
    \noindent\includegraphics[width=39pc,angle=0]{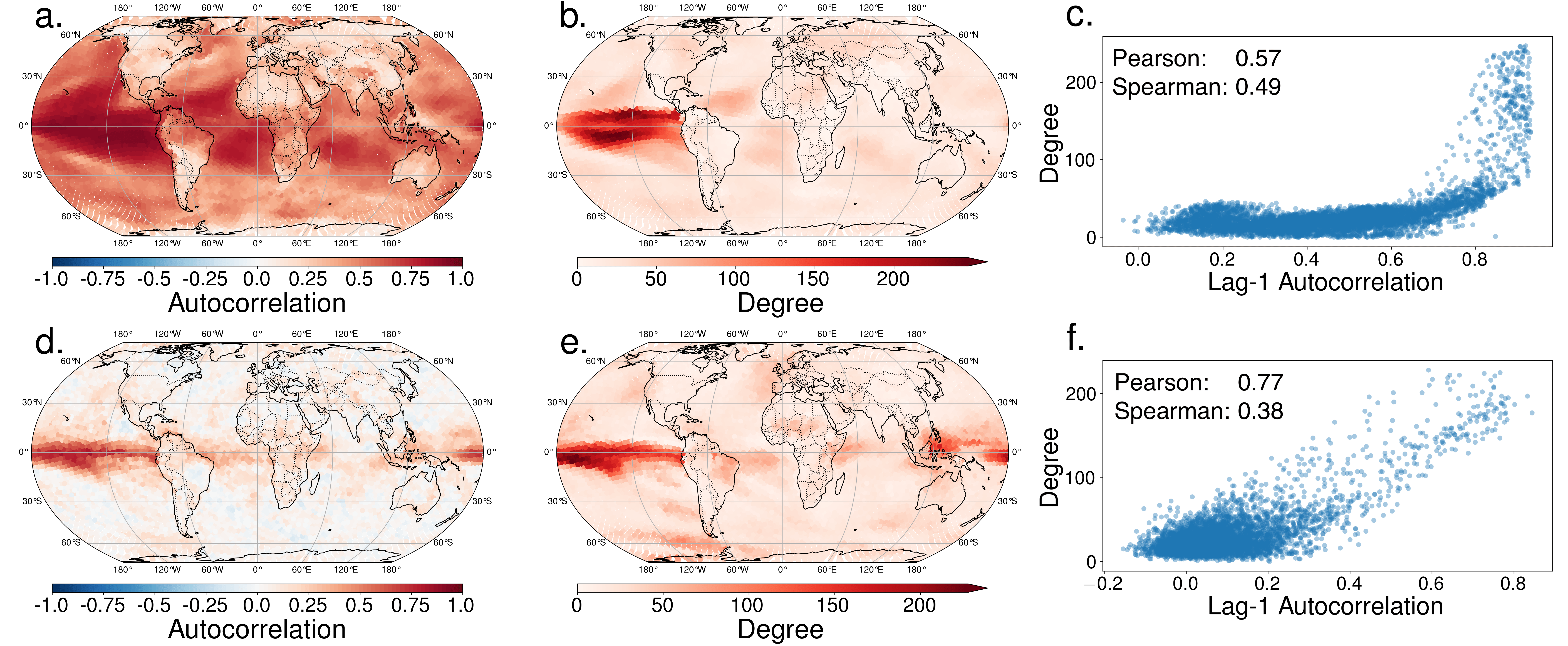}
    \caption{\textbf{Nodes of highest degree have high autocorrelation.} \textbf{a), c)} The autocorrelation on the node, and \textbf{b), e)} the degree in a Pearson correlation network of density $0.005$ for \textbf{a), b)} monthly t2m and \textbf{d), e)} pr. \textbf{c), f)} the degree of each grid point as a function of lag-1 autocorrelation for \textbf{c)} t2m and \textbf{f)} pr. Observe exploding degrees for autocorrelation above $0.7$ for t2m, as predicted by the variance scaling formula (\ref{eq:varscaling}). For pr degrees increase even earlier at autocorrelation $0.2$. This suggests that the most connected nodes have many spurious edges, induced through high autocorrelation.}
 \label{fig:t2m_ar}
\end{figure}

\textbf{Consequences.} Under large isotropic autocorrelation, density-threshold networks have an increased variability but no degree bias. When the variability differs across locations, nodes with high variability receive more false edges than nodes with informative time series. State-of-the-art corrections are discussed in the next section.
Using $k$-nearest neighbor (kNN) graphs prevents disregarding weakly autocorrelated locations. In kNN graphs, each node forms an edge to the $k$ nodes with highest similarity. Although highly autocorrelated locations may still have more spuriously high empirical similarity values, weakly autocorrelated points attain more similar importance in terms of degree in unweighted as well as weighted kNN graphs.


\subsubsection{Fourier Transform-based Reshuffling Reverses the Autocorrelation-Induced Degree Bias in Sparse Networks}
\label{sec:resampling}

\textbf{Problem.} Instead of constructing density-threshold networks some studies only include links that are significant with respect to similarity values from reshuffled data \citep{Boers2014,Deza2015,Boers2019}. For this purpose, the time series on each node are shuffled independently multiple times and similarity values between these shuffled time series are calculated to determine the internal variability of the similarity estimates on each edge. High quantiles of this edge-wise baseline distribution of similarity estimates impose restrictive thresholds above which ground truth dependence is likely. If the quantile and variance estimates are themselves noisy, yet one more source of randomness is introduced into the network estimation procedure.

\textbf{Simulation results.} We follow the popular approach to construct density-threshold networks not from the correlation estimates $\hat{S}_{ij}$ directly but from z-scores $\frac{\hat{S}_{ij}-\hat{\mu}_{ij}^0}{\hat{\sigma}_{ij}^0}$, where the edge-wise mean $\hat{\mu}_{ij}^0$ and variance $\hat{\sigma}_{ij}^0$ are based on different reshuffling procedures. We compare completely random reshuffling of the time series, as in e.g. \citet{Fan2022}, and reshuffling with the iterative amplitude adjusted Fourier transform (IAAFT) \citep{Schreiber1996}, as proposed by \citet{Palus2011}. The IAAFT algorithm was developed by \citet{Schreiber1996} to generate phase-randomized surrogate time series, which share their amplitude distribution and power spectrum with the original time series. We also construct `quantile networks' by including edges for which the empirical correlation value exceeds a high quantile of the edge-wise baseline distribution. 
In order to measure the effect of autocorrelation on empirical correlation estimates, we simulate independent pairs of Gaussian time series of length $n=100$ with varying autocorrelation and calculate the $95\%$-quantile of empirical Pearson correlation (Fig. \ref{fig:resampling}a). Naive reshuffling of the time series (blue) produces a baseline distribution of correlation estimates that does not adapt to the increased estimation variance (black line) under high autocorrelation, resulting in a too permissive threshold and uncalibrated uncertainty. The IAAFT-based quantile estimates adapt to the increased variability but contain large variance between independent realizations. 
To measure impacts of anisotropic estimation variability on entire networks, we simulate spatially anisotropic autocorrelation as in the previous section. Since completely random reshuffling produces the same $\hat{\mu}_{ij}^0$ and $\hat{\sigma}_{ij}^0$ for every edge, the unweighted density threshold network of z-scores from random reshuffling exactly coincides with the unweighted density threshold network from the original estimates $\hat{S}_{ij}$. The IAAFT surrogates remove the autocorrelation-induced degree bias for large densities (Fig. \ref{fig:resampling}b). For sparse IAAFT z-score networks the autocorrelation-induced degree bias is reversed: Because the variance estimates $\hat{\sigma}_{ij}^0$ of nodes with high autocorrelation are systematically larger (Appendix \ref{sec:iaaft_distrib}), the highest z-scores are formed for nodes with low autocorrelation. Since no edges are formed between nodes of high autocorrelation, the fraction of false links explodes in sparse IAAFT z-score networks (Fig. \ref{fig:resampling}c). The IAAFT-based z-score networks fulfill the objective of only forming edges with small estimation variance, but then the resulting network does not represent the spatial ground truth correlation structure. Quantile networks perform slightly better than z-score networks given the same density, but there is no reasonable significance value $\le 0.999$ that achieves a network sparsity necessary for minimal FDR.


In the following way, perfect quantile estimates allow us to determine network densities that lead to empirical networks containing few false edges. Under isotropic autocorrelation, perfect quantile estimates induce a common threshold on the entire network. Applying this threshold to the ground truth correlation matrix induces a density in the ground truth network. For our range of hyperparameters, the densities, induced by the $0.95$-quantile, range from $0.007$ (for $\nu=0.5$, $\ell=0.1$) to $0.032$ (for $\nu=1.5$, $\ell=0.2$) without autocorrelation and from $0.001$ to $0.01$ given isotropic autocorrelation of $0.9$. Choosing larger network densities leads to unreliable empirical networks, as ground truth correlations of longer links are not empirically distinguishable from $0$ with sufficient certainty. Generally, empirical significance-based networks have a larger density than they should, because the number of spuriously high correlation estimates exceeds the number of spuriously low ones.

\begin{figure}[H]
    \noindent\includegraphics[width=39pc,angle=0]{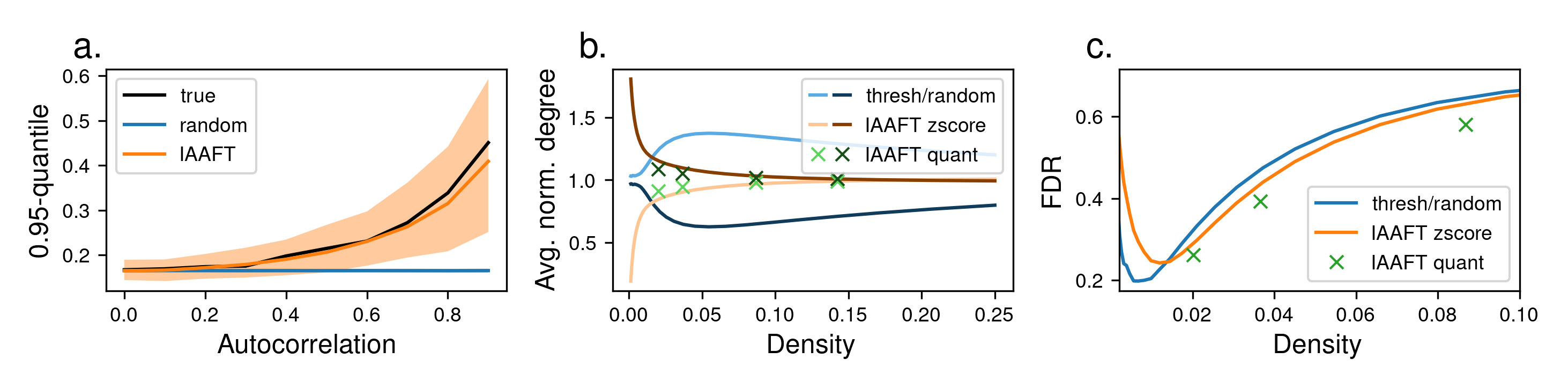}
    \caption{\textbf{Effects of (anisotropic) autocorrelation on significance-based networks.} \textbf{a)} Compares the true 0.95-quantile of empirical correlation values (black) (cf. Eq. (\ref{eq:varscaling})) with the 0.95-quantile obtained from naive node-wise shuffling (blue) and IAAFT node-wise shuffling (orange)
    on a single edge of $0$ ground truth correlation. We therefore calculate 10000 shuffles of 1000 pairs of independent Gaussian time series of length $n=100$ for each autocorrelation value. The growth of quantiles under high autocorrelation is not detected by the naive shuffling estimates. The IAAFT-based procedure detects increased variance but introduces another large source of variance between edge estimates.\\
    \textbf{b)} Average normalized degrees of nodes with high (light colors) and low (dark colors) autocorrelation for unweighted threshold/z-scores from random reshuffling (blue) versus networks from IAAFT-based z-scores (orange) versus quantile networks from IAAFT surrogates (green) with density determined by the quantiles $0.9, 0.95, 0.99, 0.999$. Z-scores from random reshuffling induce the original density-threshold network. The IAAFT surrogates correct the degree bias for large densities, but only form edges between nodes with low autocorrelation in sparse networks.\\
    \textbf{c)} False discovery rates for unweighted threshold/z-scores from random reshuffling (blue) versus IAAFT-based z-scores (orange) versus IAAFT-based quantile networks (green). IAAFT-based z-scores form many false edges in sparse networks. Quantile networks outperform z-score networks given the same density, but do not reach sufficient sparsity.}
    \label{fig:resampling}
\end{figure}

\textbf{Consequences.} In settings of low autocorrelation, completely random reshuffling yields reliable estimates of empirical correlation quantiles, resulting in a controlled false discovery rate. But it is not able to detect anisotropic autocorrelation and can therefore not correct autocorrelation-induced degree bias. The IAAFT-based empirical networks correct this bias in dense networks, but are very biased towards edges with low estimation variance in sparse z-score networks, which results in many false links. The empirical density of significance-based networks only yields an upper bound on a desirable network density. Since \textbf{no considered network construction technique reduces the variability in the edge estimates}, they cannot vastly improve over density-threshold networks.

\subsubsection{Anisotropic Noise Levels on the Nodes Cause Nodes to be Disconnected}
\label{sec:aniso_noise}

\textbf{Problem.} Observational data is generally affected by measurement errors or other sources of noise. Under isotropic additive white noise, variance in the graph construction increases (see Appendix \ref{sec:noisy_meas}). 
Even worse, anisotropic noise levels crucially distort how well nodes are connected in the graph.

A central difficulty in recovering ground truth structure is a distinction of which part of the noise is inherent to the dynamical system ({aleatoric} noise) and which part could be reduced through more sophisticated measurement, preprocessing and estimation procedures ({epistemic} noise). While aleatoric noise affects the ground truth networks and can be seen as an offset of the ground truth correlation function, everything else is an empirical distortion. Nodes over land are commonly less connected in climate networks \citep{Donges2009b} because the underlying distributional characteristics differ across sea and different geological conditions over land. This distributional difference is at least partially aleatoric. Varying availability and reliability of measurements, on the other hand, induce epistemic noise\footnote{In reanalysis data sets, data over nodes in regions of high measurement density can be extrapolated with higher certainty. The density of weather stations in the US or Europe for example is much higher than in parts of Africa or South America. The effort of estimating the measurement/extrapolation error in each node could alleviate the effects of an anisotropic data collection and extrapolation process. Anisotropic measurement/extrapolation noise remains to distort the constructed climate networks and efforts should be made to gather more reliable measurements in neglected regions (cf. \href{https://www.arcgis.com/apps/mapviewer/index.html?layers=c3cbaceff97544a1a4df93674818b012}{overview of WMO weather stations}\citep{wmoweatherstations}).}. In cases where we acknowledge that we cannot satisfactorily judge how much our data is affected by epistemic noise, a conservative approach is to reduce the effects of anisotropy in the network construction. kNN graphs may offer a useful inductive bias in such uncertain settings.

\begin{figure}[H]
    \noindent\includegraphics[width=39pc,angle=0]{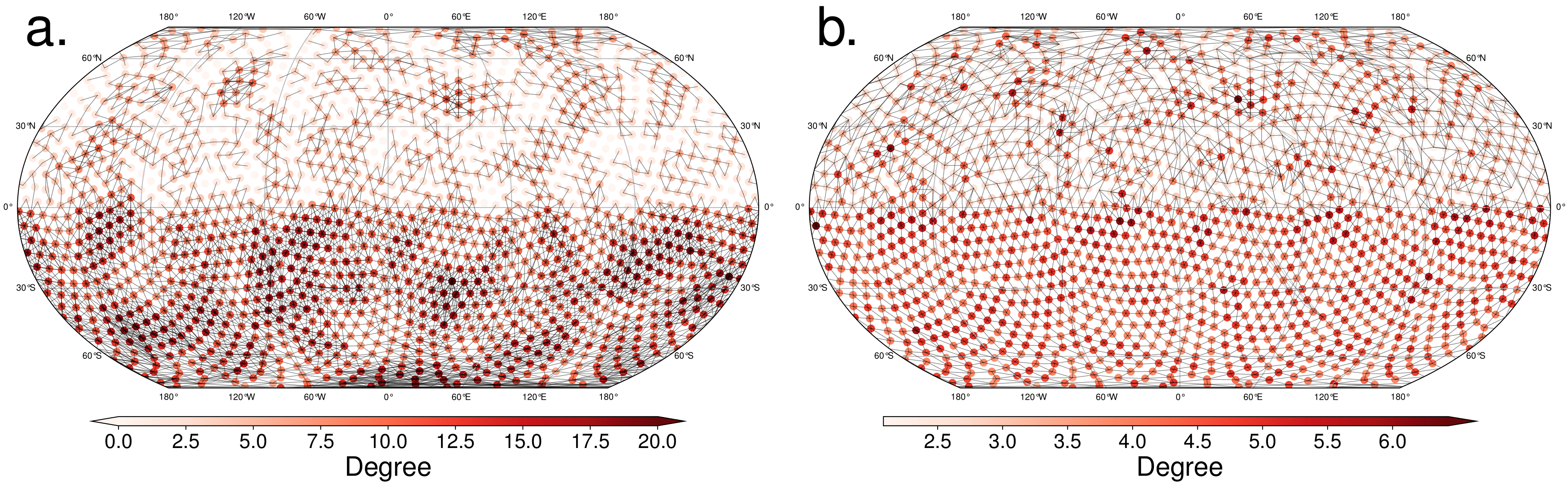}
    \caption{\textbf{Threshold vs kNN networks given anisotropic noise levels.} \textbf{a)} Unweighted threshold graph and \textbf{b)} weighted kNN graph with approximately $0.005$ of possible edges formed from a MIGRF$(\nu=1.5$, $\ell=0.2$) and $5^\circ$ resolution with additive $0.7\cdot N(0,1)$-white noise on the Northern hemisphere. Higher measurement noise on the Northern hemisphere leads to smaller correlation values. These nodes are less connected in the empirical as well as ground truth threshold graph. A kNN graph over the same data ensures similar total connectedness. Nodes on the Northern and Southern hemisphere can still be differentiated in terms of weighted degree. Additionally, spurious high-degree clusters in the empirical threshold graph are not present in the weighted kNN graph.}
    \label{fig:aniso_noise}
\end{figure}

\textbf{Simulation results.} By adding white noise on the Northern hemisphere, we decrease the population correlation for these nodes. As a result we find that (especially sparse) threshold graphs mostly form edges on the nodes with less noise (Fig. \ref{fig:aniso_noise}). To represent all nodes equally in the graph, we propose to use kNN graphs instead. By using weighted edges, the spectrum of node and link importance in terms of weighted degree does not get lost.

\textbf{Consequences.} When the available data is affected by epistemic noise, the connectivity structure in the network is spuriously altered. Effects of anisotropic noise on the empirical networks can be reduced by using kNN graphs.
When the ground truth correlations are higher in some regions than in others (anisotropic aleatoric noise), kNN graphs can also be more informative because weakly correlated nodes are not well represented in density-threshold networks. kNN graphs pose a different inductive bias, which may be useful to detect different patterns. Although ground truth kNN graphs severely differ from ground truth density-threshold networks in anisotropic settings, given useful weights, they have shown to be useful and robust in machine learning applications \citep{Luxburg2007}, while not sacrificing interpretability.

\subsubsection{Ground Truth Networks on Anisotropic Grids}

\textbf{Problem.} Anisotropic grids usually introduce biases in the networks that are not intentional, so that differing node connectivity does not reflect differing correlation structure in the data. Given an anisotropic grid, the nodes will have unequal characteristics in the ground truth network under isotropic correlation structure. It is well known that a regular Gaussian grid is geometrically undesirable due to its two singularities at the poles. Area weighting \citep{Heitzig2011} becomes crucial to correct the distortions in the network. Another effect of anisotropy does not stem from anisotropic grid choice but geographical reality. If we consider an isotropic field with monotonically decaying correlation function on an approximately isotropic grid only defined over oceans, then the nodes in the population network will not be isotropic but encode geometric information about the distribution of land and sea across the earth (Fig. \ref{fig:lsm}). For example sea surface temperatures are only defined over oceans.

\textbf{Simulation results.} The ground truth networks constructed from monotonically decaying isotropic correlation structure simply consist of the shortest possible links. The anisotropic distribution of grid points introduces a bias to the networks that is visible in various network measures. For example, points on paths connecting different regions and points in geometric bottlenecks show higher betweenness values in sparse networks, points with large uninterrupted surrounding show higher degree and points in inlets show larger clustering coefficient, because neighbors towards similar directions are often close to each other and thus also connected. The network density functions as a scale parameter similar to a band width in kernel density estimation, since the connection radius increases with network density.

\begin{figure}[H]
    \noindent\includegraphics[width=39pc,angle=0]{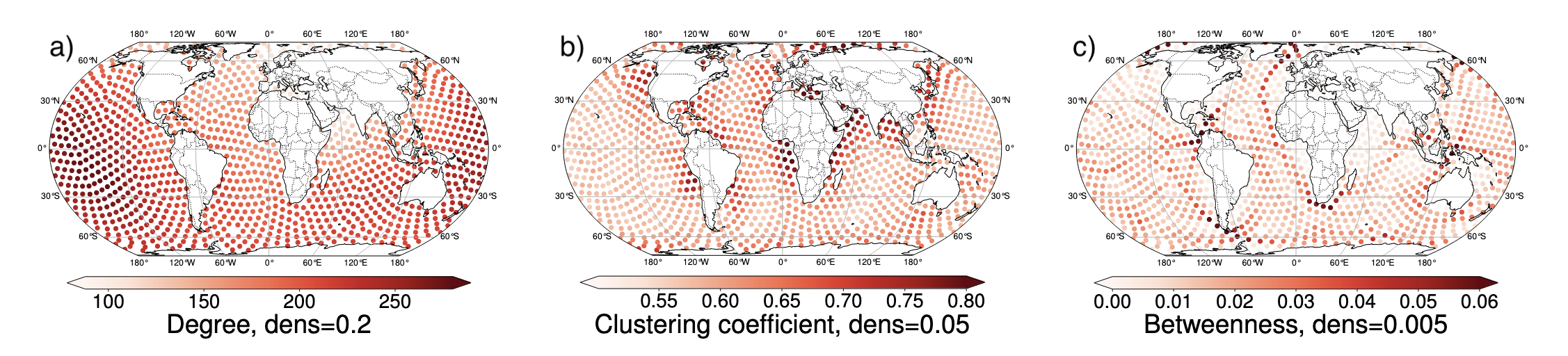}
    \caption{\textbf{Anisotropic grids induce anisotropic node characteristics.} Node characteristics of ground truth networks from monotonically decaying isotropic correlation structure. \textbf{a)} Degree, \textbf{b)} clustering coefficient and \textbf{c)} betweenness for unweighted density-threshold graphs with density $0.2$, $0.05$, $0.005$ from left to right.}
    \label{fig:lsm}
\end{figure}

\textbf{Consequences.} Estimated networks suggest misleading conclusions when false edges distort their characteristics. Even the ground truth network is a result of many design decisions that can lead to prominent behaviour, readily missinterpreted when the its cause is not correctly identified. For betweenness, even the ground truth values are very sensitive to small variations in network density. Even on ground truth, conclusions are not necessarily robust. Boundary correction \citep{Rheinwalt2012} has been proposed for networks that do not cover the entire Earth. A similar correction, using locally connected networks, could be proposed to remove the influence of the distribution of land and sea across the Earth to quantify anisotropic correlation behaviour.

\section{Assessing Significance from Network Ensembles}
\label{sec4}


In practice, researchers are usually confronted with data sets of limited size from an unknown distribution. Based on a single network constructed with state-of-the-art climate network techniques, they cannot judge how many edges are included or excluded because of estimation errors.  Given time length $n$ and number of grid points $p$, the regime of `small sample size', where the observed networks significantly differ from the ground truth, can mean any order of magnitude for $n$ and any ratio $n/p$, depending on dynamics of the spatio-temporal system, measurement error, the employed estimator and subsequent network construction and evaluation steps. This makes general rules of thumb prohibitive, and solid uncertainty estimation based on unrestrictive assumptions crucial for the value of the study.
Constructing networks with various similarity measures, datasets, resolutions and network construction steps (see e.g. \citet{Radebach2013}) can offer qualitative reassurance that observed patterns do not just occur under a specific setting. Significance tests offer a more quantitative approach.
Here, we first discuss shortcomings of common procedures to quantify significance in Section \ref{sec:resampling_practice}, and then offer a new probabilistic framework in Section \ref{sec:subsamp_ensembles}, that addresses these shortcomings.

\subsection{Resampling in Current Practice}
\label{sec:resampling_practice}

The usual approach to quantify the significance of certain findings such as hubs, pathways or teleconnections is to construct an ensemble of networks that share certain aspects of the originally constructed network while randomizing with respect to everything else through reshuffling. The effective null hypothesis of such a permutation test (also called surrogate test) is the (limit) probability distribution over the networks that the ensemble induces. Needless to say, any permutation test can only be as meaningful as its effective null hypothesis.

All previously applied reshuffling approaches for climate networks that have been reported in the literature can be categorized into two types. Either reshuffling is directly performed on the edges to recover, for example, the original degree sequence or the degree sequence and link length distribution (\citealp{Wiedermann2015}, GeoModel II), or the time series are shuffled node-wise to preserve the marginal time-series dynamics with methods such as the iterative amplitude-adjusted Fourier transform \citep{Schreiber1996}. In the latter case, the ensemble networks are then constructed from the shuffled dataset.

\textbf{Node-wise reshuffling.} Whenever researchers have performed permutation tests that recover marginal time-series dynamics, these tests have disregarded the spatial distribution of the data completely. The nodes are assumed to be independent, so that the typical localized correlation structure, which results in a link-length distribution of predominantly short links, is replaced by a uniform one (Fig. \ref{fig:badresampling}a). Since the task is to construct \textit{spatial} networks, such an ensemble is structurally unrealistic and does not induce a physically meaningful network distribution.

\textbf{Edge reshuffling.} Whenever fixing concrete network characteristics to be preserved, a preliminary question has to be addressed: \textbf{Which spatial as well as temporal dynamics of the system or network at hand need to be preserved by the ensemble?}
The authors of the influential paper \citep{Donges2009} use a permutation test with preserved degree sequence. But what is the physical meaning of the observed degree sequence?
One consistently reappearing feature of the underlying physical system is the localized correlation structure, which results in a link-length distribution of predominantly short links. As for node-wise reshuffling, this link-length distribution is destroyed, being replaced by a sinusoidal one.
As a consequence of this inaccurate ensemble distribution, the authors interpret the property that the nodes of highest betweenness show degrees below average in the original network as significant behaviour. In contrast, we have shown in Fig. \ref{fig:gauss_betweenness} that this property is a bias that is introduced through the Gaussian grid and the lattice-like connectivity behaviour of the original network. The original connectivity structure gets destroyed by uniform edge reshuffling, hence the ensemble members have different betweenness properties.
We have seen that betweenness is a highly unstable measure. An indication for robustness of the discovered betweenness `backbone' would be if it consistently reappeared for various subsets of the data, as well as for many network densities and similarity estimators. Since only a single instance of the climate network is presented, it is not clear if the presented backbone appears by chance.

\begin{figure}[H]
    \noindent\includegraphics[width=39pc,angle=0]{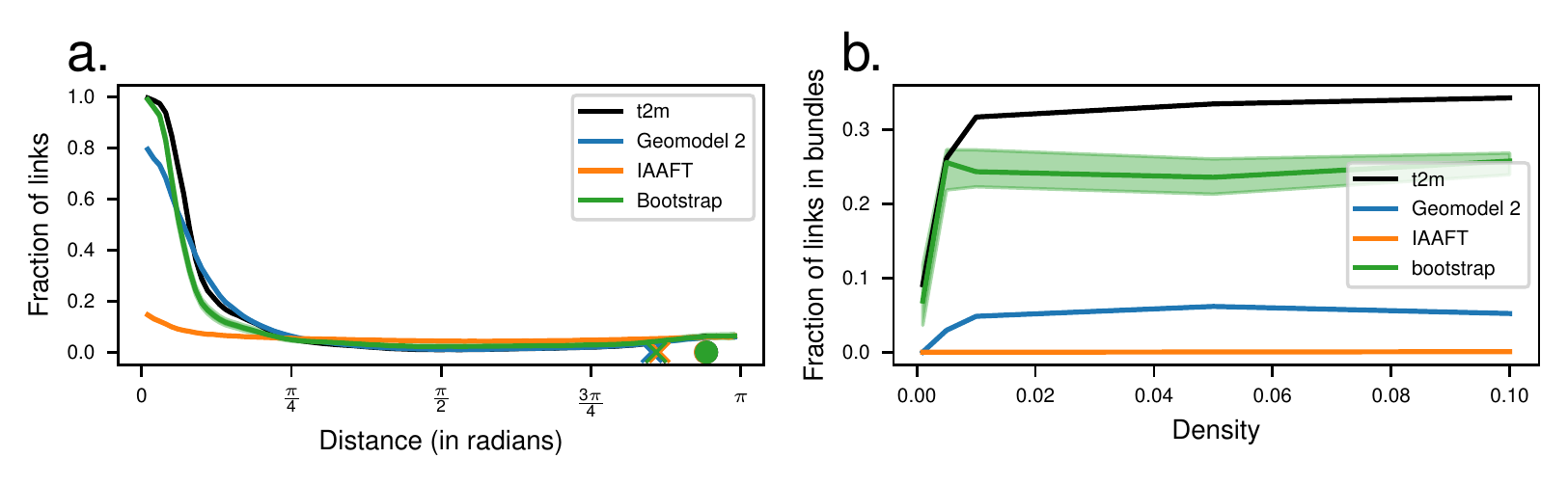}
    \caption{\textbf{Reshuffling procedures produce unrealistic network distributions.} \textbf{a)} The fraction of formed links among all possible links at a given distance, in a Pearson correlation network for t2m. We employ Geomodel 2 with $\epsilon=0.05$ and $\min(10^7,10\cdot\text{number of edges})$ rewirings. Node-wise IAAFT-resampling results in approximately uniform link distribution. A naive bootstrap in time over all grid points simultaneously as well as Geomodel 2 results in a more realistic link length distribution. \textbf{b)} The fraction of links in 1-to-many link bundles. Node-wise reshuffling as well as Geomodel 2 destroys network properties that are induced by localized correlation structure.}
    \label{fig:badresampling}
\end{figure}

A first improvement over random link redistribution or independence between the time-series is GeoModel II \citep{Wiedermann2015}, which approximately preserves both the degree distribution and the node-wise link-length distribution. But 
it still does not recover the natural tendency in the network to form bundled links. Given one link is formed, the likelihood of a neighboring link increases in a locally correlated random field. When simply recovering the total link-length distribution, as does GeoModel II, this likelihood does not increase (Fig. \ref{fig:badresampling}b). Furthermore, fixing the degree sequence of the nodes might not be representative for the distribution of the constructed graph. In Section \ref{sec:bundles_clusters} we have seen spurious high-degree clusters on random locations. In conclusion, no explicit resampling scheme has been proposed that recovers the joint link distribution of locally correlated random fields.

\textbf{Proposing a concrete network resampling scheme always runs the risk of missing or distorting an important aspect of the underlying dynamical system and estimation procedure.} The tendency to form bundled connections depends on the localized correlation structure. The expected number of spurious links depends on the (co-)variability of the estimates, the spatial correlation structure of the random field, the autocorrelation of the time series, among many other factors. And all these aspects do not even cover the more complex time-series dynamics that we might want to account for. 

\subsection{Distribution-preserving Ensembles}
\label{sec:subsamp_ensembles}

Instead of trying to solve the impossible problem of finding which exact characteristics to preserve for the climatic question of interest, we propose to \textbf{construct networks such that all members approximately reflect the network distribution that originates in the underlying physical processes.}
As discussed above, state-of-the-art network resampling approaches calculate many estimates of surrogate networks that do not reflect the original distribution. Crucially, they only obtain \textit{one} noisy estimate of the similarity value on each edge. With multiple estimates, we not only get a more robust total estimate but an approximation of the estimation variability on each edge. Instead of a single network estimate, we have access to an ensemble of equally valuable estimates that allows to judge whether a network estimation procedure really is trustworthy, and empowers us to tackle most of the issues presented in Section \ref{sec3}. Instead of reshuffling the data node-wise, we propose to jointly subsample or resample time-windows for both end points of an edge or even for all nodes simultaneously, preserving the original dynamics in space. There is an abundance of resampling techniques for multivariate time series. One popular approach is block bootstrapping \citep{Lahiri2003,Shao1995}. Another one is subsampling \citep{Politis1999}. Individual ensemble members should both reflect the \textbf{same data distribution} (induced by the selection of included time points), as well as be sufficiently \textbf{independent} (by representing different time-windows that are far enough apart). Such an ensemble could be constructed by the following pipeline:
\begin{enumerate}
    \item[(i)] Decide on a network construction procedure with unbiased edge estimates, such that the dynamics of interest behave robustly within the network distribution.
    \item[(ii)] Construct many networks with the same procedure (i) by subsampling/block-bootstrapping data in time on all grid points simultaneously.
    \item[(iii)] Evaluate reoccuring edges and patterns such as link bundles.
\end{enumerate}

If the quantity of interest can be expressed by summary statistics, the pipeline as a whole should be unbiased to yield calibrated confidence intervals. We have seen that the maximal degree or the link-length distribution of the networks can be systematically biased. Crucially, estimates of single edges or fixed neighborhoods are unbiased when the employed similarity estimator is unbiased. In this case, given a large enough ensemble, uncertainty estimates and p-values would be precise. In practice, unbiased estimators often increase estimation variance so much that the uncertainty becomes too large. In complex estimation tasks like mutual information estimation, this approach might reveal that empirically constructed networks are too incoherent due to a lack of available data. Instead of communicating false certainty, the bootstrap approach would suggest conclusions like: ``The amount of available data does not suffice to significantly detect this teleconnection with our small-bias mutual information estimator.''

From a dynamical systems perspective one could argue that different time points represent a different state of the dynamical system. From a statistical perspective, a distribution over networks is implicitly chosen when selecting the data set. The remaining task is to define what characterizes state of the dynamical system one is interested in, such as different ENSO phases. If we construct an ensemble that is independent of the state, we simply recover links and patterns that are present most of the time. Teleconnections that are only active some of the time become difficult to distinguish from noisy connections.

A single network, that is more accurate than the result of a single similarity estimate per edge, can be derived by selecting the edges that appear in most ensemble members. A related approach is the variable selection method \textit{stability selection} \citep{Meinshausen2010}. In practice, stability selection often markedly improves the baseline variable selection or structure estimation algorithm. Another approach could only accept links with small estimation variability.

While most studies directly perform bootstrapping on the graph structure and not on underlying data \citep{Chen2019,Levin2019}, a similar idea has been previously suggested \citep{Friedman2013}. To the best of our knowledge, it has not yet been applied in climate science.
In practice, data scarcity, distribution shift and varying regimes of the dynamical system complicate finding a suitable resampling or subsampling scheme that produces both sufficiently independent and identically distributed ensemble members without biases. In both bootstrap as well as subsampling techniques, a suitable choice of window size depends on the autocorrelation of the time series at hand. Highly anisotropic autocorrelation (Fig. \ref{fig:t2m_ar}) complicates designing a consistent procedure for all nodes simultaneously. With these complications and the journal's page limit in mind, we postpone proposing an explicit ensemble construction procedure to future work.

\section{Conclusions}
\label{sec:conclusion}

Whenever constructing networks from data, it is not obvious that they reflect ground truth structure. Given a finite amount of data, similarity estimates contain estimation errors. Under such non-negligible estimation variability, we find several types of spurious behaviour using typical network construction schemes:
\begin{itemize}
    \item Not only the choice of similarity measure, but also the choice of estimators is an influential design decision. The properties of the estimator determine how well single empirical networks approximate the population network and if the uncertainty of an ensemble is accurate.
    \item Global properties of finite-sample networks such as averages, variances and maxima of network measures or the spectrum of the adjacency matrix can be heavily distorted.
    \item Links occur in bundles when the data is locally correlated and the estimator transmits the correlation structure. This leads to spurious link bundles and regions of spuriously high or low degree.
    \item Under anisotropic autocorrelation or marginal distributions, differing data distributions on the nodes cause anisotropic estimation variability on the edges, which in turn introduces biases in the empirical networks. Anisotropic noise levels may lead to weak representation of nodes in the network, weighted kNN graphs reduce anisotropic behaviour via the inductive bias to represent all nodes equally, differences can still be detected via edge weights.
    \item We find sparse networks to be more accurate in terms of false discovery rate and spurious teleconnections. Yet popular network measures such as betweenness become highly unstable in sparse networks. This constitutes a different trade-off for each estimation task. Random fields with larger length scales allow for denser networks, but also lead to more pronounced bundling behaviour.
\end{itemize}

Given the variety and extremeness of possible empirical distortions, it is crucial to reliably estimate how `trustworthy' an empirical network is. State-of-the-art resampling procedures only capture particular parts of the empirical network distribution, and consequently miss other possibly relevant aspects of the dynamical system. When the implicit null hypothesis of the resampling technique does not capture all relevant properties of the dynamical system, the value of the significance test is questionable. Specifically surrogate tests, which hypothesize independence between nodes, do not reflect a physically meaningful null hypothesis when the dynamical system is locally correlated, which the link-length distribution typically clearly reflects. Random artifacts that stem from local correlation structure will then appear significant. In the past, climate network approaches have been based on calculating a single similarity estimate on each edge; we propose to generate multiple estimates via sub- or resampling in time, in order to estimate the estimation variability on each edge. This allows to approximate the intrinsic distribution over the constructed networks induced by the underlying data distribution and the chosen estimation procedure.

\textbf{Future work.} Most importantly future network studies in climate should estimate the underlying estimation error in each edge in order to argue about significance in a statistically meaningful way. Given scarce data, the variability of similarities cannot be precisely estimated (Fig. \ref{fig:resampling}a). Further challenges inherent to climatic data have to be addressed to successfully implement our proposed framework of network ensembles, which represent the underlying dynamics by design and are sufficiently independent in time. Adjusting the window length for block bootstrapping on each edge to the autocorrelation of the nodes could yield consistent estimates of the estimation variance on all edges. More work is needed to develop good similarity estimators, robust network construction procedures and resampling techniques for the climate context, respecting distribution shifts, varying regimes, anisotropy and measurement errors in chaotic dynamical systems.


While we only construct undirected networks, directed empirical networks as well as EOFs suffer from insufficient data in analogous ways. From estimating lagged correlations to probabilistic graphical models \citep{Koller2009} and causal networks \citep{Runge2019}, a similar simulation analysis could quantify strengths and weaknesses of different network construction procedures. Some works have reduced the network size by clustering spatial areas of temporally coherent behaviour before network construction (cf. \citet{Rheinwalt2015,Fountalis2018,Runge2019b}). Whether such an approach is statistically beneficial is task-dependent and deserves a more thorough consideration in future work. On the one hand, too many nodes may induce many errors and a systematic distortion in the spectrum of empirical covariance matrices \citep{Donoho2013,Lam2020,Morales2021}, on the other hand single errors have higher impact in smaller networks, and node estimation constitutes yet another challenging task in the network construction pipeline. A key question in this regard is: How can we minimize the edge-wise estimation variance in the downsized network? Previous work had selected representative locations for each cluster, but maybe an aggregation of regional information can boost statistical robustness further.

More complex time dynamics could introduce other kinds of spuriousness into empirical estimates we were not able to cover with our simple autoregressive model. A theoretical analysis of networks from spatio-temporal data would also be very insightful. Exploring alternatives to Mutual Information, such as the Hilbert Schmidt Independence Criterion or the Randomized Information Coefficient, could yield novel insights into dynamics of the climatic system.

Networks have been constructed in several scientific fields to detect complex structures in spatio-temporal data. Geneticists try to identify connections between certain genes and the development of deseases by estimating Pearson correlation networks under the term weighted gene co-expression analysis \citep{Horvath2011,Niu2019}. Neuro scientists \citep{Sporns2010} aim to understand the functional connectivity in the brain with weighted voxel coactivation network analysis \citep{Mumford2010}. While in this work we focus on the application of functional networks in climate and geo science, our conceptual findings hold in any domain where networks are constructed from spatio-temporal data. 

\section*{Acknowledgements}

Funded by the Deutsche Forschungsgemeinschaft (DFG, German Research Foundation) under Germany’s Excellence Strategy – EXC number 2064/1 – Project number 390727645. The authors thank the International Max Planck Research School for Intelligent Systems (IMPRS-IS) for supporting Moritz Haas. We want to thank all members of the Theory of Machine Learning and the Machine Learning in Climate Science group in Tübingen for helpful discussions. We thank all reviewers and our editor at the Journal of Climate for valuable feedback. Finally we would like to thank Joe Guinness for his advice concerning the use of Mat\'ern covariance on the sphere.

\section*{Data and Code Availability Statements}

\citet{era5hs,era5mp,era5ms} were downloaded from the Copernicus Climate Change Service (C3S) Climate Data Store. Python code for reproducing all results in this paper can be found at \url{https://github.com/moritzhaas/climate_nets_from_random_fields/}

\bibliography{arxiv_version.bib}

\newpage
\pagenumbering{roman}
\setcounter{page}{1}


\begin{appendices}

\listofappendices

\section{Introduction to Climate Network Terminology and Methodology}
\label{sec:intro_cn_terminology}

Here we introduce all concepts and methods around climate networks that appear in the main text. We offer both definitions as well as brief explanations for the application in climate science.

\subsection{Networks}

An undirected \textbf{network} $(V,E)$ consists of a set of \textbf{nodes/vertices} $V$ with $p$ elements and a set of undirected \textbf{links/edges} $\{v_i,v_j\}$ connecting pairs of nodes $v_i,v_j\in V$. We exclude self-loops $\{v_i,v_i\}$. While for undirected edges $\{v_i,v_j\}$ the ordering of the nodes does not matter, a directed edge $(v_i,v_j)$ starts at $v_i$ and ends at $v_j$, but we only consider undirected ones. In a \textbf{weighted network} each edge $\{v_i,v_j\}$ has a corresponding weight $w_{i,j}$; in an \textbf{unweighted network} all edges simply have weight $w_{i,j}=1$. When there is no edge between $v_i$ and $v_j$, we write $w_{i,j}=0$. In this way, a network can be respresented by its \textbf{adjacency matrix} $W= (w_{i,j})_{i,j=1,\dots,p}$.

While climate networks are commonly constructed by including the edges with the largest similarity values globally, there is an alternative, which is particularly popular in machine learning \citepA{Luxburg2007}.

\textbf{(Symmetric) $k$-nearest neighbor graphs} (short kNN graphs) include the $k$ edges with largest similarity for each node. This ensures that each node is connected to at least $k$ other nodes. This is useful when some locations have much lower characteristic similarities than others, but these locations should not be ignored by the network.

An alternative network construction procedure that is popular in climate science is to include edges $\{i,j\}$ which are significant in some sense. For this purpose the time series are reshuffled several times either randomly, or preserving certain time series characteristics. This yields a baseline distribution of similarity on each edge. Now one can include an edge, when the similarity $\hat{S}_{ij}$ of the original time series exceeds a restrictive quantile of the baseline distribution like $0.99$ or even $0.999$. Another popular approach is to calculate the mean $\hat{\mu}_{ij}^0$ and standard deviation $\hat{\sigma}_{ij}^0$ of the baseline distribution, calculate z-scores $\frac{\hat{S}_{ij}-\hat{\mu}_{ij}^0}{\hat{\sigma}_{ij}^0}$ and construct a density-threshold network from these z-scores.

Besides eyeballing the network, there are several quantities that measure different properties of the networks and therefore potentially help to understand the network structure and discover interesting behaviour. Let us introduce the most important ones for undirected similarity networks.

The \textbf{degree} $d_i$ of a node $v_i$ is the cumulative weight of its connections $d_i = \sum_{i=1}^p w_{i,j}$. It measures the importance of the node in the sense of how well connected it is in the network. A node of exceptionally large degree is called hub and often influences many other nodes.

The \textbf{(local) clustering coefficient} $C_i$ of a node $v_i$, is given by $C_i= \frac{1}{d_i (d_i-1)} \sum_{j,k=1}^p w_{i,j} w_{j,k} w_{k,i}$. It measures how many neighbors of node $v_i$ are directly connected to each other by an edge. It is $0$ when no neighbor of $v_i$ has an edge to any other neighbor of $v_i$, and it is $1$ when all neighbors of $v_i$ are interconnected as well. A set of nodes that is highly intraconnected but less connected to other nodes is often called a 'cluster' or community of nodes. 
A popular approach to calculate a \textbf{global clustering coefficient} is to simply average all local clustering coefficients \citepA{Watts1998}. This quantity is dominated by nodes with low degree. An alternative measure for undirecte networks which represents the global network structure more reasonably \citepA{Newman2018} is given by $C=\frac{\sum_{i,j,k} W_{ij}W_{jk}W_{ki}}{\sum_i d_i(d_i-1)}$.

The \textbf{weighted clustering coefficient} \citepA{Onnela2005} reflects how large triangle weights are compared to the network maximum. While there are other generalizations, this is the one implemented in the popular brain connectivity toolbox \citepA{Rubinov2010}. In recent work, \citetA{Wang2017,Fardet2021} point out that other definitions are more desirable for networks with highly heterogeneous weight distribution or numerous spurious edges with low weights.

The \textbf{shortest path length} from $v_i$ to $v_j$ quantifies the minimal number of traversed edges to move from $v_i$ to $v_j$. The shortest path length at $v_i$ is the average shortest path length from $v_i$ to any other node. Transport networks are efficient when they have small shortest path lengths. Neuroscientists often point out small-world behaviour in functional brain networks as a justification for the brain's efficiency \citepA{Bullmore2012}, robustness \citepA{LevitBinnun,Neal2017} and compositionality. Small-worldness is characterized by large clustering coefficients and low shortest path lengths. Treatments of small-worldness should be handled with care \citepA{Bialonski2010,Hlinka2017}.

The \textbf{betweenness centrality} of a node $v_i$ measures the centrality of $v_i$ in terms of how many shortest paths in the network contain $v_i$. Let $\sigma_{j,k}$ denote the number of shortest paths from $v_j$ to $v_k$ in the network, and $\sigma_{j,k}(v_i)$ the number of these paths that also contain $v_i$. The betweenness centrality is then given by $\sum_{j,k\neq i}\frac{\sigma_{j,k}(v_i)}{\sigma_{j,k}}$. Similar to shortest path length and small worldness, we find that betweenness can be heavily distorted both locally as well as globally by single false edges.

\textbf{Ricci curvature} of an edge describes how the connectivity of its network neighborhood differs from the
connectivity of a regular grid. Positive curvature indicates a highly intraconnected neighborhood or a within cluster edge, and negative curvature indicates that an edge connects different communities \citepA{Strnad2022}. Forman-Ricci curvature \citepA{Forman2003} and Ollivier-Ricci curvature \citepA{Ollivier2010} are two numerical approaches to approximate the Ricci curvature of an edge, where Forman-Ricci curvature is computationally cheaper while Ollivier-Ricci curvature is more accurate.

In climate networks, the nodes usually represent locations. Hence our network is naturally embedded in 3D space and edges have a length given by the geodesic distance between the two endpoints. In traffic networks there is a cost associated with forming edges, so that the total link length should be minimized in the network design. In similarity networks, closeby points are generally similar. This local connectivity is the baseline behaviour when analysing spatio-temporal data. Of particular interest are connected nodes that are far apart. Those indicate climatic teleconnections. An important summary statistic of the spatial properties of a network is its \textbf{link length distribution}, which subsumes the lengths of all edges in the network. Given the roughly spherical geometry of the Earth, in an infinitesimally fine grid, the number of potential links scales as $\sin(d)$ with link length $d$ in radians. When dividing the link length distribution by $\sin(d)$ (or, for a finite grid, by the number of potential links at each distance), we see how many of the potential links are formed at a given distance. For spatiotemporal data, we expect values close to $1$ for small lengths and values close to $0$ for large lengths. Random link distribution results in a roughly uniform distribution across lengths, hence a sinusoidal distribution of links in the original link length distribution. Crucially, the link length distribution does not convey any information about link bundling behaviour (cf. Fig. 16). An extensive discussion of link bundles can be found in Section 3c and supplemental Section \ref{sec:linkbundles}.

We call a grid \textbf{isotropic} when the point density is constant over the sphere, or alternatively when the distance distribution from one grid point to all others does not depend on the considered grid point. While a perfectly isotropic grid does not exist for the sphere, there exist approximately isotropic grids, like the one we generate with the Fekete algorithm \citepA{Bendito2007} or the Fibonacci grid. In a \textbf{Gaussian grid}, the grid points are equally spaced across latitudes. In a regular Gaussian grid the number of gridpoints is constant across latitudes. While they traditionally are the most popular grid type, regular Gaussian grids have diverging densities around the poles and can therefore induce complex distortions when used in climate networks.


\subsection{Similarity Measures and Estimators}

Most commonly climate and neuroscientists calculate the cross-correlation between time series $\{X_t\}_{t=1,\dots,n}$ and $\{Y_t\}_{t=1,\dots,n}$ of length $n$, defined as $\frac{1}{n}\sum_{t=1}^n X_t Y_t$. Statistically, this is imprecise terminology: While they \textit{estimate} the Pearson (cross-)correlation between the time series with the \textbf{\textit{empirical} Pearson correlation estimator} $\frac{1}{n}\sum_{t=1}^n X_t Y_t$, there are other estimators that might result in estimates that come closer to the ground truth values we aim to approximate.
For high-dimensional problems, the empirical correlation matrix is a very bad estimator: the eigenvalue spectrum of empirical correlation matrices is systematically distorted, following the Marcenko-Pastur law \citepA{Marcenko1967}. A long history of correlation estimation research \citepA{Fan2015,Pourahmadi2013} has proposed various approaches such as banding, tapering, Lasso-regularized maximum likelihood and eigenvalue shrinkage to correct these empirical distortions. One particularly popular estimator is the Ledoit-Wolf estimator \citepA{Ledoit2004}, which has a recent non-linear advancement \citepA{Ledoit2020} claimed to accommodate dimensions up to $p=10^4$. In Figure 3c we find that the linear Ledoit-Wolf estimator approximates the correlation matrix much better than empirical Pearson correlation. The \textbf{linear Ledoit-Wolf estimator} is defined as $\rho_1 \hat{S}_n + \rho_2 I_d,$ where $\hat{S}_n$ is the empirical covariance matrix, $I_d$ the identity matrix and the optimal choice of $\rho_1,\rho_2 \in \IR$ can be found in \citetA{Ledoit2004}. Estimators from spatial statistics use the spatial information of the nodes and are therefore more appropriate for random field data \citepA{Li2009,Clifford1989}. In most geostatistical models only one measurement can be made at each location; exceptions include maximum likelihood estimation of space-time variograms \citepA{Bevilacqua2012} and nearest neighbor Gaussian process models \citepA{Datta2016}. Designing anisotropic spatial correlation estimators for finely gridded time series in spherical geometry remains an important direction of future work \citepA{Chen2021,Raymaekers2021} for climate network and EOF construction. Recently, estimation of partial correlations in Bayesian or causal network approaches \citepA{Zerenner2014,Runge2019b,Gerhardus2020} has gained popularity, with the goal of distinguishing direct from indirect influences. The precision matrix can be estimated to recover conditional independences. The same estimation difficulties arise in this context in high dimension and suitable estimators should be carefully chosen \citepA{Lam2020}.

Spearman's rank correlation coefficient, short \textbf{Spearman correlation}, is another similarity measure, which measures not only linear dependence structures, but all monotonous dependences. It is given by the Pearson correlation between the time series of ranks $\{R(X_t)\}_{t=1,\dots,n}$ of $\{X_t\}_{t=1,\dots,n}$ and $\{R(Y_t)\}_{t=1,\dots,n}$ of $\{X_t\}_{t=1,\dots,n}$. It is invariant under monotonic transformations of the data. A measure of rank correlation with better finite-sample properties is given by Kendall's $\tau$ \citepA{Gilpin1993}.

\textbf{Mutual information} between two random variables $X$ and $Y$ is given by \[
MI(X,Y)=KL(P^{X,Y}, P^X \times P^Y),\]
where $KL$ denotes the Kullback-Leibler divergence, $P^{X,Y}$ denotes the joint distribution of $X$ and $Y$, and $P^X \times P^Y$ the joint distribution of $X$ and $Y$ if they were independent. It measures to which extent the two random variables are dependent. Instead of only measuring linear or monotonic relationships, mutual information detects any kind of dependency. Traditionally, binning estimators of mutual information have been popular, but these are highly sensible to the number of bins and often heavily biased \citepA{Paninski2003}. The KSG estimator \citepA{Kraskov2003} (based on a kNN approach) has been observed to be robust to the choice of its hyperparameter k \citepA{mieval}. Extensions of the KSG estimator include a bias-corrected version \citepA{gao2016demystifying} and an adaptation to discrete-continuous mixtures \citepA{ksgmixed} such as precipitation. Without distributional assumptions there exists no convergence rate that can be guaranteed in MI estimation \citepA{Antos2001}, so we still have to expect unprecise estimates. Especially for highly autocorrelated time series it is known that a lot of false positives \citepA{Runge2012,Runge2014} will arise. While evaluating the robustness of empirical MI networks, it is also worth considering other similarity measures that capture arbitrary dependences. \citetA{Romano2018} compares various information theory- and correlation-based measures. The alternative we employed is the \textbf{Hilbert Schmidt Independence Criterion} (HSIC) for random processes \citepA{Chwialkowski2014}. It measures dependence after a mean embedding of the random variables in some reproducing kernel Hilbert space. For more details see also \citetA{Gretton2012}.

\subsection{Link Bundle Definition}
\label{sec:linkbundles}

With the term `link bundle' we want to capture the case when most links are formed between two regions. While there is a lot of work on the visualisation of graphs with focus on edge bundles \citepA{Lambert2010, Hurter2012,Selassie2011, Nocke2015}, our purpose does not demand a 2D-visualisation. We are simply interested in the cumulative edge weight between two regions of interest. As we have not found a definition of link bundles in the context of spatially embedded graphs in the literature, we define our notion of link bundle over spatial graphs $(V,E)$ with adjacency matrix $A$, where $(V,d)$ is a metric space, as follows.

First define the $\varepsilon$-ball around $v$ as $B_\varepsilon(v):= \{ u\in V | d(u,v)\le \varepsilon \}$. Let $\rho_\eps(v_i,v_j)$ be the number of edges between $B_\varepsilon(v_i)$ and $B_\varepsilon(v_j)$ in the complete graph. It holds that $\rho_\eps(v_i,v_i)=\binom{|B_\eps(v_i)|}{2}$. When $B_\varepsilon(v_i)\cap B_\varepsilon(v_j)=\emptyset$, then $\rho_\eps(v_i,v_j)= |B_\varepsilon(v_i)|\cdot |B_\varepsilon(v_j)|$. We denote the cumulative weights between neighborhoods of $v_i$ and $v_j$ by
\[
W_\eps(v_i,v_j) := \sum_{k,l:\: v_k\in B_\varepsilon(v_i), v_l\in B_\varepsilon(v_j)} |A_{kl}|,
\]
where $A$ denotes the adjacency matrix. Our notions of link bundles measure whether the mean weights between regions (per possible link) exceed a prespecified threshold. Whether the graph is weighted or unweighted makes no difference in the definition. Of course weights allow for a more fine-grained differentiation of link bundle strength (Fig. 8). We allow for a tolerance ($1-c$ for unweighted graphs) of unformed edge weight as link bundles of a certain density might already be considered significant. Demanding complete link bundles might be unrealistic and sensitive to noise in practical settings.

A first idea might be a one-to-many notion, where we demand that a node $v_i$ is connected to all nodes around a point $v_j$ within a given radius $\varepsilon$. We say that there is a \textbf{$(\varepsilon,c)$-one-to-many link bundle} from $v_i$ to $v_j$ in $(V,E,A)$, if
\[
\frac{1}{|B_\varepsilon(v_i)|}\sum_{k: \: v_k\in B_\varepsilon(v_i)} |A_{kj}| \geq c.
\]
We observe such one-to-many link bundles in networks dominated by extreme events and more generally when one region is locally correlated and the other point spuriously aligns with that region while differing from its surrounding. More natural might be to consider neighborhoods around both endpoints of the link bundle. We define a \textbf{$(\varepsilon,c)$-many-to-many link bundle} between $v_i$ and $v_j$ in $(V,E,A)$, if
\[
\frac{W_\eps(v_i,v_j)}{\rho_\varepsilon(v_i,v_j)} \geq c.
\]
In applications with anisotropic link density across space, we might want to consider teleconnections to be significant if the link density between neighborhoods is large compared to the density within the neighborhoods.
There is a \textbf{$(\varepsilon,c)$-locally-weighted many-to-many link bundle} between $v_i$ and $v_j$ in $(V,E,A)$, if
\[
\frac{W_\eps(v_i,v_j)}{\rho_\varepsilon(v_i,v_j)} \geq \frac{c}{2}\left(\frac{W_\eps(v_i,v_i)}{\rho_\varepsilon(v_i,v_i)} + \frac{W_\eps(v_j,v_j)}{\rho_\varepsilon(v_j,v_j)} \right).
\]
The notion of locally weighted link bundles is also sensible to significant teleconnections between endpoints of low link density in settings of anisotropic density. Other utilisation of intra-regional link densities is conceivable. A locally weighted version of 1-to-many link bundles can be defined analogously. In order to find link bundles in practice, we propose the distance metric between two edges $(x_1,y_1)$ and $(x_2,y_2)$, where $x_1, y<1,x_2,y_2$ are nodes,

\begin{align}
    d_E\left((x_1,y_1),(x_2,y_2)\right) = \min\left\{d(x_1,x_2) + d(y_1,y_2),\quad d(x_1,y_2) + d(y_1,x_2)\right\} \label{eq:dphi}
\end{align}
in the undirected case. For the directed case, only take $d_E\left((x_1,y_1),(x_2,y_2)\right) = d(x_1,x_2) + d(y_1,y_2)$. In this distance two edges are close in feature space if and only if both ends of the edges are close together. Clusters found with for example hierarchical clustering over the distance matrix between edges would then constitute link bundles.

Using the weighted notions of link bundle, $c$ has to be chosen smaller to detect link bundles, when the similarity values are typically smaller than $1$. The locally weighted notion of link bundle requires less adaptation of $c$ since it captures the edge weights characteristic to the measured random field. Analysing the edge weights in our setting indicates that the spuriously included links only lie marginally above the threshold and can therefore be distinguished from strong links. However, a ground-truth teleconnection of intermediate strength might still be difficult to distinguish from spurious bundles. While the notions of weighted link bundles offer a more fine-grained differentiation, questions of significance have to be answered differently. Although some spurious locally weighted many-to-many link bundles occur in the weighted networks, they are a lot less frequent under suitably chosen $c$. For large densities, the notions of locally weighted and absolute many-to-many link bundles become equivalent in unweighted networks, as the local neighborhoods become completely connected. The notion of locally weighted link bundle becomes problematic in sparse graphs, when the neighborhood connectivity structure is not sufficiently established.

\section{Details of the Stochastic Ground Truth Model for Spatio-Temporal Data}
\label{sec:migrf_details}

\subsection{Time Dependence via Vector Autoregression}\label{sec:time_dep}
As the typically used data sets only resolve a fixed time resolution, we do not have to employ the computationally expensive space-time covariance functions, but opt for a vector autoregression VAR(1). As introduced in Algorithm 1, we denote the available measurements $(X_{1, t},\dots,X_{p, t})\sim N(0, \Sigma)$ of the MIGRF at time $t$ evaluated on the finite grid $\{v_i\}_{i\in [p]}\subset S^2$ with $\Sigma_{ij}=k(|v_i-v_j|)$. We generate $X_{i, t}$, for $i\in [p]$, via
$$ X_{i, t} = a_i \cdot X_{i, t-1} + \varepsilon_{i,t}, $$
where the innovations $(\varepsilon_{1,t},\dots,\varepsilon_{p,t}) \sim N(0,\Sigma^\varepsilon)$ are i.i.d. in time and $a_i$ with $|a_{i}|<1$ denotes the lag-1 autocorrelation at node $v_i$. What is left to do is to design an innovation covariance matrix $\Sigma^\varepsilon$ that generates the desired covariance structure $\Sigma$ of the MIGRF. A well-defined $\Sigma^\varepsilon$ does not exist for all choices of $\Sigma$ and $a$. Whenever it exists, it is given by $\Sigma^\varepsilon_{ij} = \Sigma_{ij} (1-a_{i}\cdot a_{j})$. This formula  can be derived by defining $A\in\IR^{p\times p}$ as the diagonal matrix with $A_{ii}=a_i$ and using
$$\begin{pmatrix}
  X_{1,t}\\ 
  X_{2,t}\\
  \vdots\\
  X_{p,t}
\end{pmatrix} = \sum_{k=0}^\infty A^k \begin{pmatrix}\varepsilon_{1, t-k}\\ \varepsilon_{2, t-k}\\ \vdots\\ \varepsilon_{p, t-k} \end{pmatrix},$$
which yields,
$$\begin{pmatrix}
  X_{1,t}\\ 
  X_{2,t}\\
  \vdots\\
  X_{p,t}
\end{pmatrix}\sim N\left(0, \sum_{k=0}^\infty A^k \Sigma^\varepsilon A^k \right)= N\left(0,\Bigl(\frac{\Sigma^{\varepsilon}_{ij}}{1-A_{ii}\cdot A_{jj}} \Bigr)_{ij}\right)\overset{!}{=}N(0,\Sigma).~$$
Solving for $\Sigma^\varepsilon$ then leads to the desired formula. 

It is easy to see that when all nodes $v_i$ have the same autocorrelation, $\Sigma^\varepsilon$ is a well-defined covariance matrix. In other cases, the above defined matrix $\Sigma^\varepsilon$ might have negative eigenvalues, making it unsuitable as a covariance matrix. As a remedy, the negative eigenvalues of $\Sigma^\varepsilon$ could be shifted to a small positive constant to make it positive definite. This procedure induces a shift in the spatial covariance $\Sigma$, that would need to be quantified. The specified autocorrelation structure remains exact by design. In our experiments in Section 3d, we initialize a random half of the points with $A_{ii}=0.2$ and the other half with $A_{ii}=0.7$. The resulting $\Sigma^\varepsilon$ is not positive definite, but we find a valid correlation matrix by shifting all negative eigenvalues of $\Sigma^\varepsilon$ to $10^{-8}$. This shift does not introduce a notable bias into the spatial ground truth correlations.

\subsection{Mat\'ern Covariance} \label{sec:matern}

The Mat\'ern covariance function appears in many fields \citepA{Whittle1954,Guttorp2006}. A random process on $\IR^d$ with Mat\'ern covariance function is a solution to the stochastic partial differential equation \citepA{Lindgren2011} \[
(\kappa^2 - \Delta)^{\alpha/2} G(x) = \phi W(x),
\]
where $W(x)$ is Gaussian white noise with unit variance, $\Delta=\sum_{i=1}^d \frac{\partial^2}{\partial x_i^2}$ is the Laplace operator, and $\alpha=\nu+d/2$. Therefore it regularly demonstrates its efficacy of modelling physical processes in spatial statistics \citepA{Bevilacqua2020,Stein2011}, and can be adjusted to model non-stationary, non-isotropic, oscillating and non-separable random fields \citepA{Lindgren2011}.

On $\IR^D$, it is given by \[
C_{\nu,\ell}(d)=\sigma^2 \frac{2^{1-\nu}}{\Gamma(\nu)}\left( \sqrt{2\nu} \frac{d}{\ell}\right)^\nu K_\nu \left(\sqrt{2\nu} \frac{d}{\ell}\right),
\]
where $\Gamma$ is the gamma function and $K_\nu$ is the modified Bessel function of second kind. Its smoothness parameter $\nu$ and scale parameter $\ell$ make it flexible as well as interpretable. The scale parameter $\ell$ determines how far the region of high correlation extends spatially. $\nu>0$ determines the smoothness of the random field; so how well the random field behaves locally. Large $\ell$ and large $\nu$ result in less fluctuation across space. A Gaussian process with Mat\'ern covariance is $\lfloor \nu \rfloor-1$ times differentiable in the mean-square sense. When $\nu=n+1/2$ for some $n\in\IN$, $C_{\nu,\ell}$ can be written as a product of an exponential and a polynomial of order $n$. For $\nu\to\infty$, it approximates the squared exponential function. So by varying $\nu$, we can interpolate between the absolute exponential kernel and the Gaussian radial basis function (\citealp{Stein1999}, ch. 2.10).

Unfortunately, the Mat\'ern is positive definite with great circle distance only if $\nu\le 1/2$. \citetA{Guinness2016} studies the use of several Mat\'ern-like covariance functions on the sphere and finds that they all fit smooth and non-smooth meteorological datasets more flexibly than alternative classes of covariance functions which respect to spherical geometry. We follow their recommendation in using the computationally efficient \textit{chordal Mat\'ern}. The chordal Mat\'ern is simply the restriction of the Mat\'ern in $\IR^{3}$ to $S^2$. An adaptation to great circle distance, called \textit{circular Mat\'ern}, shows no practical gain. The authors find maximum likelihood estimates for $\nu$ close to $0.42$ for ozone data and close to $1.46$ for $10$m height surface temperature outputs from a single run of the Community Climate System Model Version 4 (CCSM4). We employ realistic values of $\nu=0.5$ and $\nu=1.5$ in our simulations, being mostly interested in the effect of varying $\nu$ and $\ell$.

In practice, measurements from a random field are only available on a fixed, finite grid. Instead of sampling the entire random field via the Karhunen Loeve expansion \citepA{Lang2015}, we can simply consider the multivariate Gaussian on the grid points, at each time step.

\subsection{Choice of Smoothness $\nu$ and Length Scale $\ell$}\label{sec:nu_ell}

We choose $\nu$ and $\ell$ to reflect realistic values for climatic time series as well as to point out their influences on the estimation procedure. Our choices of $\nu\in\{0.5,1.5\}$ and $\ell\in\{0.1,0.2\}$ (in radians) reflect a realistic range for climatic variables \citepA{Guinness2016}. Stronger smoothness only induces subtle differences. Table \ref{table:nuellchoice} shows network-based smoothness and length scale properties of the MIGRF as well as climatic data.

 Observe that our parameter choices cover a realistic range. Given similar smoothness and length scale, the network density on simulated data is smaller as systematic teleconnections are missing.

\begin{table}[H]
\small
\centering
\begin{tabular}{ |c||c|c|c|c||c|c|c|c|c| }
\hline
\multirow{3}{*}{\minitab[c]{Random field\\ property}} & \multicolumn{4}{c||}{Simulated data} & \multicolumn{5}{c|}{Real data}\\
 & \multirow{2}{*}{\minitab[c]{\hspace{-2mm}$\nu=0.5$,\hspace{-2mm}\\ \hspace{-2mm}$\ell=0.1$\hspace{-2mm}}\hspace{-2mm}} & \multirow{2}{*}{\minitab[c]{\hspace{-2mm}$\nu=0.5$,\hspace{-2mm}\\ \hspace{-2mm}$\ell=0.2$\hspace{-2mm}}\hspace{-2mm}} & \multirow{2}{*}{\minitab[c]{\hspace{-2mm}$\nu=1.5$,\hspace{-2mm}\\ \hspace{-2mm}$\ell=0.1$\hspace{-2mm}}\hspace{-2mm}} & \multirow{2}{*}{\minitab[c]{\hspace{-2mm}$\nu=1.5$,\hspace{-2mm}\\ \hspace{-2mm}$\ell=0.2$\hspace{-2mm}}\hspace{-2mm}} & \multirow{2}{*}{\minitab[c]{pr}} & \multirow{2}{*}{\hspace{-2mm}\minitab[c]{dt2m}\hspace{-2mm}} & \multirow{2}{*}{\minitab[c]{t2m}} & \multirow{2}{*}{\minitab[c]{sp}} &  \multirow{2}{*}{\hspace{-2mm}\minitab[c]{z500}\hspace{-2mm}}  \\
 & & & & & & & & &  \\
 \hline
 Avg. local corr. & 0.544 & 0.734 & 0.704 & 0.892 & 0.520 & 0.753 & 0.826 & 0.946 & 0.953 \\
 Decorr. len., $\tau=0.2$ & 0.172 & 0.348 & 0.190 & 0.383 & 0.139 & 0.271 & 0.329 & 0.571 & 0.453 \\
 Decorr. len., $\tau=0.5$ & 0.076 & 0.152 & 0.104 & 0.216 & 0.059 & 0.141 & 0.183 & 0.346 & 0.305 \\
 Density, $\tau=0.2$ & 0.031 & 0.057 & 0.031 & 0.057 & 0.009 & 0.031 & 0.093 & 0.150 & 0.246 \\
 Density, $\tau=0.5$ & 0.001 & 0.005 & 0.002 & 0.010 & 0.002 & 0.006 & 0.013 & 0.047 & 0.117 \\
 \hline
\end{tabular}
\caption{Distributional smoothness and length scale properties of simulated and real climatic random field data. For the \textit{average local correlation} we compute the average correlation inside $\eps$-balls around all vertices. We choose the radius $\eps=5^\circ$ as in Fig. 8. For the \textit{network density} we construct unweighted Pearson correlation networks with threshold $\tau$. Our notion of \textit{decorrelation length} measures the length scale as a network feature. For each node, it is given by the minimal radius (in radians) at which the connectivity of the network inside an $\eps$-ball drops below $c=0.8$ (smaller than $1$ for robustness), finally average over all nodes. The table shows average values over 30 independent realizations. For real data we construct a single network from all observations.}
\label{table:nuellchoice}
\end{table}

\section{Why Spurious Links Occur in Bundles}
\label{sec:false_edges}

The following proposition formalizes why one spuriously large similarity estimate incurs correlated similarity estimates to be spuriously large as well, with non-negligible probability. We assume Gaussianity for simplicity. $\tau$ represents the threshold above which a link is included in the network.

\begin{prop}[\textbf{Link probability given the presence of another link}] \label{prop2}
Let $\hat{S}_1,\hat{S}_2 \sim N(\mu,\Sigma)$ be estimates of similarity values on two edges with variances $\sigma_1^2=\Sigma_{11},$ $\sigma_2^2=\Sigma_{22}$ and correlation $\rho=\frac{\Sigma_{12}}{\sigma_1 \sigma_2}$. Then, with $\eps\in\IR$,
\[
P\Big(\hat{S}_1>\tau \;\;\Big|\;\; \hat{S}_2 > \tau + \eps\Big) \geq \Phi\left( \frac{\frac{\sigma_1}{\sigma_2}\rho (\tau+\eps-\mu_2) +\mu_1 - \tau}{\sqrt{1-\rho^2} \sigma_1} \right),
\]
where $\Phi$ is the cumulative distribution function of the standard normal distribution.
\end{prop}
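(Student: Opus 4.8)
The plan is to reduce the bivariate tail question to the one-dimensional conditional law of $\hat{S}_1$ given $\hat{S}_2$ and then exploit monotonicity. First I would recall that for the bivariate Gaussian $(\hat{S}_1,\hat{S}_2)\sim N(\mu,\Sigma)$ the conditional distribution of $\hat{S}_1$ is again Gaussian,
\[
\hat{S}_1 \mid \{\hat{S}_2 = s\} \;\sim\; N\!\left(\mu_1 + \rho\tfrac{\sigma_1}{\sigma_2}(s-\mu_2),\; \sigma_1^2(1-\rho^2)\right),
\]
so that the conditional tail probability is the explicit function
\[
g(s) := P(\hat{S}_1 > \tau \mid \hat{S}_2 = s) = \Phi\!\left( \frac{\mu_1 + \rho\frac{\sigma_1}{\sigma_2}(s-\mu_2) - \tau}{\sqrt{1-\rho^2}\,\sigma_1} \right).
\]

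Next I would rewrite the quantity of interest as an average of $g$ over the truncated law of $\hat{S}_2$, using the tower property:
\[
P(\hat{S}_1 > \tau \mid \hat{S}_2 > \tau+\eps) = \IE\!\left[\, g(\hat{S}_2) \;\middle|\; \hat{S}_2 > \tau + \eps \,\right].
\]
The heart of the argument is then a single monotonicity observation. The argument of $\Phi$ inside $g$ is an affine function of $s$ with slope $\rho/(\sigma_2\sqrt{1-\rho^2})$, which is nonnegative precisely when $\rho \geq 0$; since $\Phi$ is increasing, $g$ is then nondecreasing. Hence on the conditioning event $\{\hat{S}_2 > \tau+\eps\}$ we have $g(\hat{S}_2) \geq g(\tau+\eps)$ pointwise. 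Taking conditional expectations preserves this inequality, and because $g(\tau+\eps)$ is a constant it passes through the averaging unchanged, yielding exactly the claimed lower bound $\Phi\!\left(\frac{\frac{\sigma_1}{\sigma_2}\rho(\tau+\eps-\mu_2)+\mu_1-\tau}{\sqrt{1-\rho^2}\,\sigma_1}\right)$.

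The one point that genuinely needs care is the sign of $\rho$: the inequality as stated holds only when the two similarity estimates are positively correlated, which is exactly the regime the proposition is meant to capture (neighboring edges over a locally correlated field). For $\rho < 0$ the slope inside $\Phi$ becomes negative, the monotonicity reverses, and the identical computation produces the opposite inequality; so I would make the hypothesis $\rho \geq 0$ explicit, or at least flag that it is implicit in the modeling context. Everything else --- the conditional Gaussian formula, the tower-property rewriting, and the evaluation at $s = \tau+\eps$ --- is routine, so beyond correctly tracking the affine argument of $\Phi$ I expect no further obstacle.
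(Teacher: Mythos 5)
Your proof is correct, and it is in fact more careful than the one in the paper. The paper's own argument first writes the bound $P(\hat{S}_1>\tau \mid \hat{S}_2 > \tau + \eps) \geq \inf_{s\geq \tau+\eps} P(\hat{S}_1>\tau \mid \hat{S}_2 > s)$, asserts the infimum is attained at $s=\tau+\eps$ (which just returns the original quantity), and then treats $\hat{S}_1$ conditioned on the \emph{event} $\{\hat{S}_2>\tau+\eps\}$ as if it were Gaussian with mean $\tilde{\mu}(\tau+\eps)$ and variance $\tilde{\sigma}^2$ --- i.e.\ as if one had conditioned on the exact value $\hat{S}_2=\tau+\eps$. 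That conditional law is actually a mixture of the laws $N(\tilde{\mu}(s),\tilde{\sigma}^2)$ over the truncated distribution of $\hat{S}_2$, so the paper's chain of equalities is not literally valid. Your tower-property decomposition $P(\hat{S}_1>\tau\mid \hat{S}_2>\tau+\eps)=\IE\big[\,g(\hat{S}_2)\mid \hat{S}_2>\tau+\eps\,\big]$, combined with the observation that $g$ is nondecreasing when $\rho\ge 0$ so that $g(\hat{S}_2)\ge g(\tau+\eps)$ pointwise on the conditioning event, is exactly the missing ingredient that turns the paper's asserted equality into the correct inequality. You are also right to flag the sign condition: for $\rho<0$ the map $g$ is decreasing and the bound reverses (for instance with strongly negative $\rho$, $\mu_1=\mu_2=0$, $\sigma_1=\sigma_2=1$, $\tau=\eps=0$, the left-hand side is near $0$ while the right-hand side equals $1/2$), so the hypothesis $\rho\ge 0$ genuinely must be added to the statement; the paper leaves it implicit, relying on the modeling context of positively correlated neighboring edge estimates. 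The only cosmetic difference is that the paper normalizes by $\tilde{\sigma}^2$ where it means $\tilde{\sigma}$, but its final expression agrees with yours.
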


The resulting probability becomes arbitrarily large when $\hat{S}_2$ becomes large (by increasing $\eps$) and the correlation $\rho$ between the edge estimates is large. This explains both the occurence of link bundles as well as spuriously dense regions in the empirical networks.

\begin{proof}
For any $s\in\IR$, it holds that $\hat{S}_1|\hat{S}_2 = s \sim N\Big(\tilde{\mu}(s),\; \tilde{\sigma}^2\Big)$ with $\tilde{\mu}(s):=\mu_1 + \frac{\sigma_1}{\sigma_2}\rho (s-\mu_2)$ and $\tilde{\sigma}^2:=(1-\rho^2)\sigma_1^2$. Then, since $\left(\frac{\hat{S}_1-\tilde{\mu}(\tau+\eps)}{\tilde{\sigma}^2} \;\;\Big|\;\; \hat{S}_2 > \tau+\eps\right) \sim N(0,1)$ and with $\Phi(x)=1-\Phi(-x)$ for all $x\in\IR$, we get,

\begin{align*}
    P\Big(\hat{S}_1>\tau \;\;\Big|\;\; \hat{S}_2 > \tau + \eps\Big) &\geq \inf_{s\geq \tau+\eps} P\Big(\hat{S}_1>\tau \;\;\Big|\;\; \hat{S}_2 > s\Big)= P\Big(\hat{S}_1>\tau \;\;\Big|\;\; \hat{S}_2 > \tau+\eps\Big)\\
    &=P\Big(\frac{\hat{S}_1-\tilde{\mu}(\tau+\eps)}{\tilde{\sigma}^2}>\frac{\tau-\tilde{\mu}(\tau+\eps)}{\tilde{\sigma}^2} \;\;\Big|\;\; \hat{S}_2 > \tau+\eps\Big)\\
    &=1-\Phi\Big(\frac{\tau-\tilde{\mu}(\tau+\eps)}{\tilde{\sigma}^2}\Big)=\Phi\Big(\frac{\tilde{\mu}(\tau+\eps)-\tau}{\tilde{\sigma}^2}\Big)\\
    &=\Phi\left( \frac{\frac{\sigma_1}{\sigma_2}\rho (\tau+\eps-\mu_2) +\mu_1 - \tau}{\sqrt{1-\rho^2} \sigma_1} \right).
\end{align*}
\end{proof}

For a more involved theoretical analysis, a realistic assumption might be that the correlation structure varies across the metric space and there can be ground truth teleconnections of intermediate strength, but data over nearby points universally shows the highest correlations. In such generality, relevant for the correctness of the estimation procedure is still only the joint distribution of edge weight estimates $\{\hat{S}_e | e \in E \}$. Future work could analyse this joint distribution more closely with respect to the resulting graph features, particularly spurious link bundles and expected false discovery rate.

\section{The importance of the number of effective samples}\label{sec:effective_samples}

The number of time steps $n$ has a dominating effect on errors in the network. With large enough time length $n\to\infty$, eventually all random patterns disappear, as the similarity estimates on each edge become more precise. However, the crucial quantity that determines the estimation variance is not $n$ itself, but the \textit{effective} time length $n_{\text{eff}}$, which also depends on the autocorrelation structure. Let us introduce this quantity formally and quantify the relation between $n$ and $\nef$.

Given two time series $\{X_t\}_{t=1,\dots,n}$ and $\{Y_t\}_{t=1,\dots,n}$ of length $n$, their empirical Pearson correlation (often called cross-correlation) is given by \[
\hat{C}_{xy}=\frac{1}{n}\sum_{t=1}^n X_t Y_t.\]

Now $\hat{C}_{xy}$ does not perfectly estimate the true underlying Pearson correlation between the time series, but has some positive variance. Hence, we define the effective time length $\nef$ as the minimal number of independent observations $\{X'_i\}$ and $\{Y'_i\}$ such that the empirical cross correlation estimator $\frac{1}{\nef}\sum_{i=1}^{\nef} X'_i Y'_i$ has at most the same variance as it has under the autocorrelation structure of the given time series of length $n$.

Below, under some assumptions, we derive the relation \begin{align}
n=\sigma_\infty \cdot \nef, \label{eq:neff}
\end{align}
where $\sigma_\infty$ denotes the asymptotic variance of empirical cross correlation in the central limit theorem. Now observe that $\sigma_\infty$ explodes when the autocorrelation of both time series is large: For two independent AR(1)-processes with variance 1 and lag-1 autocorrelation $\alpha\in(0,1)$ and $\beta\in(0,1)$, respectively, it holds that \[
\sigma_\infty =\sum_{k=-\infty}^\infty Cov(X_1, X_{1+k}) Cov(Y_1,Y_{1+k}) = 1 + 2 \sum_{k=1}^\infty \alpha^k \beta^k = 1 + 2\frac{\alpha \beta}{1-\alpha \beta}.
\]
Note that for $\alpha\to 1$ and $\beta\to 1$, we get $\sigma_\infty\to\infty$. Intuitively, the given time series are essentially a single observation, when the autocorrelation approaches $1$; and correlation can not be estimated with a single observation. Given that the lag-1 autocorrelation patterns of real climatic variables range from negative values to values close to 1 (Fig. 12), the effective length of climatic time series highly depends on the location. Concretely, two AR(1)-time series with lag-1 autocorrelation $\alpha=\beta=0.9$ induce an increased estimation variance of $\sigma_\infty=9.53$, while $\alpha=\beta=0.95$ already results in $\sigma_\infty= 19.51$. Conversely, given $40$ years of monthly observations $n=480$, with the above autocorrelation patterns only yields $\nef=50.39$ and $\nef=24.60$ effective samples, respectively.

\textbf{Derivation of equation (\ref{eq:neff}).} The central limit theorem for the empirical cross-correlation between two independent, normalized stationary time series $\{X_t\}$ and $\{Y_t\}$ (cf. \citep[p. 236ff]{Brockwell1991}) states that, under structural assumptions, $\sqrt{n}\left( \frac{1}{n}\sum_{t=1}^n X_t Y_t \right)$ is asymptotically normal with mean $0$ and variance \[
\sigma_\infty = \sum_{k=-\infty}^\infty Cov(X_1, X_{1+k}) Cov(Y_1,Y_{1+k}).
\]

Because the variance of our estimates decays as $1/n$ in the central limit theorem, we need time series of length $n= \sigma_\infty \cdot \nef$ to reach the same estimation variance as $\nef$ i.i.d. samples for which $\sigma_\infty = 1$.

\section{Our Results Retain Their Validity Under Larger Time Length $n$}\label{sec:large_n}

In this section we analyse how our results evolve with increasing effective time length $n$.

While all estimated quantities better approximate the ground truth, the amount of errors in the networks are still troublesome and our conceptual findings still hold. In particular, the bias stemming from anisotropic autocorrelation only slightly improves with increasing $n$ (Fig. \ref{fig:fig11n500}), because this bias stems from the \textit{relative} difference in estimation variance depending on the autocorrelation strength. Also, few false shortcuts suffice to distort quantities like betweenness and shortest path lengths \textit{globally} (Fig. \ref{fig:fig5n500} and Fig. \ref{fig:fig7n1000}). Weighted networks benefit much quicker from larger $n$ in terms of degree estimation (Fig. \ref{fig:fig7n1000}) and spurious link bundles (Fig. \ref{fig:fig8n500}).

\begin{figure}[H]
    \hfill
    \begin{subfigure}[b]{0.99\textwidth}
    \centering
    \includegraphics[width=\textwidth]{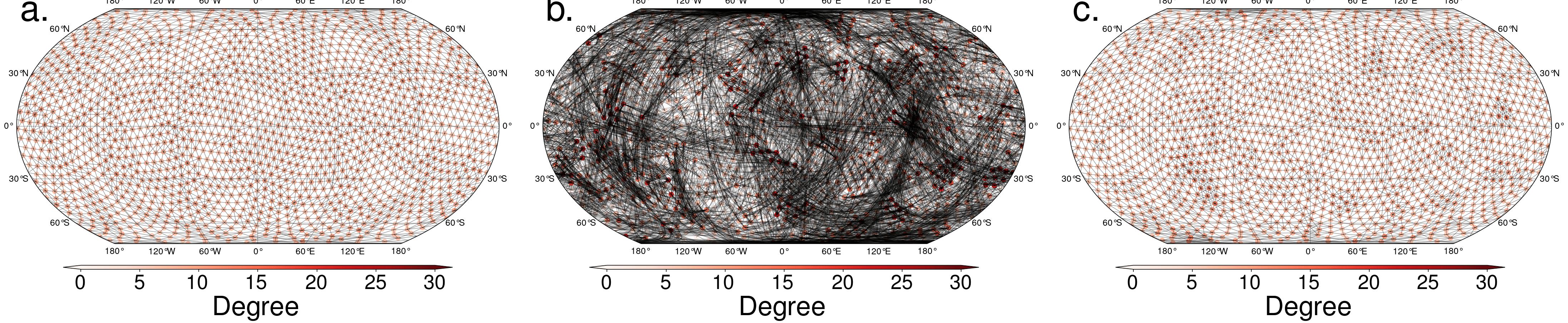}
    \end{subfigure}
    \caption{Same as Fig. 2 but with $n=1000$. The estimated networks look slightly better. But the same conceptual distortions are still clearly present: Empirical Pearson correlation fails as an estimator of heavy-tailed data. High degree clusters emerge as a result of localized correlation structure.}
    \label{fig:fig2n1000}
\end{figure}

\begin{figure}[H]
    \hfill
    \begin{subfigure}[b]{0.99\textwidth}
    \centering
    \includegraphics[width=\textwidth]{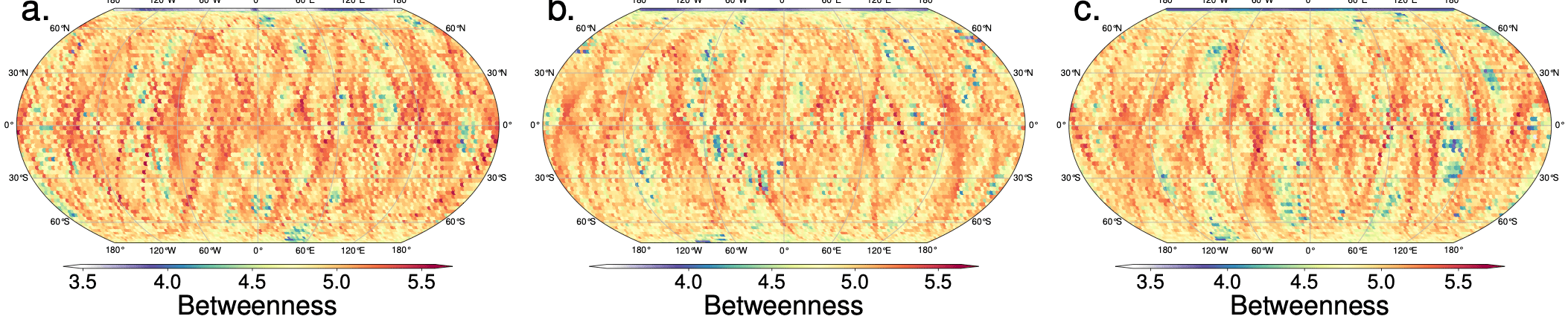}
    \end{subfigure}
    \caption{Same as Fig. 5 but with $n=500$. Few false links are enough to create spurious global betweenness structures.}
    \label{fig:fig5n500}
\end{figure}

\begin{figure}[H]
    \hfill
    \begin{subfigure}[b]{0.99\textwidth}
    \centering
    \includegraphics[width=\textwidth]{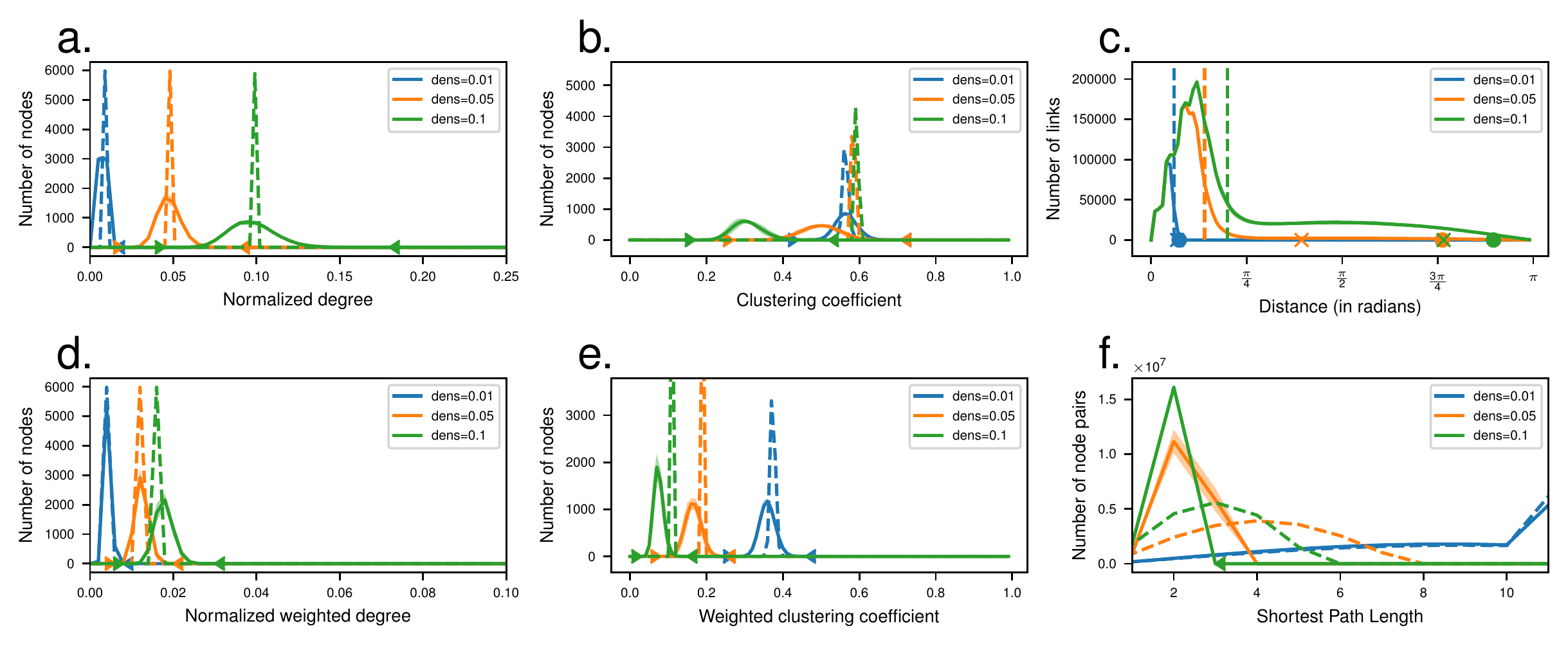}
    \end{subfigure}
    \caption{Same as Fig. 7 but with $n=500$. Compare to Fig. \ref{fig:fig7n1000} and observe all quantities slowly converging to their ground truth.}
    \label{fig:fig7n500}
\end{figure}

\begin{figure}[H]
    \hfill
    \begin{subfigure}[b]{0.99\textwidth}
    \centering
    \includegraphics[width=\textwidth]{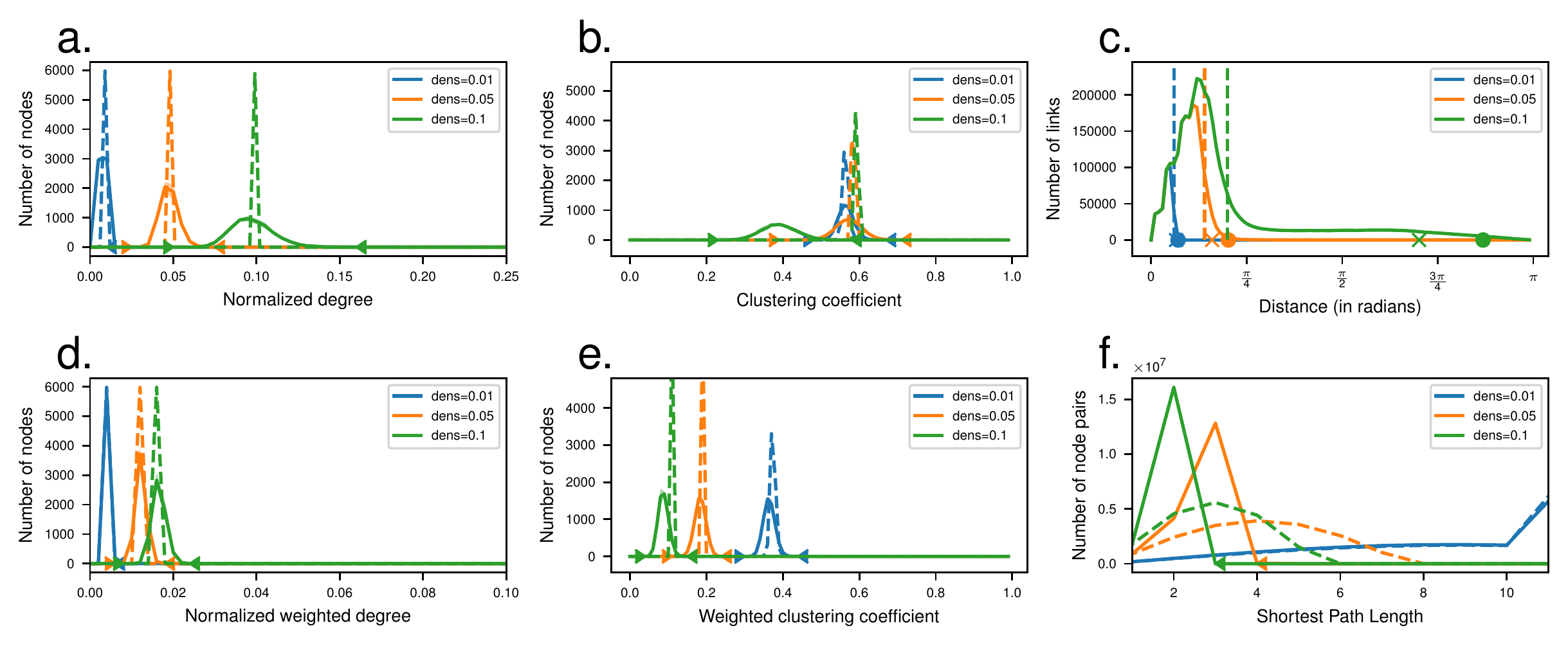}
    \end{subfigure}
    \caption{Same as Fig. 7 but with $n=1000$. Compare to Fig. \ref{fig:fig7n500} and observe all quantities slowly converging to their ground truth. Weighted degree converges faster than unweighted degree. To distort shortest path length single false shortcuts suffice.}
    \label{fig:fig7n1000}
\end{figure}

\begin{figure}[H]
    \hfill
    \begin{subfigure}[b]{0.99\textwidth}
    \centering
    \includegraphics[width=\textwidth]{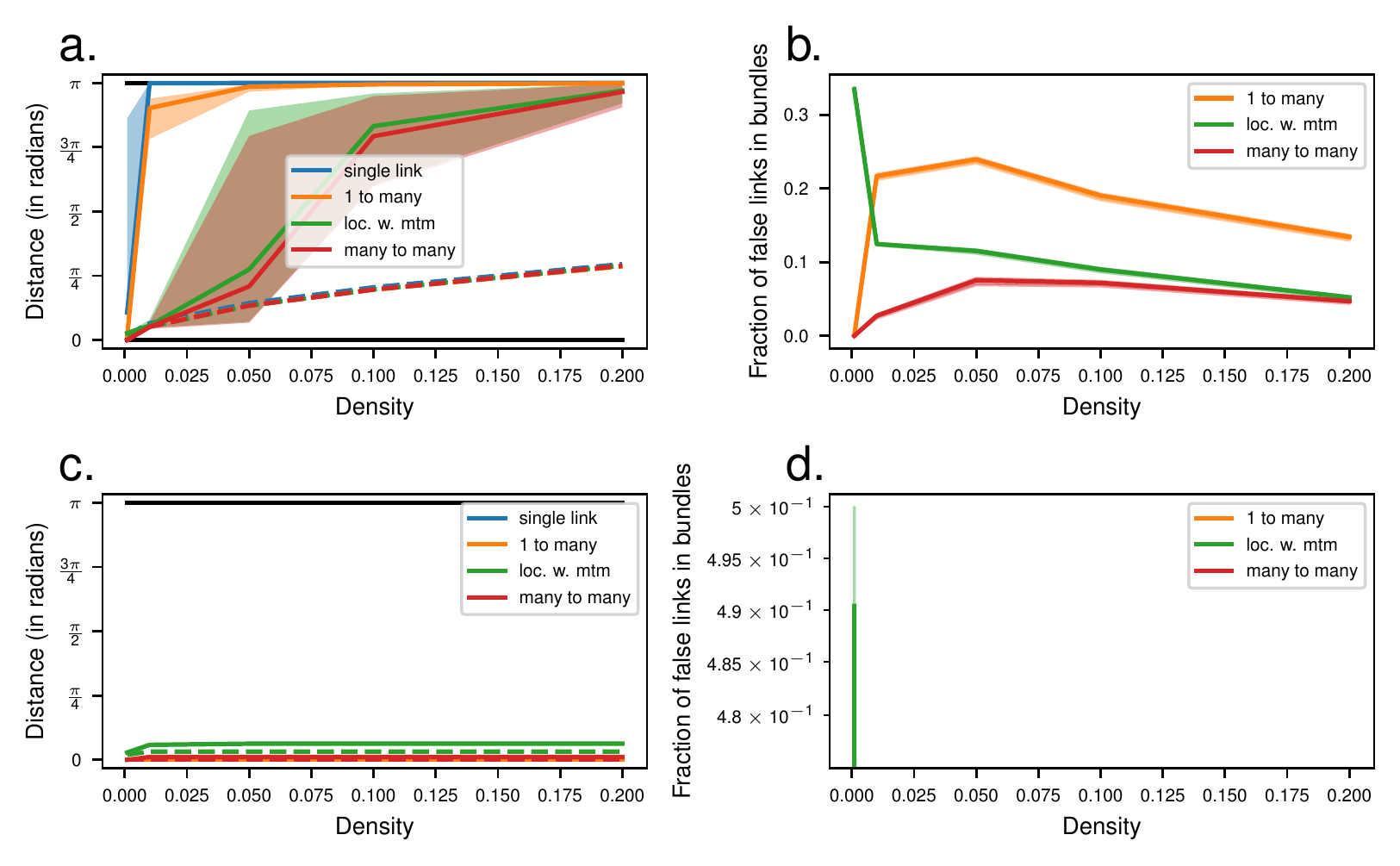}
    \end{subfigure}
    \caption{Same as Fig. 8 but with $n=500$. While the amount and length of unweighted link bundles has not decreased, the edge weights have converged faster, so that the weighted link bundles have disappeared.}
    \label{fig:fig8n500}
\end{figure}

\begin{figure}[H]
    \hfill
    \begin{subfigure}[b]{0.99\textwidth}
    \centering
    \includegraphics[width=\textwidth]{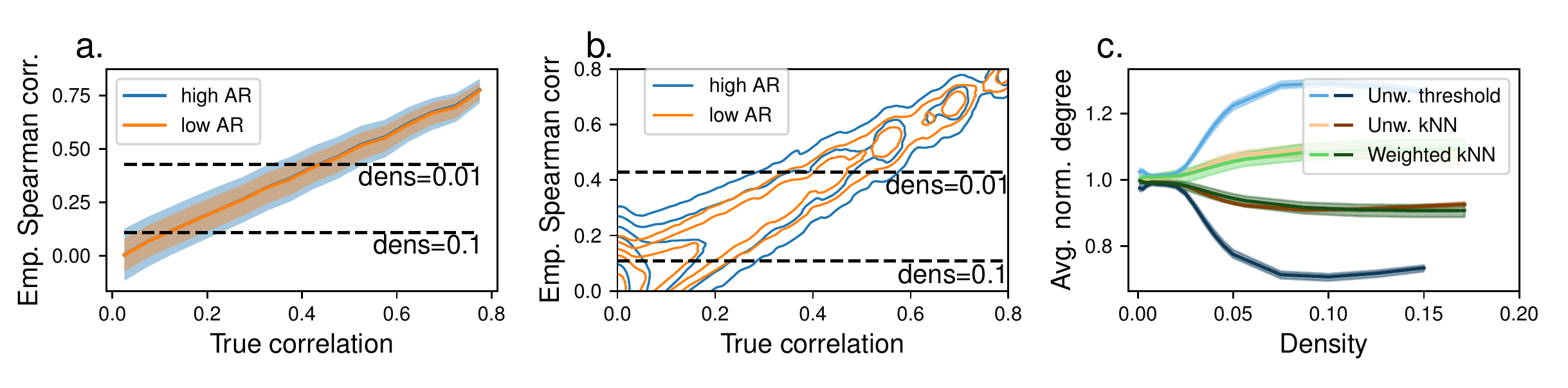}
    \end{subfigure}
    \caption{Same as Fig. 11 but with $n=500$. While the estimation variance has significantly decreased in absolute terms, the relative bias introduced through anisotropic autocorrelation has not improved.}
    \label{fig:fig11n500}
\end{figure}

\section{Further Simulation Results}
\label{sec:more_plots}

\subsection{Empirical Betweenness and Forman Curvature Distribution}\label{sec:betw_forman}

Sparse lattice-type graphs contain nodes of spuriously high betweenness as soon as one false (tele-)connection between regions is present. The network density where the betweenness distribution is maximally distorted depends on the length scale of the random field (Fig. \ref{fig:betw_curv}).

An alternative betweenness measure, which has gained popularity recently, is based on the concept of curvature \citepA{Forman2003,Ollivier2010}. It is and edge-based measure and does not consider shortest paths but a distance between neighborhoods of nodes. For Forman curvature we find a strong negative bias (Fig. \ref{fig:betw_curv}) as for the clustering coefficient in Fig. 7. Both measures are based on triangle counts.

\begin{figure}[H]
    \begin{subfigure}[b]{0.33\textwidth}
    \centering
    \includegraphics[width=\textwidth]{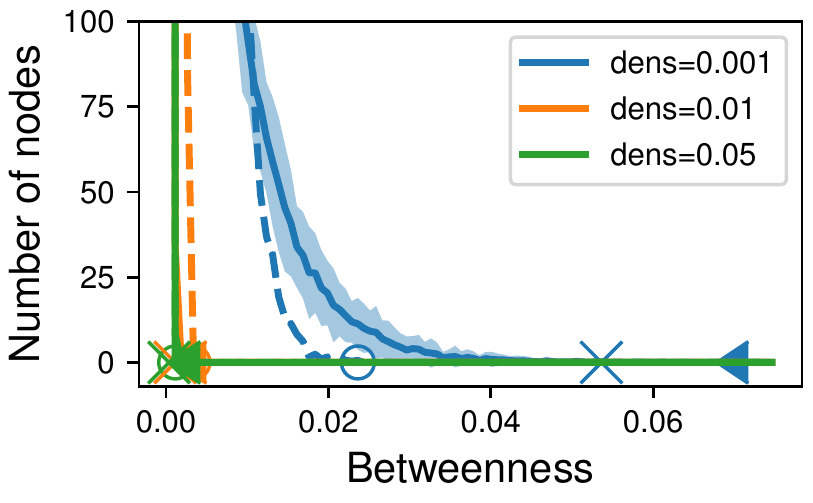}
    \label{fig:betwhist1}
    \end{subfigure}
    \hfill
    \begin{subfigure}[b]{0.33\textwidth}
    \centering
    \includegraphics[width=\textwidth]{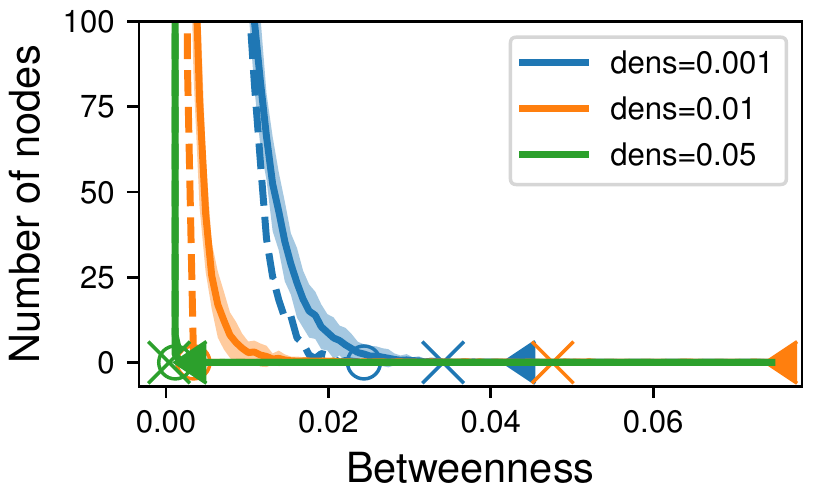}
    \label{fig:betwhist2}
    \end{subfigure}
    \hfill
    \begin{subfigure}[b]{0.33\textwidth}
    \centering
    \includegraphics[width=\textwidth]{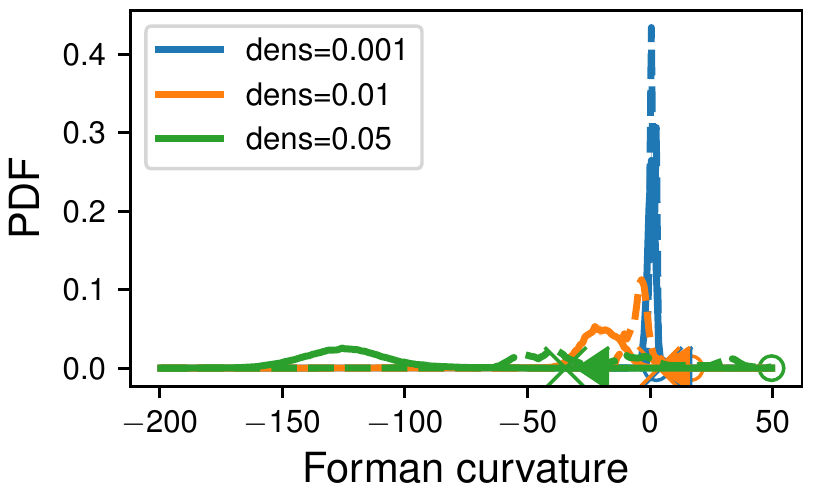}
    \label{fig:forman1}
    \end{subfigure}
    \caption{\textbf{Betweenness distributions in empirical networks.} Betweenness histograms with $2\sigma$-uncertainty bands from 30 independent realizations. 'x' and triangle denote the average and $95\%$-quantile of maximal betweenness values between runs, circle marks the maximal betweenness value in the ground truth graph. Denser networks contain less distortion as many noisy links average out spurious betweenness. \textit{Left}: Density-threshold graphs using Spearman correlation from a MIGRF($\nu=0.5$, $\ell=0.1$). \textit{Middle}: Same as left for MIGRF($\nu=1.5$, $\ell = 0.2$). Note that maximal distortion occurs at higher density compared to smaller length scale, as very sparse networks here do not contain spurious long-range links. The network density, where the reliable lattice estimation transitions into more false links, is the most distorted regime in terms of betweenness. \textit{Right}: Forman curvature distribution on the edges for networks from an MIGRF$(\nu=1.5,\ell=0.1)$. The empirical distribution is heavily downward biased, as false links are mostly 'between-cluster' links, i.e. have negative curvature. The more the lattice-like, clustered ground truth structure is disrupted, the more negative the distribution becomes.}
    \label{fig:betw_curv}
\end{figure}

\subsection{Network Measures For Small Length Scale}

Networks with large densities can not be well estimated as ground truth correlations are close to $0$ already at short distances. This critical distance can be seen in the drop in the link length distribution plot that marks a phase transition from systematic short links to random false links. The estimates of network characteristics become unreliable already at small network densities, as more false links occur in all network density regimes.

\begin{figure}[H]
    \hfill
    \begin{subfigure}[b]{0.99\textwidth}
    \centering
    \includegraphics[width=\textwidth]{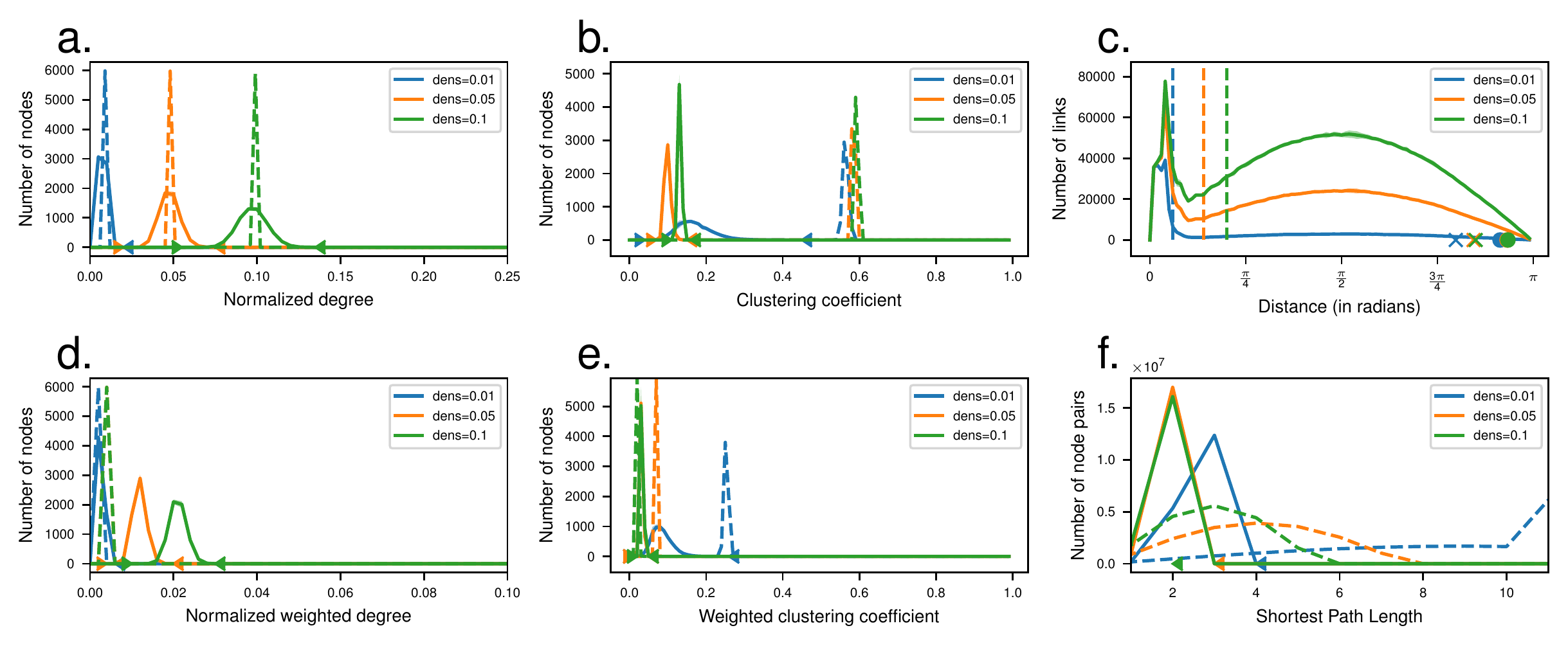}
    \end{subfigure}
    \caption{\textbf{Node/edge measure distributions for small length scales.} Same as Fig. 7 but with $\nu=0.5$ and $\ell=0.1$. More false links in dense graphs, because more edges are empirically indistinguishable from 0. Less spread out empirical distributions, because less pronounced link bundles are formed and false links rather average out across nodes.}
    \label{fig:graphmeasures2}
\end{figure}

\subsection{Networks from Noisy Measurements}\label{sec:noisy_meas}

Isotropic additive white noise can be understood as a modification of the correlation function to include a 'nugget effect' \citepA{Clark2010}. Then the correlation function approaches a values smaller than $1$, when the distance between points approaches $0$. Figure \ref{fig:graphmeasuresnoise} shows the same graph measures as Fig. 7, but for data with isotropic nugget effect. Under additive noise, all population correlations decrease, hence they lie closer to each other and variance in the graph construction increases. As the correlation between similarity estimates is now bounded away from 1, the independent noise on each node will sometimes produce outliers. Under non-negligible estimation variance and for infinitesimally fine grids, this typically leads to non-vanishing fractions formed as well as unformed links between regions, no matter how small the diameter of these regions. More errors occur in the network, irrespective of distance. Even in sparse networks, under large noise, the link length distribution and shortest path length are very distorted. Otherwise the behaviour is very similar. Under fixed grid resolution, continuous but non-smooth random fields and random fields with nugget effect can resemble each other. 

\begin{figure}[H]
    \hfill
    \begin{subfigure}[b]{0.99\textwidth}
    \centering
    \includegraphics[width=\textwidth]{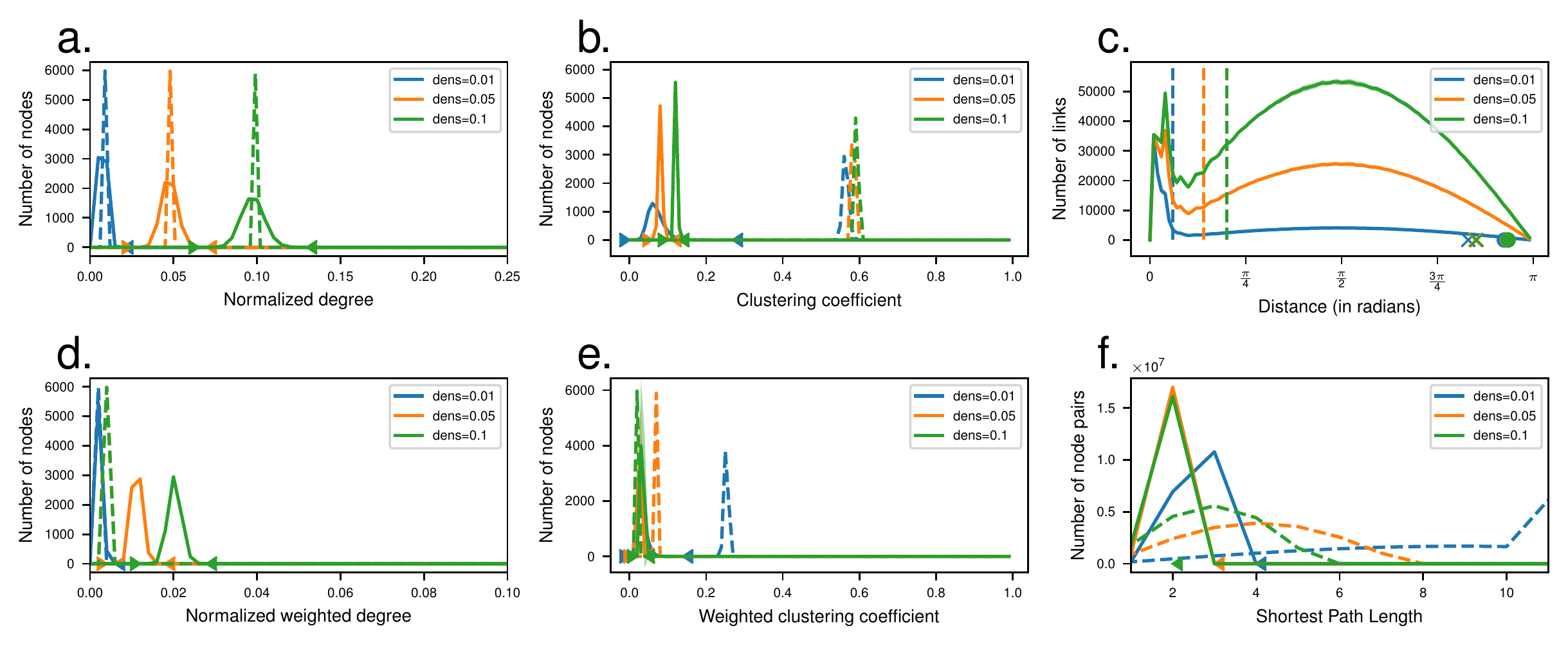}
    \end{subfigure}
    \caption{Same as Fig. 7 but with $0.7 N(0,1)$-additive noise on the data, $\nu=0.5$ and $\ell=0.1$.}
    \label{fig:graphmeasuresnoise}
\end{figure}

\subsection{Network Measures of kNN Graphs}

Figure \ref{fig:graphmeasuresknn} shows the same graph measures as Figure 7, but for kNN graphs with similar (but not the same) densities. As the ground truth graphs of threshold and kNN graphs are almost identical for isotropic data, the network characteristics behave very similarly. Observe a more pronounced peak in the unweighted degree distribution. As the ground truth behaviour of kNN and threshold graphs is the same in the isotropic setting, they do not differ much. In kNN graphs the nodes become more similar so that extreme values lie closer together. In case of the clustering coefficient this does not mean that that they lie closer to the ground truth values.

\begin{figure}[H]
    \hfill
    \begin{subfigure}[b]{0.99\textwidth}
    \centering
    \includegraphics[width=\textwidth]{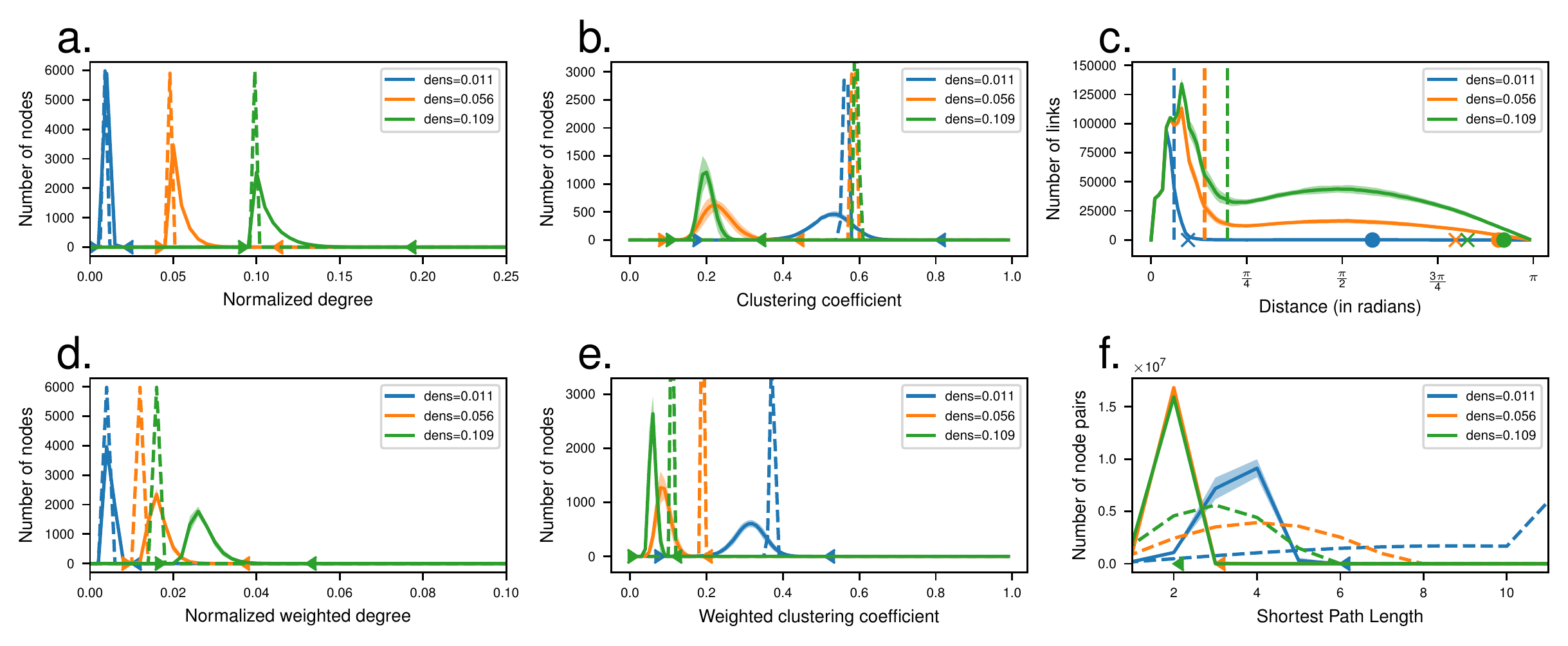}
    \end{subfigure}
    \caption{Same as Fig. 7 but for kNN graphs.}
    \label{fig:graphmeasuresknn}
\end{figure}

\subsection{Network Measures for Temperature Data}

Figure \ref{fig:graphmeasurest2m} shows the same graph measures for monthly t2m. Degree and clustering coefficient distributions are more stretched out, as different locations differ in distribution, but otherwise - especially the link length distribution - the plots look very similar to the empirical networks from our isotropic data. kNN graphs (Fig. \ref{fig:graphmeasurest2mknn}) augment this similarity, as they make the network more isotropic.

\begin{figure}[H]
    \hfill
    \begin{subfigure}[b]{0.99\textwidth}
    \centering
    \includegraphics[width=\textwidth]{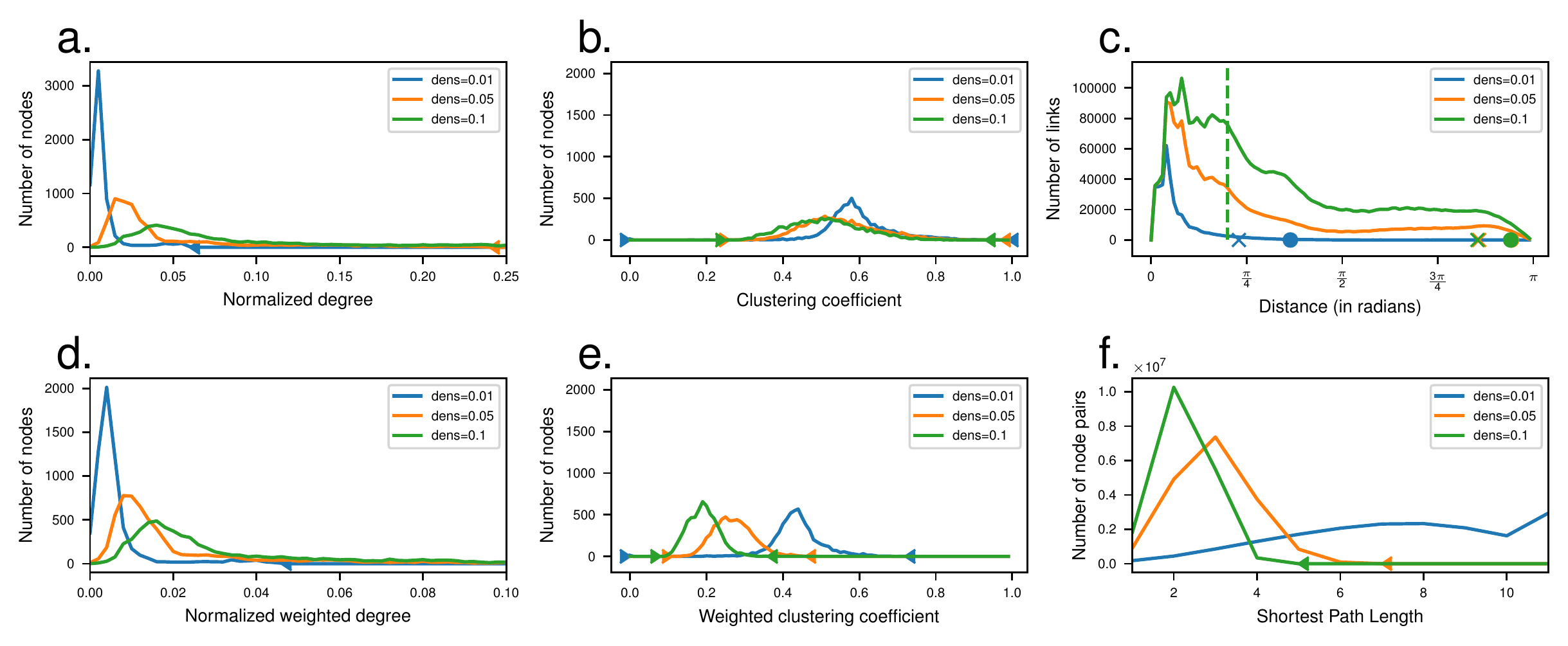}
    \end{subfigure}
    \caption{Same as Fig. 7 but for t2m. Observe even more stretched out degree and clustering coefficient distributions as there are distributional differences between locations. Otherwise there is striking similarity to our isotropic data, especially in the link length distribution.}
    \label{fig:graphmeasurest2m}
\end{figure}

\begin{figure}[H]
    \hfill
    \begin{subfigure}[b]{0.99\textwidth}
    \centering
    \includegraphics[width=\textwidth]{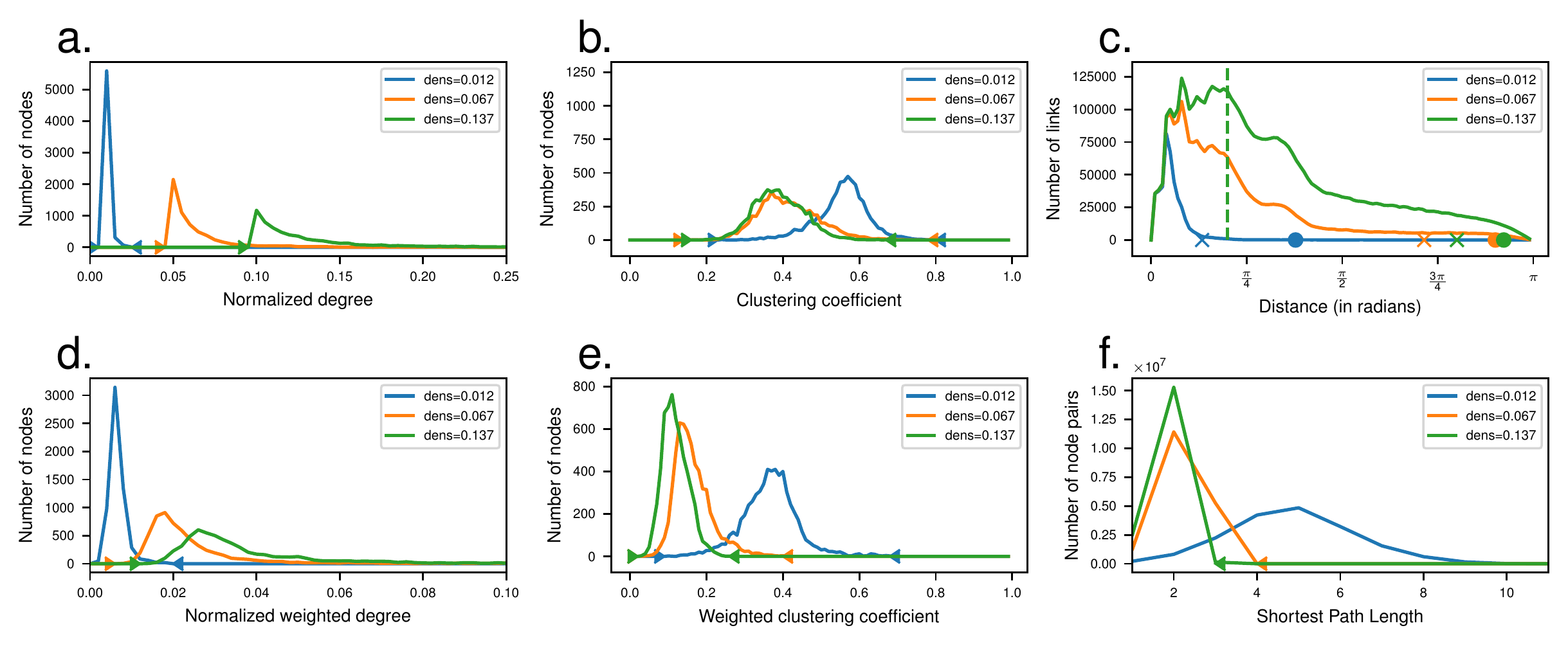}
    \end{subfigure}
    \caption{Same as Fig. 7 but kNN graphs for t2m. kNN graphs increase isotropy of the network. Its characteristics look even more similar to those of our isotropic data.}
    \label{fig:graphmeasurest2mknn}
\end{figure}

\subsection{Empirical Mutual Information Networks Show Less Bundling Behaviour}

\begin{figure}[H]
    \hfill
    \begin{subfigure}[b]{0.49\textwidth}
    \centering
    \includegraphics[width=\textwidth]{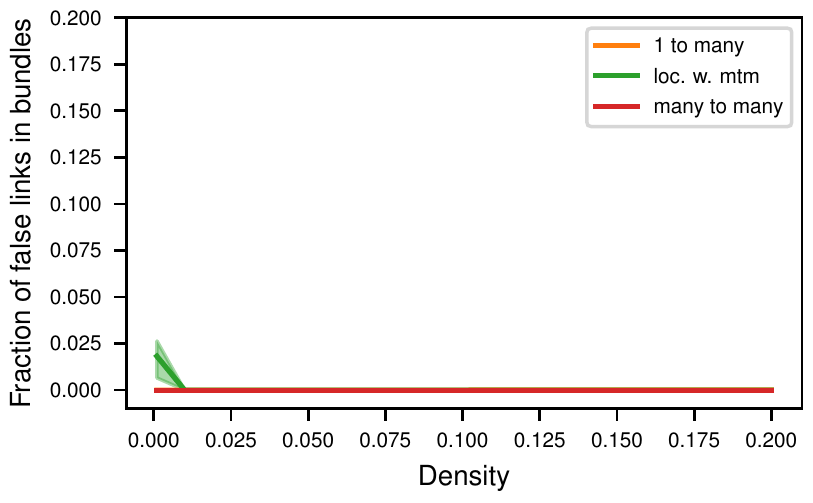}
    \label{biksgtele1}
    \end{subfigure}
    \hfill
    \begin{subfigure}[b]{0.49\textwidth}
    \centering
    \includegraphics[width=\textwidth]{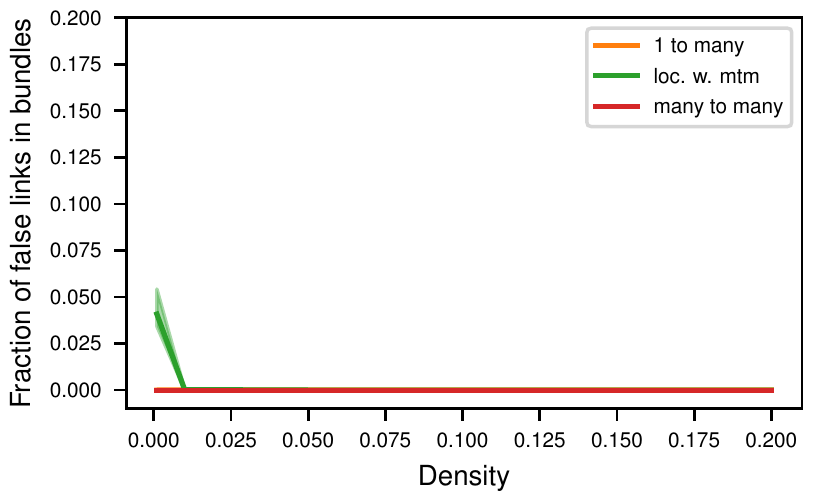}
    \label{biksgtele2}
    \end{subfigure}
    
    \begin{subfigure}[b]{0.49\textwidth}
    \centering
    \includegraphics[width=\textwidth]{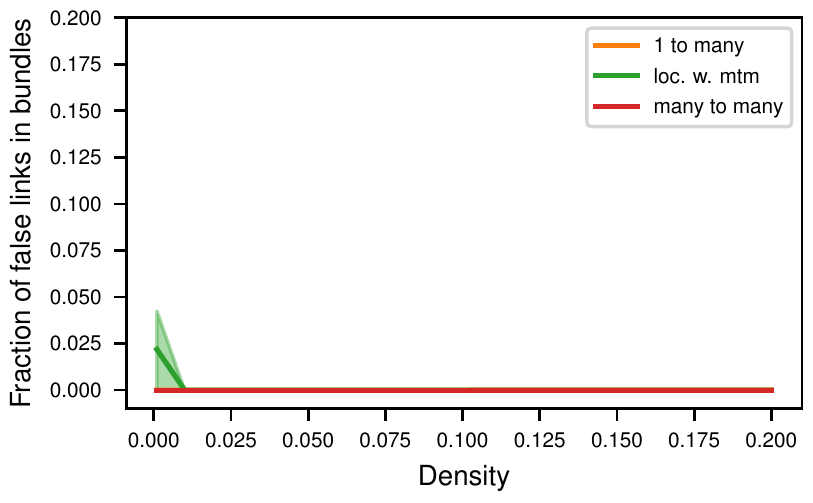}
    \label{biksgtele4}
    \end{subfigure}
    \hfill
    \begin{subfigure}[b]{0.49\textwidth}
    \centering
    \includegraphics[width=\textwidth]{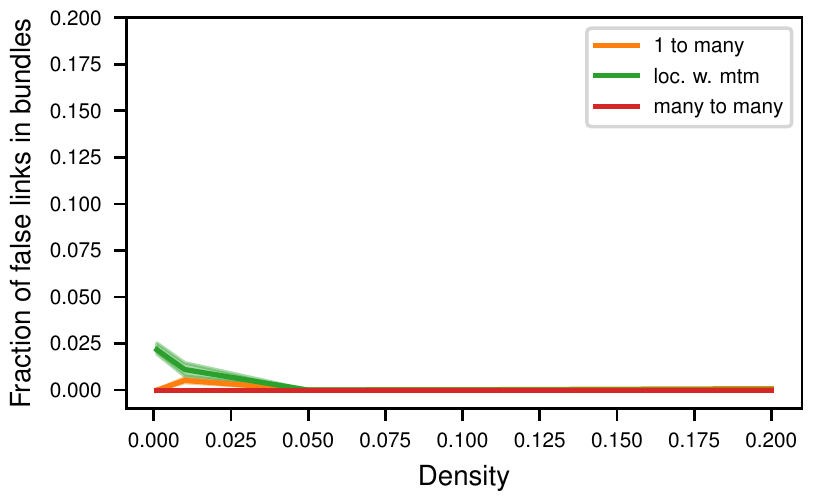}
    \label{biksgtele5}
    \end{subfigure}
    \hfill
    \caption{\textbf{Links are much less bundled under the BIKSG estimator.} The fraction of false links in bundles as in Fig. 8 for unweighted BIKSG estimated networks for different hyperparameters of the MIGRF. The top row shows $\nu=0.5$, the bottom row shows $\nu=1.5$. The left column shows $\ell=0.1$, the right column shows $\ell=0.2$. Although the estimator makes a lot more errors, it does not transmit the localized correlation structure in the underlying data.}
    \label{fig:biksgtele}
\end{figure}

The BIKSG mutual information estimates are not as spatially dependent as the Pearson correlation estimates. Thus although the mutual information networks contain more false links, these are not as bundled (Fig. \ref{fig:biksgtele}) compared to empirical Pearson correlation in Fig. 8. Hence the estimator plays an important role in how much structural as well as spurious bundling behaviour is present in empirical networks.

\subsection{Unstable Links in Climate Networks}

Figure \ref{fig:llreal} shows the link length distributions and length distributions of differing links of bootstrapped Pearson correlation networks from several climatic variables. Observe that short links are formed first and then the distribution approaches a sinusoidal form. Especially for precipitation, the network of density $0.1$ has a distribution close to that of random graphs with uniform link distribution; most links differ between bootstrap samples. Geopotential heights operate on long length scales and have few differing links. This indicates that these networks are robust, especially for large densities (Fig. \ref{fig:realz}).

\begin{figure}[H]
    \hfill
    \begin{subfigure}[b]{0.33\textwidth}
    \centering
    \includegraphics[width=\textwidth]{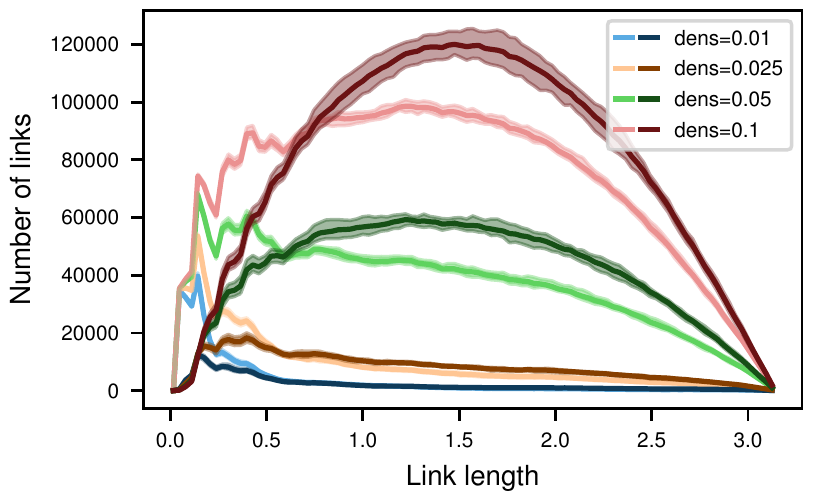}
    \label{graphm77}
    \end{subfigure}
    \hfill
    \begin{subfigure}[b]{0.33\textwidth}
    \centering
    \includegraphics[width=\textwidth]{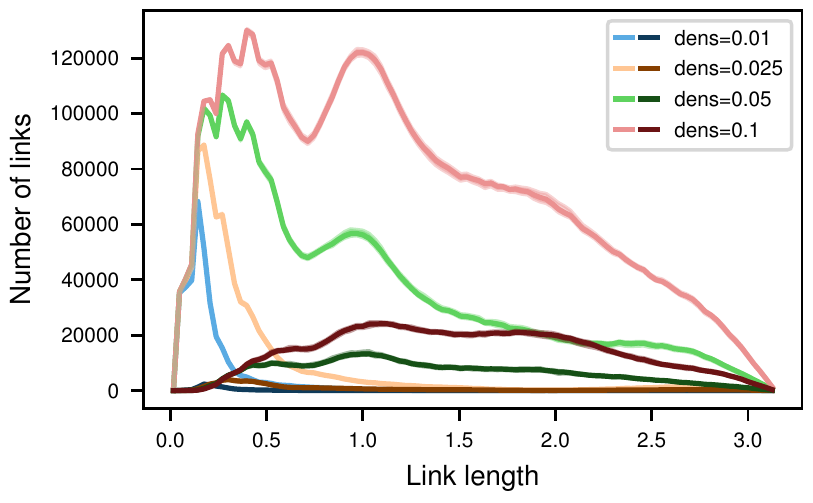}
    \label{graphm87}
    \end{subfigure}
    \hfill
    \begin{subfigure}[b]{0.33\textwidth}
    \centering
    \includegraphics[width=\textwidth]{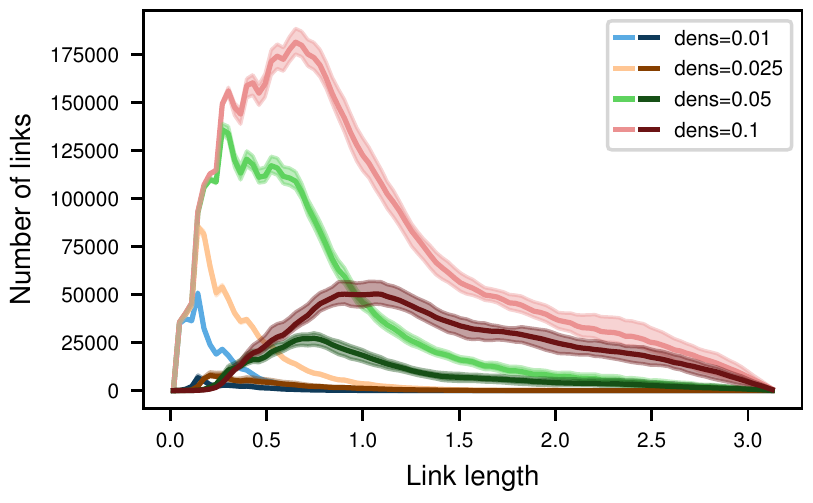}
    \label{graphm97}
    \end{subfigure}
    
    \hfill
    \begin{subfigure}[b]{0.33\textwidth}
    \centering
    \includegraphics[width=\textwidth]{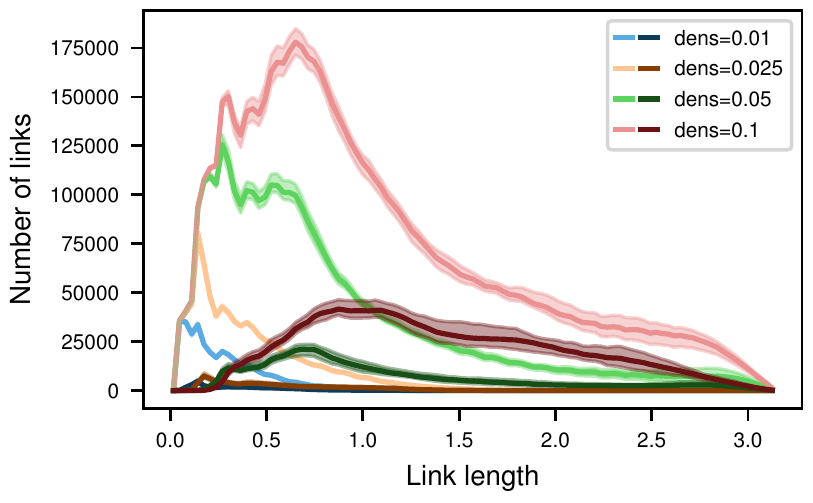}
    \label{graphm107}
    \end{subfigure}
    \hfill
    \begin{subfigure}[b]{0.33\textwidth}
    \centering
    \includegraphics[width=\textwidth]{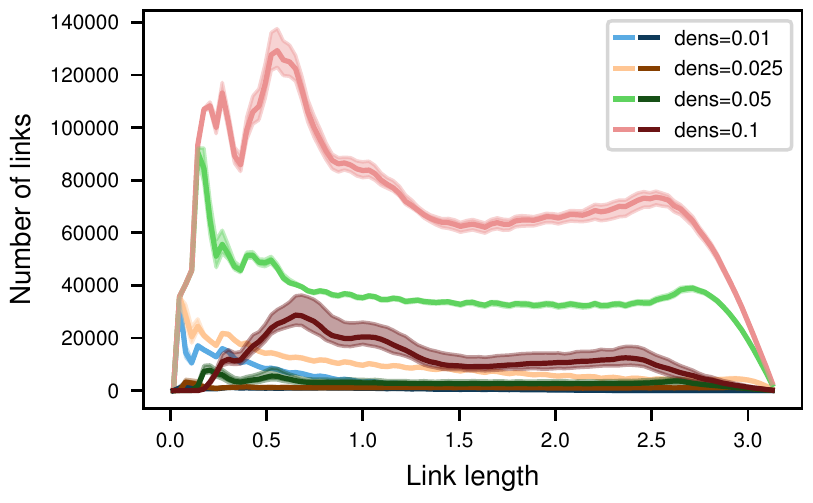}
    \label{graphm127}
    \end{subfigure}
    \hfill
    \begin{subfigure}[b]{0.33\textwidth}
    \centering
    \includegraphics[width=\textwidth]{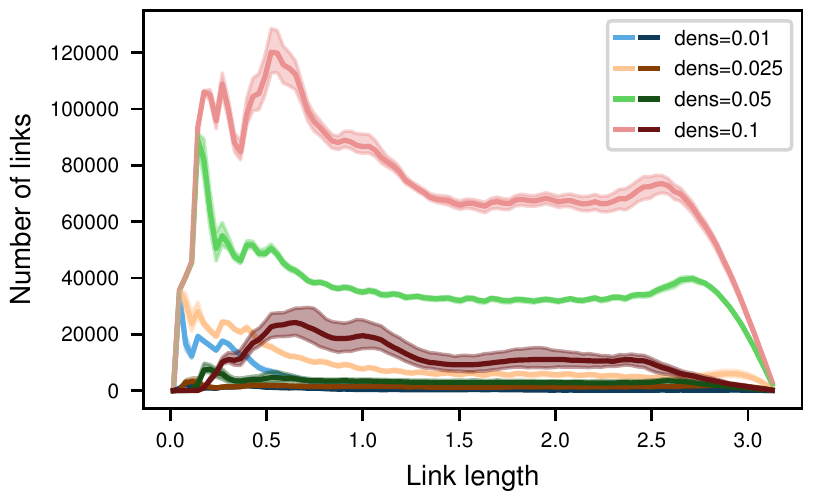}
    \label{graphm117}
    \end{subfigure}
    \caption{Link length distributions (light) and length distribution of differing links of bootstrapped Pearson correlation networks from real data. \textit{Top}: Precipitation, daily t2m, surface pressure. \textit{Bottom}: Geopotential heights at $850$, $500$ and $250$ mb.}
    \label{fig:llreal}
\end{figure}

\begin{figure}[H]
    \begin{subfigure}[b]{0.49\textwidth}
    \centering
    \includegraphics[width=\textwidth]{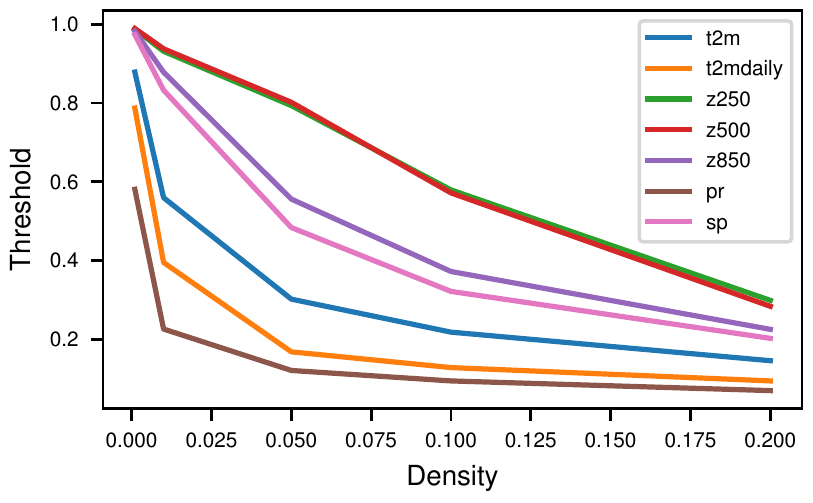}
    \label{sub:densthresh}
    \end{subfigure}
    \hfill
    \begin{subfigure}[b]{0.49\textwidth}
    \centering
    \includegraphics[width=\textwidth]{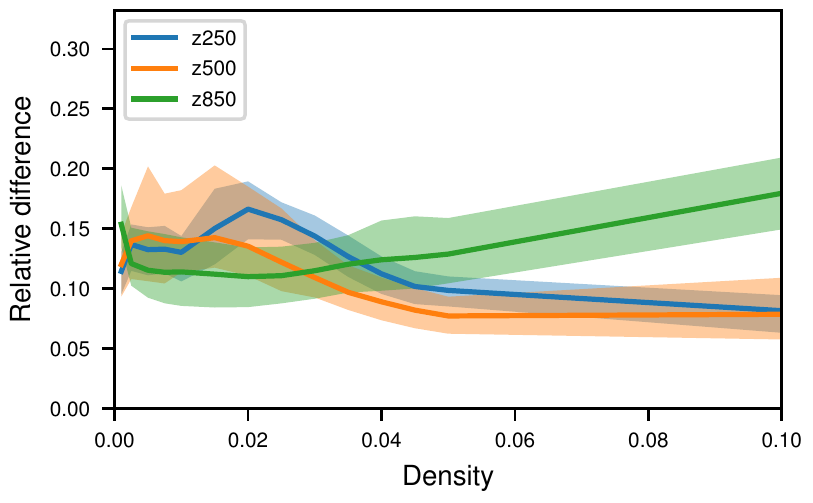}
    \label{fig:densthresh_z}
    \end{subfigure}
    \caption{\textit{Left}: The threshold of the Pearson correlation network as a function of network density. \textit{Right}: Same as Fig. 4 but for geopotential heights. How the relative difference between empirical networks is computed is described in Section 3a2.}
    \label{fig:realz}
\end{figure}

\subsection{Distribution of mean and variance estimates from IAAFT surrogates}\label{sec:iaaft_distrib}

Here we analyse why all edges in sparse IAAFT-based z-score networks are formed by nodes with low autocorrelation. For this purpose we compare the estimates on the edges between nodes of low autocorrelation with the estimates on the edges between nodes of high autocorrelation. Figure \ref{fig:delta_iaaft} shows the difference between nodes of high and nodes of low autocorrelation in the distributions of mean estimates $\hat{\mu}_{ij}^0$ (left) and variance estimates $\hat{\sigma}_{ij}^0$ (right) based on IAAFT resampling. Positive values indicate more weight in the distribution of highly autocorrelated nodes and vice-versa. Naturally, the variance of the mean estimates grows with autocorrelation (left). The variance estimates for highly autocorrelated nodes are larger (right), reflecting the true increased estimation variance. Since we divide by variance estimates, which are small in absolute value, the highest z-scores are attained for nodes with low autocorrelation.

\begin{figure}[H]
    \begin{subfigure}[b]{0.49\textwidth}
    \centering
    \includegraphics[width=\textwidth]{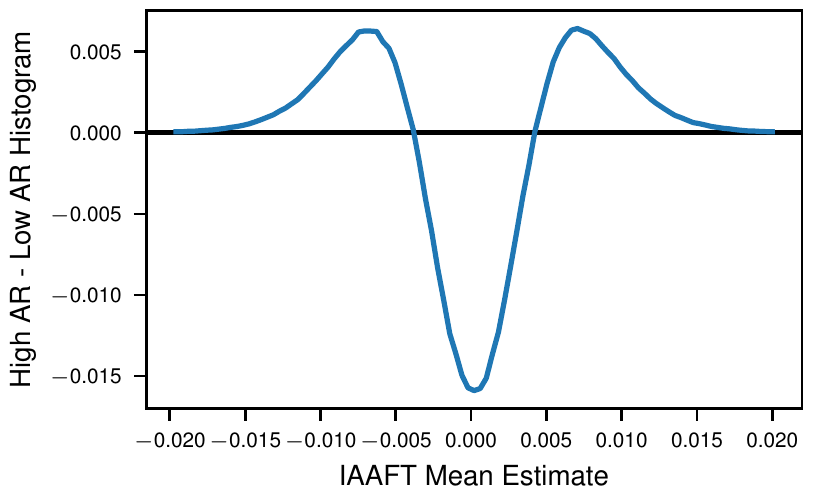}
    \label{sub:delta_mean}
    \end{subfigure}
    \hfill
    \begin{subfigure}[b]{0.49\textwidth}
    \centering
    \includegraphics[width=\textwidth]{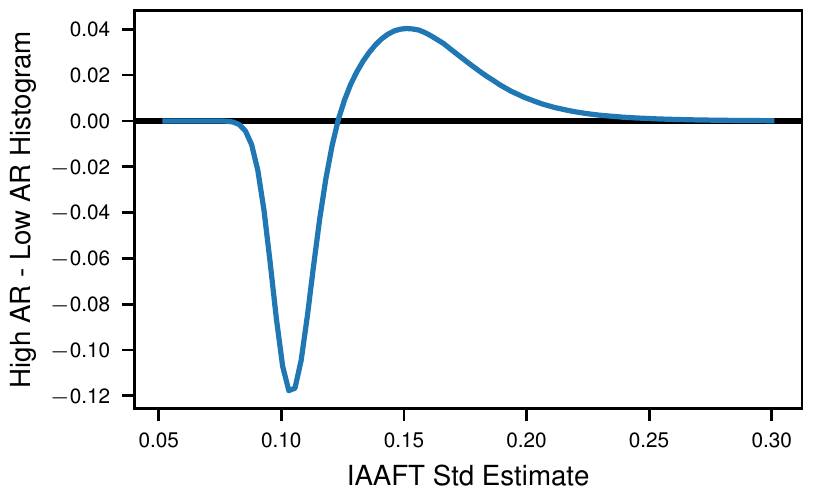}
    \label{fig:delta_std}
    \end{subfigure}
    \caption{Difference between nodes of high lag-1 autocorrelation $0.7$ and nodes of low lag-1 autocorrelation $0.2$ of the normalized histograms of mean estimates (\textit{left}) and standard deviation estimates (\textit{right}) based on IAAFT resampling. Positive values indicate more weight in the distribution associated with high autocorrelation and negative values indicate more weight in the distribution associated with low autocorrelation. Mean estimates have larger variance for high autocorrelation. Variance estimates are larger for high autocorrelation, as they should be.
    }
    \label{fig:delta_iaaft}
\end{figure}


\bibliographystyleA{arxivstyle/plainnat_clean}
\bibliographyA{arxiv_version.bib}
\end{appendices}


\end{document}